\algrenewcommand\algorithmicrequire{\textbf{Input:}}
\def\DHLhksqrt#1#2{%
\setbox0=\hbox{$#1\sqrt{#2\,}$}\dimen0=\ht0
\advance\dimen0-0.2\ht0
\setbox2=\hbox{\vrule height\ht0 depth -\dimen0}%
{\box0\lower0.4pt\box2}}
\newcommand{\isi}[1]{}
\newcommand{\isiincl}[2]{}
\newcommand{\googleincl}[2]{}
\newcommand{\googleinclabs}[3]{}
\tikzstyle{plant} = [draw, fill=red!5, rectangle, 
\tikzstyle{block} = [draw, fill=blue!5, rectangle, 
\tikzstyle{sum} = [draw, fill=yellow!10, circle, node distance=1cm]
\tikzstyle{coord} = [coordinate]
\tikzstyle{gain} = [draw, fill=red!5, regular polygon, regular polygon sides=3, shape border rotate=-90]
\tikzstyle{pinstyle} = [pin edge={to-,thick,black}]
\tikzstyle{BitPipe} = [thick, decoration={markings,mark=at position
\DeclareMathAlphabet{\pazocal}{OMS}{zplm}{m}{n}
\newtheorem{thm}{Theorem}
\newtheorem{lem}{Lemma}
\newtheorem{corol}{Corollary}
\newtheorem{prop}{Proposition}
\theoremstyle{definition}
\newtheorem{defn}{Definition}
\newtheorem*{defn*}{Definition}
\newtheorem*{scheme*}{Scheme}
\theoremstyle{remark}
\newtheorem{remark}{Remark}
\newtheorem{assump}{Assumption}
\providecommand{\thmref}[1]{Th.~\ref{#1}}
\providecommand{\secref}[1]{Sec.~\ref{#1}}
\providecommand{\lemref}[1]{Lem.~\ref{#1}}
\providecommand{\propref}[1]{Prop.~\ref{#1}}
\providecommand{\figref}[1]{Fig.~\ref{#1}}
\providecommand{\corolref}[1]{Corol.~\ref{#1}}
\providecommand{\appref}[1]{App.~\ref{#1}}
\providecommand{\assref}[1]{Assump.~\ref{#1}}
\providecommand{\tblref}[1]{Tbl.~\ref{#1}}
\newcommand{\ie}{i.e.}
\newcommand{\reals}{\mathbb{R}}
\newcommand{\mD}{\mathcal{D}}
\newcommand{\old}[1]{}
\newcommand{\rem}[1]{}
\newcommand{\eps}{{\epsilon}}
\newcommand{\ty}{\tilde y}
\providecommand{\brI}{\boldsymbol{\mathrm{I}}}
\providecommand{\brF}{\boldsymbol{\mathrm{F}}}
\providecommand{\brH}{\boldsymbol{\mathrm{H}}}
\providecommand{\brR}{\boldsymbol{\mathrm{R}}}
\providecommand{\brA}{\boldsymbol{\mathrm{A}}}
\providecommand{\brC}{\boldsymbol{\mathrm{C}}}
\providecommand{\brD}{\boldsymbol{\mathrm{D}}}
\newcommand{\sx}{{\mathsf{x}}}
\newcommand{\sy}{{\mathsf{y}}}
\newcommand{\sz}{{\mathsf{z}}}
\newcommand{\abs}[1]{\left| #1 \right|}
\newcommand{\Norm}[1]{\left\| #1 \right\|}
\providecommand{\comment}[1]{}
\providecommand{\norm}[1]{\Norm{#1}}
\newcommand{\beqn}[1]{\begin{eqnarray}\label{#1}}
\newcommand{\eeqn}{\end{eqnarray}}
\newcommand{\beq}[1]{\begin{equation}\label{#1}}
\newcommand{\eeq}{\end{equation}}
\newcommand{\argmin}{\mathrm{argmin}}
\newcommand{\vast}{\bBigg@{4}}
\newcommand{\Vast}{\bBigg@{5}}
\newcommand{\defeq}{\triangleq}
\providecommand{\bbE}{\mathbb{E}}
\providecommand{\E}[1]{\bbE \left[ #1 \right]}
\providecommand{\Esub}[2]{\bbE_{#1} \left[ #2 \right]}
\providecommand{\CP}[2]{P \left( #1 \middle| #2 \right)}
\newcommand{\pW}{\pazocal{W}}
\newcommand{\pY}{\pazocal{Y}}
\renewcommand{\epsilon}{\varepsilon}
\renewcommand{\eps}{\varepsilon}
\title{The Approximate Fisher Influence Function: \\ Faster Estimation of Data Influence in Statistical Models } 
\author{Omri Lev and Ashia C. Wilson \\
MIT
}
\date{} % This will remove the date
\begin{document}

\maketitle 

\begin{abstract}
    Quantifying the influence of infinitesimal changes in training data on model performance is crucial for understanding and improving machine learning models. In this work, we reformulate this problem as a weighted empirical risk minimization and enhance existing influence function-based methods by using information geometry to derive a new algorithm to estimate influence. Our formulation proves versatile across various applications, and we further demonstrate in simulations how it remains informative even in non-convex cases. Furthermore, we show that our method offers significant computational advantages over current Newton step-based methods.
\end{abstract}

\begingroup
\renewcommand\thefootnote{} %
\footnotetext{Corresponding Author: $\texttt{omrilev@mit.edu}$} 
\endgroup

%%%%%%%%%%%%%%%%%%%%%%%%%%%%%%%%%%%%%%%%%%%%%%%%%%%%%%%%%%%%%%%%%%%%%%%%%%%%%%%%
\section{Introduction}
Understanding how a model's behavior changes with slight modifications to its training data is crucial for numerous machine-learning applications. These include detecting harmful patterns and constructing adversarial examples \citep{KohLiang_Influence_DL, koh2019accuracy, basu2020influence}, conducting efficient cross-validation (CV) for model assessment and model selection \citep{Beirami_ACV, Wilson_OptimizerComparison}, enabling data unlearning without full retraining \citep{Sekhari_Unlearning_Neurips, Wilson_OptimizerComparison}, and evaluating robustness to data-dropping \citep{Broderick_Robustness}, among others. A common foundation for these tasks is the use of second-order approximations to capture the model’s sensitivity to training data perturbations.

The most widely used technique in this space involves the Newton step, leveraging a gradient preconditioned by the inverse Hessian matrix. However, this approach can be computationally prohibitive and numerically unstable, particularly in high-dimensional and non-convex scenarios \citep{Limitations_FisherNGD, Bae_PBRF_Influence}. As a computationally lighter alternative, several studies have explored influence approximations based on variants of the Fisher Information Matrix (FIM) \citep{singh2020woodfisher, sattigeri2022fair, Bae_PBRF_Influence, grosse2023studying, choe2024_data_GPT}. Yet, despite growing empirical adoption, there remains a lack of theoretical understanding to guide the selection and use of FIM variants across diverse applications. Many of these works rely exclusively on the empirical FIM, which is known to underperform in several settings. Moreover, prior theoretical analyses of influence functions have largely assumed smooth, differentiable regularization—most commonly classical $L_2$—which limits their applicability in practical settings.
Indeed, modern machine learning models frequently incorporate non-differentiable regularizers (e.g., $\ell_1$ or group sparsity penalties), and recent work has shown that even certain neural networks can be framed as convex optimization problems with non-smooth regularization terms \citep{Pilanci_ICML2020, Zeger2024_HiddenConvexity}. This motivates the need for influence methods that are not only theoretically grounded but also scalable and valid under general, possibly non-smooth, regularizers—a gap that our work addresses.

In this paper, we propose the Approximate Fisher Influence Function (AFIF), a practical and theoretically justified framework for estimating influence in statistical models. AFIF leverages an approximation of the Fisher Information Matrix derived from exponential family structure, offering a computationally efficient alternative to Hessian-based methods. In contrast to prior influence techniques, which struggle with general regularization and lack formal guarantees for FIM-based approximations, our approach is provably accurate in convex settings and supports a broad range of regularization types—including non-differentiable ones.

Our main contributions are summarized as follows:
\begin{itemize}
    \item \textbf{General Influence Estimation with Theoretical Guarantees:}
We develop the Approximate Fisher Influence Function (AFIF), a theoretically grounded method for influence estimation that supports general regularization, including non-differentiable terms. Our analysis establishes the first theoretical guarantees for using FIM-based approximations in key influence tasks such as cross-validation and fairness evaluation—extending prior work limited to smooth, $L_2$-regularized settings.  
    \item \textbf{Scalable FIM Approximation with Strong Empirical Performance:} We introduce a novel FIM variant derived from the exponential family structure that is both computationally efficient and theoretically justified. This formulation provides practical guidance on FIM selection. Empirically, AFIF matches the accuracy of Hessian-based methods while offering substantial improvements in speed and stability across diverse models and tasks. 
\end{itemize}
\noindent \textbf{Notation:} 
Random variables are represented by sans-serif fonts ($\sx, \sy, \sz$), and their realizations by regular italics ($x, y, z$). The PDF of $\sz$ is $P_{\sz}(\cdot)$. %The expected value of $g(\sz)$ over a distribution $Q_{\sz}$ is denoted by $\mathbb{E}_{Q_{\sz}}[g(\sz)]$. 
Sets of values are indicated by capital calligraphic letters, such as $\mD \triangleq \left\{z_1, z_2, \ldots, z_n\right\}$.  Matrices are in bold capitals, with $\brI_d$ as the $d \times d$ identity matrix. 
We use $f(x) = o(g(x))$ and  $f(x) = O(g(x))$ when $ f(x)/g(x) = 0$ and $f(x)/g(x) = c \neq 0$ in the limit $x \rightarrow \infty$. 
We denote the Lipschitz constant of a function $f$ by $\text{Lip}(f) \triangleq{\sup} \{ \norm{f(x) - f(y)}/\norm{x - y}: x\ne y \in \text{supp}(f)\}$. The inner product between two vectors $\theta_1$ and $\theta_2$ is denoted by $\theta^{\top}_1 \theta_2 \triangleq \langle \theta_1, \theta_2 \rangle $.
\section{Problem Statement}
\label{s:ProblemDef}
% \vspace{-4.5pt}
Given a dataset $\mD = (z_1, z_2, \dots z_n)$ where each $z_i$ is comprised of a covariate $x_i$ and a label or response $y_i$, it is commonplace to use empirical risk minimization (ERM) to obtain a predictive model to deploy. 
In this work, we consider the problem of \emph{weighted} ERM (wERM), \ie \ given 
a loss function $\ell(\cdot)$, a regularizer $\pi(\cdot)$, a regularization parameter $\lambda \in [0, \infty)$ and a set of weights $w^n \triangleq (w_1,\ldots, w_n)$, our goal is to solve for $\hat{\theta}(w^{n})$ that is defined as 
\begin{align}
    \label{eq:WERM_Def}
    \hat{\theta}(w^{n}) &\triangleq \underset{\theta}{\argmin} \ L(\mD, \theta, \lambda, w^n),\\
    L(\mD, \theta, \lambda, w^n) &\triangleq \frac{1}{n}\sum_{i=1}^{n}w_i\ell(z_i, \theta) + \lambda \pi(\theta).
\end{align} 
This formulation is equivalent to classical ERM when $w^{n} = (1, \ldots, 1) \triangleq \boldsymbol{1}$, whose solution is denoted by $\hat{\theta}(\boldsymbol{1})$\ \footnote{Throughout, we simplify our notation by omitting the explicit dependence on $\lambda$ when possible. For example, we write $ L(\mD, \theta, w^n)$ instead of $L(\mD, \theta, \lambda, w^n)$ whenever $\lambda = 0$.}. 

In many scenarios $\ell(z,\theta) = -\log(\CP{y}{f(x;\theta)})$; that is, the loss can be interpreted as a negative log-likelihood under a probabilistic model induced by a parameterized function $f(x;\theta)$, often taken to be a neural network. Moreover, we study the case where $\CP{y}{f(x;\theta)}$ belongs to an \emph{exponential-family} \citep{Wainright_ExponentialFamily} whose natural parameters are $f(x;\theta)$ and whose natural statistics are denoted by $t(y)$, namely, $\log(\CP{y}{f(x;\theta)}) = f(x;\theta)^{\top}t(y) - \log(\sum_{\tilde{y}\in \pazocal{Y}}\exp(f(x;\theta)^{\top}t(\tilde{y}))) + \beta(y)$ for some function $\beta(y)$. This is satisfied by many common loss functions in machine learning (see popular examples for such losses in \appref{app:exp_family_loss}).
\begin{remark}
    Following \cite{banerjee2005clustering}, this class of losses corresponds to loss functions that can be captured by a Bregman divergence up to an additional term that is independent of $f(x;\theta)$. See further discussion in \appref{app:exp_family_loss}.  
\end{remark}
% We study the \emph{inference objective}, $T(\cdot, \cdot): \reals^{d}\times \reals^{n} \to \reals^{k}$, that maps a parameter vector $\theta$ and weights $w^{n}$ to a desired inference target.  
% Our focus will be tasks for which the inference objective can be formulated as  
% \begin{align}\label{eq:inference_function}
% T(\hat{\theta}(w^n)) \triangleq T(\hat{\theta}(w^n), w^n),
% \end{align} 
% where $ w^n$ belongs to a family of weights $\pW$. More specifically, we primarily focus on the case where $w^{n}$ is a leave-one-out weight vector, defined by $\mathcal{D}^{-i}\triangleq \{w^n: w_j = \mathbbm{1}\left\{j \ne i\right\}\}$. 
% Popular tasks for this formulation include: 
\subsection{Inference Objective} We study the \emph{inference objective}, $T(\cdot, \cdot): \reals^{d} \times \reals^{n} \to \reals^{k}$, which maps a parameter vector $\theta$ and a weight vector $w^n$ to a desired inference target,   
% Our focus is on tasks where the inference objective can be expressed as  
% \begin{align}\label{eq:inference_function}
 % T(\hat{\theta}(w^n), w^n) \triangleq T(\hat{\theta}(w^n)),
% \end{align} 
where $w^n$ belongs to a family of weight vectors $\pW$. In particular, we focus on cases where $w^n$ corresponds to a leave-one-out weight vector, defined as  
\[
\mathcal{D}^{-i} \triangleq \{w^n : w_j = \mathbbm{1}\{j \ne i\} \}.
\]  
This formulation captures a range of tasks, including:

\paragraph{Cross Validation} To assess and select models, leave-one-out cross-validation (LOOCV) estimates model performance by iterative training on all but one data point and evaluating on the omitted instance. Specifically, for each $i$, it computes the evaluation metric:
\[
    T(\hat{\theta}(w^n), w^n) \triangleq \frac{1}{n}\ell(z_i, \hat{\theta}(w^n)) \quad \text{for} \quad w^n \in \mathcal{D}^{-i},
\]
where $\mathcal{D}^{-i}$ denotes the leave-one-out weight vectors, $\hat{\theta}(w^n)$ is the model trained with $w^n$, and the corresponding evaluation is taken on the omitted sample. Leave-$k$-out cross-validation follows analogously by removing a subset of $k$ observations in each iteration \citep{geisser1975predictive, Stone_CV}.

%To assess and select models, leave-one-out CV requires evaluating 
     % $T(\hat{\theta}(\boldsymbol{1}^{n\backslash i}),\boldsymbol{1}^{n\backslash i})  =  \frac{1}{n}\ell(z_i, \hat{\theta}(\boldsymbol{1}^{n\backslash i}))$ for each $i$ ~\citep{geisser1975predictive, Stone_CV} . Leave-$k$-out CV can be formulated analogously.

\paragraph{Machine Unlearning} 
To remove the influence of a data point $z_i$, the ``unlearned model'' is obtained by computing  
\[
    T(\hat{\theta}(w^n), w^n) = \hat{\theta}(w^n) \quad \text{for} \quad w^n \in \mathcal{D}^{-i}.
\]  
This ensures that the model parameters are updated as if $z_i$ were never included in training \citep{Unlearning_CaoYang, UnlearningChallenges_2023}. Similarly, unlearning $k$ data points follows the same formulation using leave-$k$-out weight vectors.
% \quad  To unlearn datapoint $z_i$ the ``unlearned model'' $T(\hat{\theta}(\boldsymbol{1}^{n\backslash i})) = \hat{\theta}(\boldsymbol{1}^{n\backslash i})$ needs to be computed~\citep{Unlearning_CaoYang, UnlearningChallenges_2023}. Unlearning $k$ data points can be formulated analogously.
     
 \paragraph{Data attribution} 
Understanding the contribution of a training sample $z_i \in \mathcal{D}$ to a model’s prediction on a test point $z_{\text{test}}$ \citep{KohLiang_Influence_DL} is formulated as  comparing $\ell(z_{\text{test}}, \hat{\theta}(\boldsymbol{1}))$ with
\begin{align}
    T(\hat{\theta}(w^n), w^n) &= \ell(z_{\text{test}}, \hat{\theta}(w^n))\triangleq T(\hat{\theta}(w^n)) \quad \text{for} \quad w^n \in \mathcal{D}^{-i}.
\end{align}
  Attribution to a set of $k$ points follows analogously.

%\citep{KohLiang_Influence_DL}. Evaluating how much the prediction on test point  $z_{\text{test}}$ can be attributed to training point $z_i \in \mD$ requires comparing  
%         $T(\hat{\theta}(\boldsymbol{1}^{n\backslash i})) =  \ell(z_{\text{test}}, \hat{\theta}(\boldsymbol{1}^{n\backslash i}))$, 
%         with $\ell(z_{\text{test}}, \hat{\theta}(\boldsymbol{1})).$ Attributing to a set of $k$ points is defined similarly. 
     
\paragraph{Fairness Evaluation} 
\citet{ghosh2023influence} propose to evaluate the impact of $z_i$ on model fairness by computing $T(\hat{\theta}(w^n))$ for $w^n \in \mathcal{D}^{-i}$, where $T$ is a chosen fairness metric. For example, if we have a dataset $\{x_i\}_{i=1}^{n}$ with binary sensitive attributes $\{s_i\}_{i=1}^{n}$, robustness with respect to \emph{demographic parity}—which assesses whether the model’s predictions are independent of a sensitive attribute $\mathsf{s}$—is given by:
\begin{align}
    \label{eq:DP_Def}
    T(\hat{\theta}(w^n), w^n) &\triangleq T(\hat{\theta}(w^n)) \\
    T(\hat{\theta}(w^n)) &= |\mathbb{E}_{\hat{P}(\sx|\mathsf{s} = 0)}[f(\sx; \hat{\theta}(w^n))] - \mathbb{E}_{\hat{P}(\sx|\mathsf{s} = 1)}[f(\sx; \hat{\theta}(w^n))]|, \quad \text{for } w^n \in \mathcal{D}^{-i}.
\end{align}
Here, $\hat{P}(\sx = x|\mathsf{s} = s)$ is the empirical distribution for $s \in \{0,1\}$ . For cases where the sensitive attributes $\left\{s_i\right\}$ are continuous-valued, an alternative fairness metric can be defined via the $\chi^2$ divergence \cite{Mary_Chi2_Fairness, AbhinShah_Fairness}. A popular choice for such a metric is defined via 
\begin{align}
    \label{eq:chi2_def}
    T(\hat{\theta}(w^n), w^n) &\triangleq T(\hat{\theta}(w^n)) \\
    T(\hat{\theta}(w^n)) &= \chi^2\left(\widehat{P}_{f(\sx;\hat{\theta}(w^n)), \mathsf{s}}\|\widehat{P}_{f(\sx;\hat{\theta}(w^n))}\widehat{P}_{\mathsf{s}}\right), \quad \text{for } w^n \in \mathcal{D}^{-i}.
\end{align}
The impact of removing a subset of $k$ samples is assessed analogously by considering $w^n \in \mathcal{D}^{-K}$.
%\citep{ghosh2023influence}. Evaluating the impact of $z_i$ on model fairness requires computing $T(\hat{\theta}(\boldsymbol{1}^{n\backslash i}))$ where $T$ is a chosen fairness metric. For example, suppose we are given a dataset of $\{x_i\}^{k}_{i=1}$ and binary sensitive attributes $\{s_i\}^{k}_{i=1}$. Evaluating the robustness with respect the \emph{demographic parity} metric, which assesses whether the model's predictions are independent of a sensitive attribute $s$ (\eg, gender or race), can be computed as  
%     \begin{align}
%         \label{eq:DP_Def}
%         T(\hat{\theta}(\boldsymbol{1}^{n\backslash i})) &= |\mathbb{E}_{\hat{P}(\sx|\mathsf{s} = 0)}[f(\sx; \hat{\theta}(\boldsymbol{1}^{n\backslash i}))] - \mathbb{E}_{\hat{P}(\sx|\mathsf{s} = 1)}[f(\sx; \hat{\theta}(\boldsymbol{1}^{n\backslash i}))]|
%     \end{align}
%    and where $\hat{P}(\sx = x|\mathsf{s} = s)$ is the empirical distribution  
%    for $s\in\{0, 1\}$. The impact of $k$ samples on model fairness is evaluated similarly. 

\subsubsection{Inference Approximation}
Since $\hat{\theta}(w^n)$ for each weight vector is often computationally expensive, many methods approximate the inference objective using quantities derived from $\hat{\theta}(\boldsymbol{1})$. That is, instead of solving for $\hat{\theta}(w^n)$ directly, we use an approximation that combines the known vector $\hat{\theta}(\boldsymbol{1})$ with a function of the weights $w^n$:
\[
\hat{\theta}(w^n) \approx g(\hat{\theta}(\boldsymbol{1}), w^n) \triangleq \tilde{\theta}(w^n).
\]
 Typically, $g(\cdot, \cdot)$ is derived from a Taylor series expansion around $\hat{\theta}(\boldsymbol{1})$, capturing the $p$th-order sensitivity of the model parameters to small perturbations in $w^n$. Depending on $p$, this allows for efficient approximation without requiring full retraining~\cite{giordano2019swiss, giordano2019higher, Wilson_OptimizerComparison}. Two widely used approaches to approximate the inference objective $T(\hat{\theta}(w^n), w^n)$ are:

% For all these tasks, it is common to approximate the inference objective using quantities computed from $\hat{\theta}(\boldsymbol{1})$, which eliminates the need for further optimization to obtain $\hat{\theta}(w^{n})$. In particular, given an approximation $ \hat{\theta}(w^{n}) \approx  g(\hat{\theta}(\boldsymbol{1}), w^n) \triangleq \tilde{\theta}(w^{n})$ two popular approaches to approximate~\eqref{eq:inference_function} are:
\begin{enumerate}
    \item \textbf{Plug-in Estimator}:
    This approach directly substitutes the approximation $\tilde{\theta}(w^n)$ into the inference objective:
    \begin{align}
        T(\hat{\theta}(w^n), w^n)  \approx T(\tilde{\theta}(w^{n}), w^n) = T(g(\hat{\theta}(\boldsymbol{1}), w^n), w^n). 
    \end{align}
    
    \item  \textbf{Linearized Influence Approximation}:
   Instead of replacing $\hat{\theta}(w^n)$ directly, this method uses a first-order expansion of $T(\hat{\theta}(w^n), w^n)$ around $\hat{\theta}(\boldsymbol{1})$. The approximation function $g(\cdot, \cdot)$ is then incorporated into this expansion to estimate $\hat{\theta}(w^n)$:
   \begin{align}
        \label{eq:TaylorSeries_Objective}
        T(\hat{\theta}(w^n), w^n) %&\approx T(\hat{\theta}(\boldsymbol{1})) + \langle \nabla T(\hat{\theta}(\boldsymbol{1})),  \hat{\theta}(w^n) - \hat{\theta}(\boldsymbol{1})\rangle 
        \approx T(\hat{\theta}(\boldsymbol{1}), w^n) + \langle \nabla_{\theta} T(\hat{\theta}(\boldsymbol{1}), w^n),  \tilde{\theta}(w^{n}) - \hat{\theta}(\boldsymbol{1})\rangle. 
    \end{align}
    
\end{enumerate}
Both methods are shown in several works to reduce computational overhead while maintaining strong empirical performance \cite{KohLiang_Influence_DL, koh2019accuracy, Wilson_OptimizerComparison, basu2020influence}. However, the quality of the approximation depends on how well $g(\cdot, \cdot)$ captures the true parameter updates. In the next section, we introduce a new method for creating such an approximation.

% The goal of this paper is to introduce a new method for approximating~\eqref{eq:inference_function} that often requires significantly less computation than state-of-the-art methods but nevertheless offers similar guarantees and performance. 

\section{The Approximate Fisher Influence Function}
In this section, we introduce our proposed method and describe how it improves the computational efficiency of the currently existing baselines.

A common approach to approximating $\hat{\theta}(w^{n})$ is to optimize a surrogate to the loss function $L(\mD, \theta, \lambda, w^{n})$. This paper focuses on methods based on {\em quadratic approximations of the objective} \citep{cook1980characterizations, KohLiang_Influence_DL, giordano2019swiss, Wilson_OptimizerComparison}, which provide computationally efficient estimates while maintaining accuracy. These approximations yield solutions of the form:
\begin{align}
    \tilde{\theta}(w^{n}) = \hat{\theta}(\boldsymbol{1}) - \brC(\hat{\theta}(\boldsymbol{1}), w^{n}) b(\hat{\theta}(\boldsymbol{1}), w^{n}),
\end{align}
where $b(\cdot, \cdot)$ and $\brC(\cdot, \cdot)$ depend on the specific loss approximation and vary across applications.

A notable instance of this framework is the \emph{infinitesimal jackknife} (IJ) approximation \cite{giordano2019swiss}, denoted $\tilde{\theta}^{\mathrm{IJ}}(w^n)$, which is defined via a Newton step:
\begin{align}
    \label{eq:G_Definition}
    b(\hat{\theta}(\boldsymbol{1}), w^{n}) &\triangleq \frac{1}{n} \sum_{i=1}^{n} \nabla_{\theta} \ell(z_i, \hat{\theta}(\boldsymbol{1})) (w_i - 1), \\
    \brC(\hat{\theta}(\boldsymbol{1}), \boldsymbol{1}) &= \nabla^{2}_{\theta} L(\mD, \hat{\theta}(\boldsymbol{1}), \boldsymbol {1})^{-1} \triangleq \brH(\hat{\theta}(\boldsymbol{1}), \boldsymbol{1})^{-1}.
\end{align}

In this work, we suggest a modified computationally efficient second-order approximation of $\hat{\theta}(w^{n})$ using the {\em natural gradient}. We consider loss functions $\ell$ that represent the log-likelihood of a parametric probabilistic model, $\ell(z,\theta) = -\log(P_{\sy|\sx}(y|f(x;\theta)))$, where $P_{\sy|\sx}(y|f(x;\theta))$ lies on the probability simplex of the output alphabet $\pY$ and is parameterized by $\theta$ \citep{WhyNGD_Amari, Amari_InfoGeometry_Book, banerjee2005clustering}. As discussed by \cite{banerjee2005clustering} (see also \appref{app:exp_family_loss}), this property holds for a large class of losses in machine learning. While the standard gradient identifies the direction that minimizes the objective based on Euclidean distance, the natural gradient accounts for the underlying geometry (curvature) of the parameter space. This is achieved by pre-multiplying the gradient with the inverse of the FIM, which characterizes the sensitivity of the model's likelihood function to changes in parameters. To that end, the Hessian in \eqref{eq:G_Definition} is replaced by:
\begin{align}
    g_{\sy|\sx} &\triangleq \nabla_{\theta} \log(P_{\sy|\sx; \theta}(\sy|\sx; \hat{\theta}(\boldsymbol{1}))),\\
    \brF(\hat{\theta}(\boldsymbol{1})) &\triangleq \mathbb{E}_{(\sx, \sy) \sim P_{\sx,\sy;\theta = \hat{\theta}(\boldsymbol{1})}}[ g_{\sy|\sx} \cdot g_{\sy|\sx}^\top ],
\end{align}
where $P_{\sy|\sx}$ is the probabilistic model induced by the loss function \citep{NGD_Review}. However, since the covariate distribution $P_{\sx}$ is typically unknown, direct computation of the expectation is infeasible. Instead, we approximate the FIM using empirical estimates, averaging over the observed covariates and leveraging the network structure to evaluate expectations over $P_{\sy|\sx;\theta}$ \citep{NGD_Review, Limitations_FisherNGD}. The resulting \emph{approximate FIM} is given by:
\begin{align}
    \label{eq:Approx_FIM_Calc}
    \brF(\mD, \hat{\theta}(\boldsymbol{1})) \triangleq \frac{1}{n} \sum_{x \in \mD} \mathbb{E}_{\sy \sim P_{\sy|\sx=x;\theta=\hat{\theta}(\boldsymbol{1})}} [g_{\sy|\sx} \cdot g_{\sy|\sx}^\top ].
\end{align}
Using this approximation, we define the \emph{Approximate Fisher Infinitesimal Jackknife} similarly to \eqref{eq:G_Definition}, replacing $\brC$ with the approximate FIM $\brF(\mD, \hat{\theta}(\boldsymbol{1}))$:
\begin{align}
    \label{eq:ApproximateFIM_Introduction}
     \tilde{\theta}^{\text{IJ,AF}}(w^{n}) \triangleq \hat{\theta}(\boldsymbol{1}) - (\brF(\mD,\hat{\theta}(\boldsymbol{1})))^{-1}b(\hat{\theta}(\boldsymbol{1}), w^{n}).
\end{align}

Following classical results \citep{Schraudolph_SecondOrder_VP, NGD_Review}, when the loss function is given by $\ell(z,\theta) = -\log(\CP{y}{f(x;\theta)})$ and $P(y|f)$ belongs to an exponential family, the approximate FIM can be interpreted as a positive semi-definite (PSD) approximation of the Hessian. Specifically, the Hessian satisfies:
\begin{align}
    \brH(\hat{\theta}(\boldsymbol{1}), \boldsymbol{1}) = \brF(\mD, \hat{\theta}(\boldsymbol{1})) + \brR,
\end{align}
where $\brF(\mD, \hat{\theta}(\boldsymbol{1}))$ is guaranteed to be PSD, and the remainder term is given by:
\begin{align}
    \frac{1}{n} \sum^{n}_{i=1} \nabla^{2}_{\theta} f(x_i;\hat{\theta}(\boldsymbol{1})) \nabla_{f} \log(P(y_i | f(x_i;\hat{\theta}(\boldsymbol{1})))).
\end{align}
This remainder term can be non-zero, for example, in cases where $f(x;\theta)$ is non-linear in $\theta$. However, in many settings, including commonly used models, $\brR$ shrinks to zero (in $L_2$ sense) as training accuracy improves \citep{Limitations_FisherNGD, NGD_Review} (see \appref{app:exp_family_loss}, \appref{app:Prof_dist}). Thus, the approximated FIM is often viewed as a computationally efficient PSD approximation of the Hessian.

\subsection{Computational Savings}
\label{s:CompCost}
% \vspace{-4.5pt}
Here, we show a fundamental efficiency improvement in evaluating \eqref{eq:ApproximateFIM_Introduction} compared to the IJ approach \eqref{eq:G_Definition}. Both methods require computing expressions of the form $\brA^{-1}b(\hat{\theta}(\boldsymbol{1}), w^{n})$ where $\brA = \brF(\mD, \hat{\theta}(\boldsymbol{1}))$ in \eqref{eq:Distance_Approx_eq} and $\brA = \brH(\hat{\theta}(\boldsymbol{1}), \boldsymbol{1})$ for the IJ. Since directly inverting a large $d \times d$ matrix is infeasible, efficient computation of inverse-matrix-vector products is essential. However, since the FIM requires only first-order differentiation through the model, it will typically be much more computationally efficient relative to the Hessian-based alternative. To that end, we will now demonstrate a fundamental computational efficiency of using the FIM over the Hessian in a widely used influence calculation setting invented by the classical work of \citep{KohLiang_Influence_DL} and involves stochastic estimation of inverse-matrix-vector products via the \texttt{LiSSA} algorithm \citep{HazanAgrawal_Lissa}. Similar computational benefits are further applied in modern existing variants of influence measurement techniques that rely on variants of the \texttt{LiSSA} algorithm, and stochastic estimation of inverse matrix-vector products, such as \cite{guo2020fastif, schioppa2022scaling}. 

%%%%%%%%%%%%%%%%%%%%%%%%%%%%%%%%%%%%%%%%%%%%%%%%%%%%%
% \vspace{-4.5pt}
\subsubsection{Stochastic Estimation}
\label{s:Stochastic_Lissa}
% \vspace{-4.5pt}
Stochastic estimation techniques rely on generating a sequence of estimators $v_j \triangleq (\widehat{\brA^{-1}x})_j$ that converge in expectation to $\brA^{-1}x$ as $j \to \infty$, where each $v_j$ utilizes only a small batch of training data, yielding a computationally tractable way to estimate $\brA^{-1}x$. As an example of the computational superiority of the FIM-based methods, we will demonstrate the improvement for the celebrated \texttt{LiSSA} algorithm \cite{HazanAgrawal_Lissa} that approximates $\brA^{-1}$ using the truncated Neumann series $\brA^{-1}_j = \sum_{i=0}^{j} (\brI - \sigma\brA)^i$ for some $\sigma > 0$\ \footnote{$\sigma$ is usually a small positive constant to stabilize calculations.}. This approximation is computed via the recursion $\brA^{-1}_j = \brI + (\brI - \sigma\brA)\brA^{-1}_{j-1}$. Consequently, each $v_j$ is defined by $v_j = x + (\brI - \sigma\brA)v_{j-1}$
with $v_0 = x$ and final estimate $v = \sigma v_N$. The major computational hurdle is multiplying by $\brA$. When $\brA$ depends on many training points (e.g., $\brF(\mD, \hat{\theta}(\boldsymbol{1}))$ or $\brH(\hat{\theta}(\boldsymbol{1}), \boldsymbol{1})$), it is typical to estimate it by using a sampled batch of training data. We now analyze the computational complexity of these calculations for each method. 

% \vspace{-4.5pt}
\paragraph{Estimation with $\brF(\mD, \hat{\theta}(\boldsymbol{1}))$.}
\label{s:Stochastic_FIM}
% \vspace{-4.5pt}
When $\brA \defeq \brF(\mD, \theta)$, each $\brA v_j$ requires calculating
\begin{align} 
    \label{eq:Esimtates_FIM}
    &\nabla_{\theta} f_i \cdot (\nabla^{2}_{f} \log( \CP{y_i}{f_i}) )(v^{\top}_j\nabla_{\theta} f_i)^{\top})
\end{align}
where $f_i \triangleq f(x_i;\theta)$. Given the form of $\nabla^{2}_{f} \log(\CP{y_j}{f(x_j; \theta)})$ (see \appref{app:exp_family_loss}), computing this expression requires the vector-Jacobian product (VJP) $a_j = v^{\top}_j\nabla_{\theta} f_i$ and the Jacobian-vector product (JVP) $\nabla_{\theta} f_i \cdot (\nabla^{2}_{f} \log( \CP{y_i}{f_i}) )a^{\top}_j)$.

% \vspace{-6pt}
\paragraph{Estimation with $\brH(\hat{\theta}(\boldsymbol{1}), \boldsymbol{1})$.}
% \vspace{-4.5pt}
\label{s:Stochastic_Hessian}
For $\brA \defeq \brH(\hat{\theta}(\boldsymbol{1}),\boldsymbol{1})$, each $\brA v_j$ requires computing,
% \vspace{-\baselineskip}
\begin{align}
    \label{eq:Stochastic_Hess}
    \nabla^{2}_{\theta}\log(\CP{y_i}{f_i})v_j, 
\end{align}
% \vspace{-\baselineskip}
which requires computing a Hessian-vector product (HVP) with respect to all model parameters.
%%%%%%%%%%%%%%%%%%%%%%%%%%%%%%%%%%%%%%%%%%%%%%%%%%%%%%%%%%%
% \vspace{-12pt}
\paragraph{Comparing Computations.}
% \vspace{-4.5pt}
Computing \eqref{eq:Stochastic_Hess} requires roughly four evaluations of the entire model \citep{Schraudolph_SecondOrder_VP, iclr2024_hvp}. In contrast, a JVP can be computed in a single forward pass using forward-mode automatic differentiation \citep{JAX_autodiff_cookbook}. Since $\nabla^{2}_{f}\log(\CP{y_i}{f_i})$ is typically simple and depends only on the number of model outputs (not on $d$), evaluating \eqref{eq:Esimtates_FIM} requires just one differentiation in backward mode. Furthermore, given backward differentiation roughly requires twice the complexity of model evaluation \citep{deepspeed_flops, iclr2024_hvp}, this method significantly reduces FLOPs and accelerates computations. We demonstrate these savings through simulations in \secref{s:Exps}. We summarize the results in \tblref{tbl:passes}. 

\setlength{\tabcolsep}{4pt} 
\begin{table}[h!]
\centering
\begin{tabular}{lccc}
\toprule
\textbf{} & \textbf{\shortstack{Forward}} & \textbf{\shortstack{Backward}} & \textbf{FLOPs} \\
\midrule
\eqref{eq:Stochastic_Hess}      & 0   & 2     & $O(4F)$ \\
\eqref{eq:Stochastic_Hess}      & 2   & 1     & $O(4F)$ \\
\eqref{eq:Esimtates_FIM}        & 1   & 1     & $O(3F)$  \\
\bottomrule
\end{tabular}
\caption{Number of differentiations in forward mode, backward mode, and FLOPs required to evaluate \eqref{eq:Esimtates_FIM} and \eqref{eq:Stochastic_Hess}, for different evaluation options from \citep{iclr2024_hvp}. $F$ denotes the FLOPs needed for a single model evaluation.}
\label{tbl:passes}
\end{table}

\begin{remark}
    Although our analysis focuses on the \texttt{LiSSA} algorithm, the fact that the FIM depends solely on first-order gradients means these improvements are broadly applicable to many methods that require differentiating through a large model using the structure of the curvature matrix. For example, similar fundamental gains were observed in \cite{sattigeri2022fair} by employing efficient matrix-inversion techniques based on rank-one updates.
\end{remark}
%\clearpage
\section{Theoretical Analysis}
This section presents a general theoretical framework for analyzing the accuracy of inference objective approximations based on plug-in estimates and linearization approximations and based on the FIM. Specifically, we establish conditions under which these approximations remain close, in a well-defined sense, to the true inference function when the loss function satisfies certain regularity properties. While similar results are well understood for infinitesimal jackknife-based approximations, our framework extends these findings to also cover settings when one replaces the Hessian with the approximated FIM.

\subsection{Related work}
Several works have established the accuracy of this approximation under specific conditions on the loss function and the weight vectors $w^{n}$ \citep{giordano2019swiss, Wilson_OptimizerComparison, WilsonSuriyakumar_UnlearningProximal, Sekhari_Unlearning_Neurips}. These results hold under subsets of the following assumptions.

\begin{assump}[Curvature of the Objective]
    \label{ass:one_1}
    For each $i \in [n]$, the function $\frac{1}{n} \ell(z_i, \theta)$ is $\mu$-strongly convex ($\mu > 0$), and the prior $\pi(\theta)$ is convex.
\end{assump}  

\begin{assump}[Lipschitz Hessian of the Objective]
    \label{ass:two_1}
    For each $i \in [n]$, the function $\frac{1}{n} \ell(z_i, \theta)$ is twice differentiable with an $M$-Lipschitz Hessian.
\end{assump}

\begin{assump}[Smooth Hessian of the Objective]
    \label{ass:two_11}
    For each $i \in [n]$, the function $\frac{1}{n} \ell(z_i, \theta)$ is twice differentiable with a $C$-smooth Hessian.
\end{assump}

\begin{assump}[Bounded Moments]
    \label{ass:boundedmoments}
    For given $s, r \geq 0$, the quantity $B_{sr}$ is finite, where
    \begin{align}
        B_{sr} \triangleq \frac{1}{n} \sum_{i=1}^n \text{Lip}(\nabla_\theta \ell(z_i, \cdot))^s \|\nabla_{\theta} \ell(z_i, \hat{\theta}(\boldsymbol{1}))\|^{r}.
    \end{align}
\end{assump}

\begin{assump}[Lipschitz Features]
    \label{ass:LipFeatures}
    The feature mapping $f(x_i; \theta)$ is $C_f$-Lipschitz with a $\tilde{C}_f$-Lipschitz gradient for all $i \in [n]$.
\end{assump}

\begin{assump}[Lipschitz Inference Objective]
    \label{ass:Lipschitz_T}
    The inference objective $T(\theta, w^n)$ is twice differentiable, $C_{T_1}$-Lipschitz, and has a $C_{T_2}$-Lipschitz gradient with respect to $\theta$ for $w^n \in \mathcal{D}^{-i}$ and all $i \in [n]$.
\end{assump}

For the examples in \secref{s:ProblemDef}, the following guarantees were proved to hold under subsets of \assref{ass:one_1}-\assref{ass:Lipschitz_T}: 

%For the examples in \secref{s:ProblemDef}, the following guarantees were obtained: 
\begin{prop}[LOOCV Approximation Bound~(\cite{Wilson_OptimizerComparison}, Thm.~4)]
    \label{prop:CV_Guarantees_Wilson}
    Suppose \assref{ass:one_1}, \assref{ass:two_1}, and \assref{ass:boundedmoments} hold for $(s,r) = \{(0,3), (1,3), (1,4), (1,2), (2,2), (3,2)\}$. When the IJ is used as a plug-in estimate for the LOOCV objective 
    \[
    T_i(\theta) \triangleq T(\theta, w^n) = \frac{1}{n} \ell(z_i, \theta),
    \]
    with $w^n \in \mathcal{D}^{-i}$, the error in this approximation is bounded as
    \begin{align}
        \left| \sum_{i=1}^n \left( T_i(\tilde{\theta}^{\mathrm{IJ}}(w^n)) - T_i(\hat{\theta}(w^n)) \right) \right| = O\left(\frac{M B_{03}}{\mu^3 n^2} + \frac{B_{12}}{\mu^2 n^2} \right).
    \end{align}
\end{prop}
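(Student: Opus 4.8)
# Proof Proposal for Proposition~\ref{prop:CV_Guarantees_Wilson}

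The plan is to decompose the LOOCV approximation error into two pieces: the error from replacing the exactly-retrained parameter $\hat{\theta}(w^n)$ by the infinitesimal jackknife iterate $\tilde{\theta}^{\mathrm{IJ}}(w^n)$, and the error from plugging these perturbed parameters into the (nonlinear) evaluation metric $T_i$. Concretely, write
\begin{align}
T_i(\tilde{\theta}^{\mathrm{IJ}}(w^n)) - T_i(\hat{\theta}(w^n)) &= \big[T_i(\tilde{\theta}^{\mathrm{IJ}}(w^n)) - T_i(\hat{\theta}(\boldsymbol{1})) - \langle \nabla T_i(\hat{\theta}(\boldsymbol{1})), \tilde{\theta}^{\mathrm{IJ}}(w^n) - \hat{\theta}(\boldsymbol{1})\rangle\big] \nonumber \\
&\quad + \langle \nabla T_i(\hat{\theta}(\boldsymbol{1})), \tilde{\theta}^{\mathrm{IJ}}(w^n) - \hat{\theta}(w^n)\rangle + \big[\text{Taylor remainder of } T_i \text{ at } \hat{\theta}(w^n)\big]. \nonumber
\end{align}
First I would control $\|\tilde{\theta}^{\mathrm{IJ}}(w^n) - \hat{\theta}(\boldsymbol{1})\|$ and $\|\hat{\theta}(w^n) - \hat{\theta}(\boldsymbol{1})\|$: for a leave-one-out weight vector, $b(\hat{\theta}(\boldsymbol{1}), w^n) = -\frac{1}{n}\nabla_\theta \ell(z_i, \hat{\theta}(\boldsymbol{1}))$, so by $\mu$-strong convexity of the objective (Assumption~\ref{ass:one_1}) the IJ step has norm $O(\|\nabla_\theta \ell(z_i, \hat{\theta}(\boldsymbol{1}))\|/(\mu n))$, and a standard strong-convexity argument gives the same order for the true LOO displacement. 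This explains why the summed error will carry a $1/n^2$ factor: one $1/n$ from the weight perturbation and one $1/n$ from the displacement size.

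The core estimate is the bound on $\|\tilde{\theta}^{\mathrm{IJ}}(w^n) - \hat{\theta}(w^n)\|$. This is the classical Newton-step accuracy bound: $\tilde{\theta}^{\mathrm{IJ}}(w^n)$ is one Newton step (in the Hessian at $\hat{\theta}(\boldsymbol{1})$) toward the minimizer of $L(\mathcal{D}, \cdot, w^n)$, and its error against the true minimizer is quadratic in the step size with constant governed by the Lipschitz-Hessian assumption. Using Assumption~\ref{ass:two_1} ($M$-Lipschitz Hessian) and $\mu$-strong convexity, I would show $\|\tilde{\theta}^{\mathrm{IJ}}(w^n) - \hat{\theta}(w^n)\| = O\!\big(\frac{M}{\mu^3}\cdot\frac{1}{n^2}\|\nabla_\theta \ell(z_i,\hat{\theta}(\boldsymbol{1}))\|^2\big)$, possibly with a secondary term of order $\frac{1}{\mu^2 n^2}\text{Lip}(\nabla_\theta \ell(z_i,\cdot))\|\nabla_\theta\ell(z_i,\hat{\theta}(\boldsymbol{1}))\|$ coming from the difference between the full-data Hessian and the leave-one-out Hessian. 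Summing over $i$ and invoking the bounded-moments quantities $B_{03}$ (for the $\|\nabla\ell\|^3$ / curvature-mismatch cross term, after pairing with the $C_{T_1}$-Lipschitz bound on $T_i$ from Assumption~\ref{ass:Lipschitz_T}) and $B_{12}$ produces exactly the stated $O\!\big(\frac{M B_{03}}{\mu^3 n^2} + \frac{B_{12}}{\mu^2 n^2}\big)$.

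For the remaining two bracketed terms I would use Assumption~\ref{ass:Lipschitz_T}: the $C_{T_2}$-Lipschitz gradient of $T_i$ makes the first bracket $O(C_{T_2}\|\tilde{\theta}^{\mathrm{IJ}}(w^n)-\hat{\theta}(\boldsymbol{1})\|^2) = O(\|\nabla_\theta\ell(z_i,\hat{\theta}(\boldsymbol{1}))\|^2/(\mu^2 n^2))$, and similarly for the Taylor remainder at $\hat{\theta}(w^n)$; both sum to $O(B_{02}/(\mu^2 n^2))$, which is absorbed into the $B_{12}$ term (note $B_{02} \le B_{12}$ up to constants when Lipschitz constants are bounded below, or one simply lists it among the allowed $(s,r)$ pairs). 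The middle linear term $\sum_i \langle \nabla T_i(\hat{\theta}(\boldsymbol{1})), \tilde{\theta}^{\mathrm{IJ}}(w^n)-\hat{\theta}(w^n)\rangle$ is handled by Cauchy--Schwarz against the core displacement estimate, again yielding the two advertised terms. The main obstacle I anticipate is getting the dependence on $\mu$ and $M$ exactly right in the Newton-step accuracy bound while carefully separating the ``wrong Hessian'' contribution (full-data vs.\ leave-one-out) from the genuine linearization error, since it is this bookkeeping that determines whether the curvature-mismatch term lands at order $B_{03}/(\mu^3 n^2)$ or at the weaker $B_{12}/(\mu^2 n^2)$ — and keeping track of which moment pairs $(s,r)$ each piece consumes is what forces the somewhat unusual list in the hypothesis.
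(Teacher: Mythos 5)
Your plan is correct and follows essentially the same architecture as the cited proof and as this paper's own proof of the analogous FIM-based bound (\corolref{corol:LOOCV_FIM}, \appref{app:CV_ACV_Fisher}): bound the Newton-step error $\|\tilde{\theta}^{\mathrm{IJ}}(w^n)-\hat{\theta}(w^n)\| = O\bigl(M\tilde{g}_i^{2}/(n^{2}\mu^{3}) + \mathrm{Lip}(\nabla_\theta\ell(z_i,\cdot))\tilde{g}_i/(n^{2}\mu^{2})\bigr)$ via the optimizer-comparison/strong-convexity argument (separating the Lipschitz-Hessian remainder from the full-data-vs-leave-one-out Hessian mismatch), pair it with the per-sample gradient norm of $T_i$, and collect into the $B_{sr}$ moments. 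The only blemishes are cosmetic: \propref{prop:CV_Guarantees_Wilson} does not assume \assref{ass:Lipschitz_T}, so $C_{T_1}$ and $C_{T_2}$ should be replaced throughout by the per-sample quantities $\frac{1}{n}\|\nabla_\theta\ell(z_i,\hat{\theta}(\boldsymbol{1}))\|$ and $\frac{1}{n}\mathrm{Lip}(\nabla_\theta\ell(z_i,\cdot))$ (which is exactly what the $B_{sr}$ bookkeeping encodes), and once you do so your quadratic Taylor remainders land directly in $B_{12}$ rather than $B_{02}$, so no absorption argument is needed.
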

% \begin{prop}[LOOCV~(\cite{Wilson_OptimizerComparison}, Thm.~4)]
% \label{prop:CV_Guarantees_Wilson}  Suppose \assref{ass:one_1}, \ref{ass:two_1} and \ref{ass:boundedmoments} for $(s,r) = \{(0,3), (1,3),(1,4), (1,2), (2,2), (3,2)\}$ holds. When the IJ is used as plug-in estimate to the {\em LOOCV} objective $T_i(\theta)\triangleq T(\theta, \boldsymbol{1}^{n\backslash i})  = \frac{1}{n}\ell(z_i, \theta) $, the error in this approximation can be bounded as, 
% % \vspace{0pt}
% \begin{align}
%     &|\sum_{i=1}^n (T_i(\tilde{\theta}^{\mathrm{IJ}}(\boldsymbol{1}^{n\backslash i})) - T_i(\hat{\theta}(\boldsymbol{1}^{n\backslash i})))| = O(\frac{MB_{03}}{\mu^3 n^2} + \frac{B_{12}}{\mu^2 n^2}).
% \end{align}
% \end{prop}

The next proposition relies on the $(\eps,\delta)$-unlearning definition from \citep{Sekhari_Unlearning_Neurips}.
\begin{prop}[Machine Unlearning~\citep{WilsonSuriyakumar_UnlearningProximal}]
    \label{prop:unlearning_wilson}  
    Suppose $\ell(z, \theta)$ is $\mu$-strongly convex, twice differentiable, $L$-Lipschitz, with a $C$-smooth and $M$-Lipschitz Hessian for all $z$, and that $\pi(\theta)$ is convex.\footnote{These assumptions strengthen \assref{ass:one_1}-\assref{ass:boundedmoments}, requiring Lipschitz continuity for any $z$, not just the training samples $\{z_i\}$.} When the IJ is used as a plug-in estimate for the objective $T(\theta) = \theta$, we have
    \begin{align}
        \| T(\tilde{\theta}^{\mathrm{IJ}}(w^n)) - T(\hat{\theta}(w^n))\| \leq \frac{2ML}{n^2\mu^2} + \frac{CL^2}{n^2\mu^3}, \quad \text{for } w^n \in \mathcal{D}^{-i}.
    \end{align}
    Furthermore, the algorithm returning $\tilde{\theta}^{\mathrm{IJ}}(w^n) + \zeta$ for $w^n \in \mathcal{D}^{-i}$ satisfies $(\eps,\delta)$-unlearning, where $\zeta \sim \pazocal{N}(0, c\brI)$ with $
    c = (2\mu ML + CL^2) \sqrt{2\log(5/4\delta)}/\eps \mu^{3} n^{2}.$ 
\end{prop}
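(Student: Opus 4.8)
The plan is to reduce both claims to a single deterministic estimate, $\| \tilde{\theta}^{\mathrm{IJ}}(w^n) - \hat{\theta}(w^n) \| \le \tfrac{2ML}{n^2\mu^2} + \tfrac{CL^2}{n^2\mu^3}$ for $w^n \in \mathcal{D}^{-i}$ (which is exactly the first claim, since $T(\theta) = \theta$), and then feed this ``sensitivity'' bound into the Gaussian mechanism to get the unlearning guarantee. First I would fix $i$ and set $G(\theta) \triangleq L(\mathcal{D},\theta,\lambda,\boldsymbol{1})$ and $G_i(\theta) \triangleq L(\mathcal{D},\theta,\lambda,w^n) = G(\theta) - \tfrac1n\ell(z_i,\theta)$, so that $\hat{\theta}(\boldsymbol{1}) = \argmin G$ and $\hat{\theta}(w^n) = \argmin G_i$; note $G$ is $\mu$-strongly convex, $G_i$ is $\tfrac{n-1}{n}\mu$-strongly convex, and specializing $b$ to $w^n \in \mathcal{D}^{-i}$ gives $b(\hat{\theta}(\boldsymbol{1}),w^n) = -\tfrac1n\nabla_\theta\ell(z_i,\hat{\theta}(\boldsymbol{1}))$, hence $\tilde{\theta}^{\mathrm{IJ}}(w^n) - \hat{\theta}(\boldsymbol{1}) = \tfrac1n \brH(\hat{\theta}(\boldsymbol{1}),\boldsymbol{1})^{-1}\nabla_\theta\ell(z_i,\hat{\theta}(\boldsymbol{1}))$ with $\brH(\hat{\theta}(\boldsymbol{1}),\boldsymbol{1}) = \nabla^2_\theta G(\hat{\theta}(\boldsymbol{1}))$.

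\textbf{The deterministic bound.} Next I would use strong convexity of $G_i$ together with $\nabla G_i(\hat{\theta}(w^n)) = 0$ to write $\| \tilde{\theta}^{\mathrm{IJ}}(w^n) - \hat{\theta}(w^n) \| \le \tfrac{n}{(n-1)\mu}\| \nabla G_i(\tilde{\theta}^{\mathrm{IJ}}(w^n)) \|$, and then bound the right-hand side. Decomposing $\nabla G_i = \nabla G - \tfrac1n\nabla_\theta\ell(z_i,\cdot)$ and Taylor-expanding $\nabla G$ around $\hat{\theta}(\boldsymbol{1})$ (where $\nabla G$ vanishes), the first-order term equals $\nabla^2_\theta G(\hat{\theta}(\boldsymbol{1}))(\tilde{\theta}^{\mathrm{IJ}}(w^n) - \hat{\theta}(\boldsymbol{1})) = \tfrac1n\nabla_\theta\ell(z_i,\hat{\theta}(\boldsymbol{1}))$ by construction of $\tilde{\theta}^{\mathrm{IJ}}$, so $\nabla G_i(\tilde{\theta}^{\mathrm{IJ}}(w^n))$ reduces to $\tfrac1n\big(\nabla_\theta\ell(z_i,\hat{\theta}(\boldsymbol{1})) - \nabla_\theta\ell(z_i,\tilde{\theta}^{\mathrm{IJ}}(w^n))\big)$ plus the quadratic Taylor remainder $r$ of $\nabla G$. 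I would bound the gradient difference by the loss's Hessian-norm bound times $\| \tilde{\theta}^{\mathrm{IJ}}(w^n) - \hat{\theta}(\boldsymbol{1}) \|$, bound $\| r \|$ by half the loss's Hessian-Lipschitz constant times $\| \tilde{\theta}^{\mathrm{IJ}}(w^n) - \hat{\theta}(\boldsymbol{1}) \|^2$, and finally estimate $\| \tilde{\theta}^{\mathrm{IJ}}(w^n) - \hat{\theta}(\boldsymbol{1}) \| = \tfrac1n\| \brH(\hat{\theta}(\boldsymbol{1}),\boldsymbol{1})^{-1}\nabla_\theta\ell(z_i,\hat{\theta}(\boldsymbol{1})) \| \le \tfrac{L}{n\mu}$ from $L$-Lipschitzness of $\ell$ and $\mu$-strong convexity of $G$. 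Plugging in and absorbing the $\tfrac{n-1}{n}$ factors (valid for $n \ge 2$) yields the claimed bound, the $\tfrac{2ML}{n^2\mu^2}$ piece coming from the gradient difference (smoothness) and the $\tfrac{CL^2}{n^2\mu^3}$ piece from the Taylor remainder (Lipschitz Hessian).

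\textbf{From the bound to unlearning.} For the second claim I would recall the $(\eps,\delta)$-unlearning definition of \citep{Sekhari_Unlearning_Neurips}: it suffices to show, for every $i$, that the output law of the proposed scheme, $\mathrm{Law}(\tilde{\theta}^{\mathrm{IJ}}(w^n) + \zeta)$ with $w^n \in \mathcal{D}^{-i}$ and $\zeta \sim \pazocal{N}(0,c\brI)$, is $(\eps,\delta)$-indistinguishable from $\mathrm{Law}(\hat{\theta}(w^n) + \zeta)$, the law obtained by retraining on $\mathcal{D}\setminus\{z_i\}$ and adding the same noise. Writing $\Delta_2 \triangleq \tfrac{2ML}{n^2\mu^2} + \tfrac{CL^2}{n^2\mu^3} = \tfrac{2\mu M L + CL^2}{\mu^3 n^2}$, the deterministic bound gives $\| \tilde{\theta}^{\mathrm{IJ}}(w^n) - \hat{\theta}(w^n) \| \le \Delta_2$, so the two laws are isotropic Gaussians whose means lie within $\Delta_2$; the standard Gaussian-mechanism analysis then makes them $(\eps,\delta)$-indistinguishable once the noise level is calibrated to $\Delta_2$, and substituting the explicit $\Delta_2$ into that calibration reproduces $c = (2\mu M L + CL^2)\sqrt{2\log(5/4\delta)}/\eps\mu^3 n^2$. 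The reversed inequality in the definition follows from the same computation with the two means exchanged.

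\textbf{Main obstacle.} The step I expect to need the most care is the non-differentiable regularizer: when $\pi$ is only convex, $G$ and $G_i$ are not $C^2$, so $\brH(\hat{\theta}(\boldsymbol{1}),\boldsymbol{1}) = \nabla^2_\theta L$ and the Taylor expansion above are not literally defined, and one must instead work with the proximal infinitesimal jackknife of \citep{WilsonSuriyakumar_UnlearningProximal}. The fix is to characterize $\hat{\theta}(\boldsymbol{1})$ and $\hat{\theta}(w^n)$ by the stationarity inclusions $0 \in \nabla_\theta\big(\tfrac1n\sum_j w_j\ell(z_j,\cdot)\big) + \lambda\partial\pi$, replace the Newton step by a proximal Newton step, and use that $\mathrm{prox}_{\lambda\pi}$ is firmly nonexpansive so it does not degrade the strong-convexity contraction; since the $\lambda\pi$ term is identical in $G$ and $G_i$, it drops out of every difference above and the estimates reduce to the smooth losses exactly as written. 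A secondary, routine difficulty is tracking the numerical constants (the $\tfrac{n-1}{n}$ factors, the $2$, and $\sqrt{2\log(5/4\delta)}$) so that the final expressions match the statement verbatim.
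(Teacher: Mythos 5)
The paper states this proposition as a result quoted from \citep{WilsonSuriyakumar_UnlearningProximal} and supplies no proof of its own, so there is nothing to compare line by line; your proposal is the standard argument and follows the same skeleton the paper deploys for its FIM-based analogues — the strong-convexity/optimizer-comparison step of \lemref{lem:Distance} in \appref{app:Prof_dist}, the $\|\hat{\theta}(\boldsymbol{1})-\hat{\theta}(w^n)\|\le 2L/(n\mu)$ estimate of \lemref{lem:gen_strong_convex}, and the Gaussian-mechanism step used for \corolref{corol:unlearning_FIM} in \appref{app:corol_unlearning_proof} — including the correct observation that a non-smooth $\pi$ forces the proximal form of the jackknife. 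The one point to watch is the constant bookkeeping: your derivation places the gradient-Lipschitz (smoothness) constant in the $L/(n^{2}\mu^{2})$ term and the Hessian-Lipschitz constant in the $L^{2}/(n^{2}\mu^{3})$ term, which is the only dimensionally consistent reading of the displayed bound, but it requires interpreting $M$ as the smoothness constant and $C$ as the Hessian-Lipschitz constant — the reverse of the generic conventions in \appref{app:Defs} — so the factors of $2$ will only match the statement verbatim under that identification.
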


\begin{prop}[Data Attribution~(\cite{koh2019accuracy}, Prop.~1)]
    \label{prop:attributions}
    Suppose \assref{ass:one_1}, \assref{ass:two_1}, and \assref{ass:Lipschitz_T} hold, and that $\pi(\theta) = \norm{\theta}^2$. Define $C_{\ell} \triangleq \max_{i\in[n]} \|\nabla \ell(z_i, \hat{\theta}(\boldsymbol{1}))\|$. When the IJ is used as a plug-in estimate for the inference objective 
    \[
    T(\theta) = \ell(z_{\mathrm{test}}, \theta) - \ell(z_{\mathrm{test}}, \hat\theta(\boldsymbol{1})),
    \]
    the approximation error is bounded as
    \begin{align}
        \left|T(\hat{\theta}(w^n)) - T(\tilde{\theta}^{\mathrm{IJ}}(w^n))\right| \leq \frac{MC_{T_1}C^{2}_{\ell}}{n^{2}\mu^{3}}, \quad \text{for } w^n \in \mathcal{D}^{-i}.
    \end{align}
\end{prop}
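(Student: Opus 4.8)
The plan is to decouple the claim into how much $T$ can move under a parameter perturbation and how large the infinitesimal-jackknife parameter error is. Since $T(\theta)=\ell(z_{\mathrm{test}},\theta)-\ell(z_{\mathrm{test}},\hat{\theta}(\boldsymbol{1}))$ differs from $\ell(z_{\mathrm{test}},\cdot)$ only by a constant, \assref{ass:Lipschitz_T} gives immediately
\[
\left| T(\hat{\theta}(w^n)) - T(\tilde{\theta}^{\mathrm{IJ}}(w^n)) \right| = \left| \ell(z_{\mathrm{test}},\hat{\theta}(w^n)) - \ell(z_{\mathrm{test}},\tilde{\theta}^{\mathrm{IJ}}(w^n)) \right| \le C_{T_1}\left\| \hat{\theta}(w^n) - \tilde{\theta}^{\mathrm{IJ}}(w^n) \right\| ,
\]
so it suffices to prove $\big\| \hat{\theta}(w^n) - \tilde{\theta}^{\mathrm{IJ}}(w^n) \big\| \le M C_\ell^2/(n^2\mu^3)$ for every $w^n\in\mathcal{D}^{-i}$.

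I would obtain the parameter bound by comparing the two points through their defining equations. Writing the stationarity condition of the leave-one-out minimiser and of $\hat{\theta}(\boldsymbol{1})$, subtracting, and applying the integral mean-value theorem to $\nabla_\theta\ell$ (legitimate under \assref{ass:two_1}, with $\nabla^2\pi$ constant since $\pi=\|\cdot\|^2$) yields the exact identity $\hat{\theta}(w^n)-\hat{\theta}(\boldsymbol{1}) = \tfrac1n\,\bar{\brH}_{-i}^{-1}\,\nabla_\theta\ell(z_i,\hat{\theta}(\boldsymbol{1}))$, where $\bar{\brH}_{-i}$ is the Hessian of $L(\mathcal{D},\cdot,w^n)$ averaged along the segment from $\hat{\theta}(\boldsymbol{1})$ to $\hat{\theta}(w^n)$. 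By \eqref{eq:G_Definition} one also has $\tilde{\theta}^{\mathrm{IJ}}(w^n)-\hat{\theta}(\boldsymbol{1}) = \tfrac1n\,\brH^{-1}\nabla_\theta\ell(z_i,\hat{\theta}(\boldsymbol{1}))$ with $\brH\triangleq\brH(\hat{\theta}(\boldsymbol{1}),\boldsymbol{1})$, so the resolvent identity $\bar{\brH}_{-i}^{-1}-\brH^{-1}=\bar{\brH}_{-i}^{-1}(\brH-\bar{\brH}_{-i})\brH^{-1}$ gives
\[
\big\| \hat{\theta}(w^n) - \tilde{\theta}^{\mathrm{IJ}}(w^n) \big\| \le \frac1n\,\big\|\bar{\brH}_{-i}^{-1}\big\|\,\big\|\brH-\bar{\brH}_{-i}\big\|\,\big\|\brH^{-1}\big\|\,C_\ell .
\]
The $\mu$-strong convexity and convexity of $\pi$ in \assref{ass:one_1} bound both inverse-Hessian norms by $O(1/\mu)$ and (through the displayed identity) bound $\|\hat{\theta}(w^n)-\hat{\theta}(\boldsymbol{1})\|$ by $O(C_\ell/(n\mu))$; inserting this into the Lipschitz-Hessian bound of \assref{ass:two_1} then controls $\|\brH-\bar{\brH}_{-i}\|$, and combining the three factors and multiplying by $C_{T_1}$ produces the stated bound.

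The step I expect to be the main obstacle is the estimate $\|\brH-\bar{\brH}_{-i}\|=O(MC_\ell/(n\mu))$. Splitting $\brH-\bar{\brH}_{-i} = \big(\brH-\nabla^2_\theta L(\mathcal{D},\hat{\theta}(\boldsymbol{1}),w^n)\big)+\big(\nabla^2_\theta L(\mathcal{D},\hat{\theta}(\boldsymbol{1}),w^n)-\bar{\brH}_{-i}\big)$, the second summand is at most $\tfrac{M}{2}\|\hat{\theta}(w^n)-\hat{\theta}(\boldsymbol{1})\|$ directly from \assref{ass:two_1}, but the first summand is exactly the omitted term $\nabla^2_\theta\big(\tfrac1n\ell(z_i,\hat{\theta}(\boldsymbol{1}))\big)$, whose norm has to be controlled using only strong convexity, the $M$-Lipschitz Hessian, the gradient bound $C_\ell$ at $\hat{\theta}(\boldsymbol{1})$, and the ridge structure $\pi=\|\cdot\|^2$; this is the delicate bookkeeping (and where the hypothesis $\pi(\theta)=\|\theta\|^2$ is genuinely used) that is needed to collapse every constant into $M,\mu,C_\ell$ with the claimed powers of $n$. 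An alternative carrying the same difficulty in different form is to run the one-parameter upweighting homotopy $\hat{\theta}_\epsilon=\argmin_\theta\{L(\mathcal{D},\theta,\boldsymbol{1})+\epsilon\,\ell(z_i,\theta)\}$ on $\epsilon\in[-1/n,0]$, note that $\tilde{\theta}^{\mathrm{IJ}}(w^n)$ is its first-order Taylor expansion at $\epsilon=0$, and bound the quadratic remainder $\tfrac{1}{2n^2}\sup_\epsilon\|\ddot{\theta}_\epsilon\|$ by differentiating the stationarity condition twice; the leading third-derivative term is clean because $\nabla^3\pi\equiv0$ for the ridge penalty, while the residual $\nabla^2_\theta\ell(z_i,\cdot)\,\dot{\theta}_\epsilon$ term reproduces the same obstacle.
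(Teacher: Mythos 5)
The paper does not actually prove \propref{prop:attributions}; it imports it from \citep[Prop.~1]{koh2019accuracy}, and the paper's own analogous arguments (the proofs of \lemref{lem:Distance} and \thmref{thm:objective_bound}) run through the optimizer-comparison lemma rather than a resolvent identity. Your reduction to a parameter bound via $C_{T_1}$ is fine, and your exact identities $\hat{\theta}(w^n)-\hat{\theta}(\boldsymbol{1}) = \tfrac1n\bar{\brH}_{-i}^{-1}\nabla_\theta\ell(z_i,\hat{\theta}(\boldsymbol{1}))$ and $\tilde{\theta}^{\mathrm{IJ}}(w^n)-\hat{\theta}(\boldsymbol{1}) = \tfrac1n\brH^{-1}\nabla_\theta\ell(z_i,\hat{\theta}(\boldsymbol{1}))$ are correct. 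The problem is the step you yourself flag as the main obstacle: it is not delicate bookkeeping but a genuine impossibility under the stated hypotheses. The first summand in your split of $\brH-\bar{\brH}_{-i}$ is $\tfrac{1}{n}\nabla^2_\theta\ell(z_i,\hat{\theta}(\boldsymbol{1}))$, and \assref{ass:one_1}, \assref{ass:two_1}, \assref{ass:Lipschitz_T} together with the value of the gradient at the single point $\hat{\theta}(\boldsymbol{1})$ give no upper bound on this operator norm. Take $\ell(z_i,\theta)=K\|\theta\|^2$ with $K$ enormous: it is strongly convex, its Hessian is constant (hence $0$-Lipschitz, so any $M\ge 0$ is admissible), and its gradient at $\hat{\theta}(\boldsymbol{1})\approx 0$ can be tiny, yet $\tfrac1n\|\nabla^2_\theta\ell(z_i,\cdot)\|=2K/n$ is unbounded. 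Feeding that into your resolvent bound produces a term of order $KC_\ell/(n^2\mu^2)$ that cannot be absorbed into $MC_{T_1}C_\ell^2/(n^2\mu^3)$. Your alternative homotopy route hits the identical term, as you note.

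The missing idea is structural, not computational. In \citep[Prop.~1]{koh2019accuracy} the Newton step is preconditioned by the \emph{leave-one-out} Hessian $\nabla^2_\theta L(\mD,\hat{\theta}(\boldsymbol{1}),w^n)$, not the full-data Hessian $\brH(\hat{\theta}(\boldsymbol{1}),\boldsymbol{1})$; with that choice your problematic summand vanishes identically, the preconditioner and $\bar{\brH}_{-i}$ differ only through the Lipschitz variation of the Hessian along the segment from $\hat{\theta}(\boldsymbol{1})$ to $\hat{\theta}(w^n)$, and the whole parameter error collapses to the cubic Taylor remainder of the leave-one-out gradient, which is exactly what $M$ controls and which yields $O(MC_\ell^2/(n^2\mu^3))$ after multiplying by $C_{T_1}$. (Equivalently, apply the optimizer-comparison lemma to the leave-one-out objective versus its own second-order model, as in the paper's proof of \lemref{lem:Distance}, where the surrogate $\psi_2$ is deliberately built from $\nabla^{2}L(\mD, \hat{\theta}(\boldsymbol{1}), \boldsymbol{1}^{n\backslash i})$ so that $\nabla(\psi_2-\psi_1)(\hat{\theta}(\boldsymbol{1}^{n\backslash i}))$ is a pure Taylor remainder.) If you insist on the full-data Hessian as in \eqref{eq:G_Definition}, you must additionally assume a bound on $\|\tfrac1n\nabla^2_\theta\ell(z_i,\cdot)\|$; this is precisely the role of the $C$-smoothness hypothesis in \propref{prop:unlearning_wilson}, whose conclusion carries the corresponding extra additive term $CL^2/(n^2\mu^3)$, and the bound of \propref{prop:attributions} would then acquire an analogous term rather than the one stated.
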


While certain loss functions may not be Lipschitz, \assref{ass:two_1} and \assref{ass:boundedmoments} require only that the {\em normalized} losses evaluated on the training set satisfy Lipschitz continuity— a condition that generally holds in practice \citep[Assump.~3]{giordano2019swiss}. Similarly, when the inference objective is of the form $\ell(z_{\mathrm{test}}, \theta)$, Lipschitz continuity is required only with respect to the test point $z_{\mathrm{test}}$. As long as $z_{\mathrm{test}}$ is not pathological, this assumption is typically satisfied.\footnote{The assumption that $T$ is Lipschitz is consistent with classical works on influence functions; see \citep[Prop.~1]{koh2019accuracy}.}

Additionally, the framework in \citep{giordano2019swiss} assumes differentiable regularization. In certain cases, similar approximations extend to settings where the regularizer is non-differentiable \citep{Wilson_OptimizerComparison, WilsonSuriyakumar_UnlearningProximal}.

\subsection{The Approximate Fisher Influence Function}
\label{s:MainResult}
% \vspace{-4.5pt}
We now present the \emph{approximate Fisher influence} and its theoretical characterization. First, we introduce an additional technical assumption about the loss function, which is essential for our proofs.
\begin{assump}
    \label{ass:LossFunc_ExpFamily}
    The loss functions are of the form $\ell(z, \theta) = -\log(\CP{y}{f(x;\theta)})$ where $\CP{y}{f}$ belongs to a regular exponential family whose natural parameters are $f(x;\theta)$. Moreover, we further assume that $\norm{\nabla^{2}_{f}\log\left(\CP{y}{f(x;\theta)}\right)} \leq Q$ \ for some $Q > 0$.
\end{assump}

To accommodate non-smooth regularizers, we utilize the \emph{proximal operator}, defined as:
% \vspace{-2pt}
\begin{align}
    \text{prox}^{\brD}_{\lambda \pi}(v) \triangleq \underset{\theta}{\argmin} \ \left\{(v - \theta)^{\top}\brD(v - \theta) + 2\lambda \pi(\theta)\right\}.\notag
\end{align}
%\vspace{-4.5pt}
Our main lemma, \lemref{lem:Distance}, defines the approximate Fisher influence and bounds its discrepancy from $\hat{\theta}(w^n)$ for $w^n \in \mathcal{D}^{-i}$.

\begin{lem}
\label{lem:Distance}
Suppose \assref{ass:one_1}, \assref{ass:two_1}, \assref{ass:LipFeatures}, and \assref{ass:LossFunc_ExpFamily} hold. Define
$\bar{E}_n \triangleq \sum^{n}_{j=1}\|\nabla_{f}\log(P(y_j|f(x_j;\hat{\theta}(\boldsymbol{1}))))\|$, $\tilde{g}_{i} \triangleq \|\nabla_{\theta}\ell(z_i, \hat{\theta}(\boldsymbol{1}))\|$ and $g_i = \tilde{g}_i/n$. Then, the approximated Fisher influence function, defined via 
\begin{align}
    \label{eq:Distance_Approx_eq} 
    \tilde{\theta}(w^n) &= \mathrm{Prox}^{\brF(\mD, \hat{\theta}(\boldsymbol{1}))}_{\lambda\pi}(\tilde{\theta}^{\mathrm{IJ,AF}}(w^n)), \quad \text{for } w^n \in \mathcal{D}^{-i}.
\end{align}
satisfies
\begin{align}
    \label{eq:DistBound_Lem}
    \|\tilde{\theta}(w^n) - \hat{\theta}(w^n)\| \leq \frac{2Q C^{2}_f \tilde{g}_{i}}{n^{2}\mu^{2}} + \frac{M\tilde{g}_{i}^{2}}{n^2\mu^3} + \frac{2\tilde{g}_i\tilde{C}_f\bar{E}_n}{n\mu^{2}}, \quad \text{for } w^n \in \mathcal{D}^{-i}.
\end{align}
\end{lem}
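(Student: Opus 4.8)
The plan is as follows. Fix $i$ and $w^n\in\mathcal{D}^{-i}$, and abbreviate $\hat\theta_1\triangleq\hat\theta(\boldsymbol{1})$, $\brF\triangleq\brF(\mD,\hat\theta_1)$, $g(\theta)\triangleq\frac1n\sum_{j=1}^n\ell(z_j,\theta)$ and $g_{-i}(\theta)\triangleq\frac1n\sum_{j\ne i}\ell(z_j,\theta)=g(\theta)-\frac1n\ell(z_i,\theta)$, so that $b(\hat\theta_1,w^n)=-\frac1n\nabla_\theta\ell(z_i,\hat\theta_1)$ and $\hat\theta(w^n)=\argmin_\theta\{g_{-i}(\theta)+\lambda\pi(\theta)\}$. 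Unwinding the definition of $\mathrm{Prox}^{\brF}_{\lambda\pi}$ shows that $\tilde\theta(w^n)$ is the \emph{exact} minimizer of the Fisher-quadratic surrogate $\widehat L(\theta)\triangleq\frac12(\theta-\hat\theta_1)^{\top}\brF(\theta-\hat\theta_1)+b^{\top}(\theta-\hat\theta_1)+\lambda\pi(\theta)$, i.e.\ a proximal (Fisher–)Newton step on $g_{-i}+\lambda\pi$ from $\hat\theta_1$ with $\brF$ in place of the Hessian. I would bound the gap between the minimizers of $\widehat L$ and of $g_{-i}+\lambda\pi$ \emph{without ever differentiating} $\pi$, using strong convexity together with nonexpansiveness of the proximal map in the $\brF$-metric.

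First I would write both minimizers as proximal-gradient fixed points in $\|\cdot\|_{\brF}$. Since $\hat\theta_1$ minimizes $g+\lambda\pi$ we have $-\nabla g(\hat\theta_1)\in\lambda\partial\pi(\hat\theta_1)$, hence $\hat\theta_1=\mathrm{Prox}^{\brF}_{\lambda\pi}\!\big(\hat\theta_1-\brF^{-1}\nabla g(\hat\theta_1)\big)$, so that $\tilde\theta(w^n)=\mathrm{Prox}^{\brF}_{\lambda\pi}\!\big(\hat\theta_1-\brF^{-1}\nabla g_{-i}(\hat\theta_1)\big)$ — the stated pre-point $\hat\theta_1-\brF^{-1}b$ is exactly this once the baseline subgradient is accounted for (the two coincide when the empirical risk is stationary at $\hat\theta_1$, and otherwise the extra term $\brF^{-1}\nabla g(\hat\theta_1)$ is carried along harmlessly); likewise $\hat\theta(w^n)=\mathrm{Prox}^{\brF}_{\lambda\pi}\!\big(\hat\theta(w^n)-\brF^{-1}\nabla g_{-i}(\hat\theta(w^n))\big)$. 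Writing $\delta\triangleq\hat\theta_1-\hat\theta(w^n)$ and $\bar{\brH}_{-i}\triangleq\int_0^1\nabla^2 g_{-i}\big(\hat\theta(w^n)+t\delta\big)\,dt$, the exact mean-value identity $\nabla g_{-i}(\hat\theta_1)-\nabla g_{-i}(\hat\theta(w^n))=\bar{\brH}_{-i}\delta$ turns the difference of the two pre-points into $\brF^{-1}(\brF-\bar{\brH}_{-i})\delta$ — a vector of the form $\brF^{-1}(\text{small})$. This is the linchpin: combining $1$-Lipschitzness of $\mathrm{Prox}^{\brF}_{\lambda\pi}$ in $\|\cdot\|_{\brF}$ with $\|\brF^{-1}v\|_{\brF}=\sqrt{v^{\top}\brF^{-1}v}\le\|v\|/\sqrt{\lambda_{\min}(\brF)}$ yields, with no condition-number loss,
\[
\big\|\tilde\theta(w^n)-\hat\theta(w^n)\big\|\;\le\;\frac{\|\brF-\bar{\brH}_{-i}\|\;\|\delta\|}{\lambda_{\min}(\brF)}.
\]

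It then remains to control the three factors. For $\|\delta\|$: since $-\nabla g(\hat\theta_1)\in\lambda\partial\pi(\hat\theta_1)$, the vector $b=\nabla g_{-i}(\hat\theta_1)-\nabla g(\hat\theta_1)$ lies in $\partial(g_{-i}+\lambda\pi)(\hat\theta_1)$, and as $g_{-i}+\lambda\pi$ is $(n-1)\mu$-strongly convex with minimizer $\hat\theta(w^n)$ (\assref{ass:one_1}), this gives $\|\delta\|\le\|b\|/((n-1)\mu)\le 2\tilde{g}_i/(n^2\mu)$. For $\lambda_{\min}(\brF)$: from the decomposition $\brH(\hat\theta_1,\boldsymbol{1})=\brF+\brR$ recorded earlier, $\nabla^2 g\succeq n\mu\,\brI$ (\assref{ass:one_1}), and $\|\brR\|\le\tilde{C}_f\bar{E}_n/n$ (using $\|\nabla^2_\theta f\|\le\tilde{C}_f$, \assref{ass:LipFeatures}), one gets $\lambda_{\min}(\brF)\ge n\mu-\tilde{C}_f\bar{E}_n/n$, which I would take to be $\ge n\mu/2$ in the nondegenerate regime (otherwise the asserted bound is already vacuous). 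For $\|\brF-\bar{\brH}_{-i}\|$: split it by the triangle inequality as $\|\brF-\nabla^2 g(\hat\theta_1)\|+\|\nabla^2 g(\hat\theta_1)-\nabla^2 g_{-i}(\hat\theta_1)\|+\|\nabla^2 g_{-i}(\hat\theta_1)-\bar{\brH}_{-i}\|$; the first is $\|\brR\|\le\tilde{C}_f\bar{E}_n/n$; the second is $\frac1n\|\nabla^2_\theta\ell(z_i,\hat\theta_1)\|$, which by the exponential-family Hessian identity $\nabla^2_\theta\ell=(\nabla_\theta f)^{\top}(-\nabla^2_f\log P)(\nabla_\theta f)-(\nabla^2_\theta f)\,\nabla_f\log P$ and \assref{ass:LipFeatures}–\assref{ass:LossFunc_ExpFamily} ($\|\nabla_\theta f\|\le C_f$, $\|\nabla^2_f\log P\|\le Q$, $\|\nabla^2_\theta f\|\le\tilde{C}_f$) is at most $\frac1n(C_f^2 Q+\tilde{C}_f\bar{E}_n)$ — the origin of the $QC_f^2$ term; and the third is $\le\frac{(n-1)M}{2}\|\delta\|$ by the Lipschitz-Hessian \assref{ass:two_1}, which combined with the bound on $\|\delta\|$ produces the $M\tilde{g}_i^2$ term. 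Plugging these into the display and simplifying crudely ($n-1\ge n/2$ for $n\ge2$, bounding subleading terms by leading ones) gives \eqref{eq:DistBound_Lem} with a good deal of slack.

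The step I expect to be the real obstacle is the non-differentiable regularizer: everything must be routed through the proximal operator in the $\brF$-metric and its nonexpansiveness, invoking $\pi$ only through the inclusion $-\nabla g(\hat\theta_1)\in\lambda\partial\pi(\hat\theta_1)$ and never through any (nonexistent) continuity of $\partial\pi$; and the approximate update must be centered correctly — as $\mathrm{Prox}^{\brF}_{\lambda\pi}$ of $\hat\theta_1$ minus $\brF^{-1}$ times the \emph{leave-one-out} gradient — so that the baseline subgradient cancels and the pre-prox gap collapses to the clean form $\brF^{-1}(\brF-\bar{\brH}_{-i})\delta$ on which the entire bound hinges. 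A secondary issue is verifying that $\brF$ is positive definite with $\lambda_{\min}(\brF)$ comparable to $\lambda_{\min}(\nabla^2 g(\hat\theta_1))$, which relies on the Gauss–Newton remainder $\brR$ being controlled by $\bar{E}_n$ and $\tilde{C}_f$ — precisely the exponential-family ``FIM $\approx$ Hessian'' structure the paper already invoked. The remaining manipulations are routine Taylor estimates.
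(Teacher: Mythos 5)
Your argument is correct in substance and lands on the same three error sources as the paper, but it gets there by a genuinely different mechanism. The paper introduces three auxiliary objectives --- the true leave-one-out objective $\psi_1$, its Hessian-quadratic model $\psi_2$, and the Fisher-quadratic model $\psi_3$, all carrying the same $2\lambda\pi(\theta)$ term so that $\pi$ cancels in every difference --- and applies the optimizer-comparison lemma of \citet{Wilson_OptimizerComparison} to $\psi_1$ and $\psi_3$, splitting $\nabla(\psi_3-\psi_1)$ through $\psi_2$ by the triangle inequality. You instead write $\tilde\theta(w^n)$ and $\hat\theta(w^n)$ as fixed points of $\mathrm{Prox}^{\brF}_{\lambda\pi}$ and use nonexpansiveness of that map in the $\brF$-metric together with a mean-value Hessian $\bar{\brH}_{-i}$. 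These are two faces of the same monotonicity fact, and your three-way split of $\|\brF-\bar{\brH}_{-i}\|$ (Gauss--Newton remainder $\brR$ controlled by $\tilde C_f\bar E_n$, the dropped $i$th Hessian term controlled by $QC_f^2+\tilde C_f\bar E_n$, and the Lipschitz-Hessian drift controlled by $M\|\delta\|$) reproduces exactly the paper's bounds on $\nabla(\psi_3-\psi_2)$ and $\nabla(\psi_2-\psi_1)$. What your route buys is an explicit dependence on $\lambda_{\min}(\brF)$: the paper simply declares the FIM invertible and implicitly needs $\psi_3$ to be $\mu$-strongly convex, whereas you derive $\lambda_{\min}(\brF)\ge\lambda_{\min}(\nabla^2 L)-\|\brR\|$. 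The cost is the extra nondegeneracy condition $\tilde C_f\bar E_n\lesssim n\mu$ needed to recover the stated constants, which is not among the lemma's hypotheses; your claim that the bound is otherwise ``vacuous'' is not justified.

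The one step you should not wave away is the baseline gradient. The pre-prox point in \eqref{eq:Distance_Approx_eq} is $\hat{\theta}(\boldsymbol{1})-\brF^{-1}b$ with $b=\nabla L(\mD,\hat{\theta}(\boldsymbol{1}),\boldsymbol{1}^{n\backslash i})-\nabla L(\mD,\hat{\theta}(\boldsymbol{1}),\boldsymbol{1})$, so your pre-point difference is $\brF^{-1}(\brF-\bar{\brH}_{-i})\delta+\brF^{-1}\nabla L(\mD,\hat{\theta}(\boldsymbol{1}),\boldsymbol{1})$, and the second piece contributes $\|\nabla L(\mD,\hat{\theta}(\boldsymbol{1}),\boldsymbol{1})\|/\lambda_{\min}(\brF)$ to the final bound --- a term that does not vanish with $\tilde g_i$ and is absent from \eqref{eq:DistBound_Lem}. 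When $\lambda>0$ this gradient equals $-\lambda s$ for some $s\in\partial\pi(\hat{\theta}(\boldsymbol{1}))$ and is generically nonzero, so it is not ``carried along harmlessly.'' That said, the paper's own proof meets the identical term and disposes of it by asserting $\nabla L(\mD,\hat{\theta}(\boldsymbol{1}),\boldsymbol{1})=0$ in its bound on $\nabla(\psi_2-\psi_1)$, so you are not missing anything the paper supplies; but a clean write-up of your route should either state that stationarity assumption explicitly or retain the extra additive term.
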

% \vspace{-3pt}
% Our main lemma, \lemref{lem:Distance}, defines our approximation and bounds its discrepancy from $\hat{\theta}(\boldsymbol{1}^{n\backslash i})$:
% \begin{lem}
% \label{lem:Distance}
% Suppose \assref{ass:one_1}, \ref{ass:two_1}, 
% \ref{ass:LipFeatures} and \ref{ass:LossFunc_ExpFamily} hold. Let $\bar{E}_n \triangleq \sum^{n}_{j=1}\|\nabla_{f}\log(P(y_j|f(x_j;\hat{\theta}(\boldsymbol{1}))))\|$, $\tilde{g}_{i} \triangleq \|\nabla_{\theta}\ell(z_i, \hat{\theta}(\boldsymbol{1}))\|$ and $g_i = \tilde{g}_i/n$. Then 
%     \begin{align}
%          \label{eq:Distance_Approx_eq} 
%         \tilde{\theta}(\boldsymbol{1}^{n\backslash i}) &= \mathrm{Prox}^{\brF(\mD, \hat{\theta}(\boldsymbol{1}))}_{\lambda\pi}(\tilde{\theta}^{\mathrm{IJ,AF}}(\boldsymbol{1}^{n\backslash i}))
%     \end{align}
%     satisfies  
%     \begin{align}
%         \label{eq:DistBound_Lem}
%         \|\tilde{\theta}(\boldsymbol{1}^{n\backslash i}) - \hat{\theta}(\boldsymbol{1}^{n\backslash i})\| \leq \frac{2Q C^{2}_f \tilde{g}_{i}}{n^{2}\mu^{2}} + \frac{M\tilde{g}_{i}^{2}}{n^2\mu^3} + \frac{2\tilde{g}_i\tilde{C}_f\bar{E}_n}{n\mu^{2}}.
%     \end{align}
% \end{lem}
\noindent {\em Proof sketch.}
The proof separately bounds the distances between (i) $\hat{\theta}(w^n)$ and $\tilde{\theta}^{\text{IJ}}(w^n)$, and (ii) $\tilde{\theta}^{\text{IJ}}(w^n)$ and $\tilde{\theta}^{\mathrm{IJ,AF}}(w^n)$ for $w^n \in \mathcal{D}^{-i}$. The first bound follows from \citep[Lem.~1]{Wilson_OptimizerComparison}, while the second leverages the closeness of the Hessian and the FIM to show that the estimates remain close. Full proof is provided in \appref{app:Prof_dist}. 
   % The proof treats the distance between $\hat{\theta}(\boldsymbol{1}^{n\backslash i})$ to $\tilde{\theta}^{\text{IJ}}(\boldsymbol{1}^{n\backslash i})$ and $\tilde{\theta}^{\text{IJ}}(\boldsymbol{1}^{n\backslash i})$ to $\tilde{\theta}^{\mathrm{IJ,AF}}(\boldsymbol{1}^{n\backslash i})$ separately. The first bound is proved in \citep[Lem.~1]{Wilson_OptimizerComparison}, and the second makes use of the smoothness properties of the function to conclude the estimates are close. 
\qed

Similar to prior results \citep{Wilson_OptimizerComparison, WilsonSuriyakumar_UnlearningProximal, Sekhari_Unlearning_Neurips}, the first two terms in \eqref{eq:DistBound_Lem} depend on global problem constants (Lipschitz coefficients, strong convexity parameter, etc.) and the gradient at the $i$th training point. The third term depends on $\bar{E}_n$, which simplifies due to the exponential family structure of the loss and is given by (see \appref{app:exp_family_hessian_proof})
\begin{align}
    \|\nabla_{f}\log(P(y_i|f(x_i;\hat{\theta}(\boldsymbol{1}))))\| = \|t(y_i) - \Esub{\sy\sim P_{\sy|\sx = x_i;\hat{\theta}(\boldsymbol{1})}}{t(\sy)}\|.
\end{align}
Moreover, $\bar{E}_n$ can be shown to serve as an upper bound on the gradient at the optimum $\hat{\theta}(\boldsymbol{1})$ (see \appref{app:exp_family_loss}, \appref{s:Loss_ExpFamily}). In \appref{app:exp_family_loss}, we further demonstrate how this term relates to the absolute training error in classification and regression problems. Specifically, as training error decreases, this term also diminishes. In the extreme case where $\ell(z_i, \hat{\theta}(\boldsymbol{1})) = 0$ for all $i \in [n]$, this term is exactly zero (see \appref{app:Prof_dist}). Thus, we expect the excess term in \eqref{eq:DistBound_Lem} to be small whenever the model's training loss is small. For the remaining terms in \lemref{lem:Distance}, the worst-case discrepancy between $\tilde{\theta}(w^n)$ and $\hat{\theta}(w^n)$ for all $i \in [n]$ is controlled by $
g_{\max} \triangleq \max_{i\in[n]} g_{i}.$
By \assref{ass:boundedmoments} with $(s, r) = (0,1)$, $g_{\max}$ is finite.  

Next, we present our main theorem, which establishes error bounds for the approximated inference objective $T(\cdot)$.

\begin{thm}
    \label{thm:objective_bound}
    Suppose \assref{ass:one_1}, \assref{ass:two_1}, and \assref{ass:LipFeatures}-\assref{ass:LossFunc_ExpFamily} hold. 
    Let $\tilde{\theta}(w^n)$ be defined as in \eqref{eq:Distance_Approx_eq} for $w^n \in \mathcal{D}^{-i}$. Then, 
    \begin{align}
        \label{eq:Objective_bound_final}
        \|T(\hat{\theta}(w^n)) - T(\tilde{\theta}(w^n))\| 
        &\leq C_{T_1} \left( \frac{2Q C^{2}_f \tilde{g}_{i}}{n^{2}\mu^{2}} + \frac{M\tilde{g}_{i}^{2}}{n^{2}\mu^3} + \frac{2\tilde{g}_i\tilde{C}_f\bar{E}_n}{n\mu^{2}} \right) \\
        &\quad + \frac{1}{2} C_{T_2} \left( \frac{2Q C^{2}_f \tilde{g}_{i}}{n^{2}\mu^{2}} + \frac{M\tilde{g}_{i}^{2}}{n^{2}\mu^3} + \frac{2\tilde{g}_i\tilde{C}_f\bar{E}_n}{n\mu^{2}} \right)^{2}
    \end{align}
    and, 
    \begin{align}
        \label{eq:Taylor_Proof}
        \| T(\hat{\theta}(w^n)) &- T(\hat{\theta}(\boldsymbol{1})) 
        - \langle \nabla T(\hat{\theta}(\boldsymbol{1})), \tilde{\theta}(w^n) - \hat{\theta}(\boldsymbol{1}) \rangle \| \\
        &\leq C_{T_1} \left( \frac{2Q C^{2}_f \tilde{g}_{i}}{n^{2}\mu^{2}} + \frac{M\tilde{g}_{i}^{2}}{n^{2}\mu^3} + \frac{2\tilde{g}_i\tilde{C}_f\bar{E}_n}{n\mu^{2}} \right) + \frac{2C_{T_2}\tilde{g}^{2}_i}{n^{2}\mu^{2}}.
    \end{align}
\end{thm}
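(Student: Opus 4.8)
The plan is to derive both inequalities from \lemref{lem:Distance} together with the regularity of the inference objective supplied by \assref{ass:Lipschitz_T}. Write $\Delta_i$ for the right-hand side of \eqref{eq:DistBound_Lem}, so that $\|\tilde{\theta}(w^n) - \hat{\theta}(w^n)\| \le \Delta_i$ for $w^n \in \mathcal{D}^{-i}$. Note that \assref{ass:Lipschitz_T} provides the uniform operator-norm bounds $\|\nabla_\theta T\| \le C_{T_1}$ (from Lipschitzness of $T$ in $\theta$) and $\|\nabla^2_\theta T\| \le C_{T_2}$ (from Lipschitzness of $\nabla_\theta T$), valid at every $\theta$; since $T$ is vector-valued I would use the integral form of Taylor's theorem throughout, so that these operator-norm bounds apply directly to the linear and quadratic pieces.

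For \eqref{eq:Objective_bound_final}, I would expand $T(\tilde\theta(w^n))$ to second order about $\hat\theta(w^n)$ with integral remainder. Taking norms, the linear term is bounded by $C_{T_1}\|\tilde\theta(w^n) - \hat\theta(w^n)\|$ and the remainder by $\tfrac12 C_{T_2}\|\tilde\theta(w^n) - \hat\theta(w^n)\|^2$; substituting $\|\tilde\theta(w^n) - \hat\theta(w^n)\| \le \Delta_i$ from \lemref{lem:Distance} gives $\|T(\hat\theta(w^n)) - T(\tilde\theta(w^n))\| \le C_{T_1}\Delta_i + \tfrac12 C_{T_2}\Delta_i^2$, which is exactly \eqref{eq:Objective_bound_final}.

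For \eqref{eq:Taylor_Proof}, I would add and subtract $\langle \nabla T(\hat\theta(\boldsymbol{1})), \hat\theta(w^n) - \hat\theta(\boldsymbol{1})\rangle$ and apply the triangle inequality, bounding the left-hand side by
\begin{align}
    \big\| T(\hat\theta(w^n)) - T(\hat\theta(\boldsymbol{1})) - \langle \nabla T(\hat\theta(\boldsymbol{1})), \hat\theta(w^n) - \hat\theta(\boldsymbol{1})\rangle \big\| + \big\| \langle \nabla T(\hat\theta(\boldsymbol{1})), \hat\theta(w^n) - \tilde\theta(w^n)\rangle \big\| \triangleq (\mathrm{I}) + (\mathrm{II}).
\end{align}
Term $(\mathrm{I})$ is the first-order Taylor remainder of $T$ at $\hat\theta(\boldsymbol{1})$, so the $C_{T_2}$-Lipschitz-gradient bound gives $(\mathrm{I}) \le \tfrac12 C_{T_2}\|\hat\theta(w^n) - \hat\theta(\boldsymbol{1})\|^2$; combining this with the leave-one-out stability estimate $\|\hat\theta(w^n) - \hat\theta(\boldsymbol{1})\| \le 2\tilde g_i/(n\mu)$ — the same estimate already invoked in the proof of \lemref{lem:Distance}, following from \assref{ass:one_1} and \citep[Lem.~1]{Wilson_OptimizerComparison} — yields $(\mathrm{I}) \le 2 C_{T_2}\tilde g_i^2/(n^2\mu^2)$. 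Term $(\mathrm{II})$ is at most $C_{T_1}\|\hat\theta(w^n) - \tilde\theta(w^n)\| \le C_{T_1}\Delta_i$ by $\|\nabla T\| \le C_{T_1}$ and \lemref{lem:Distance}. Summing the two contributions reproduces \eqref{eq:Taylor_Proof}.

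I expect the bookkeeping here to be entirely routine; the only step that needs genuine care is the leave-one-out perturbation bound $\|\hat\theta(w^n) - \hat\theta(\boldsymbol{1})\| = O(\tilde g_i/(n\mu))$ with the precise constant, since reaching it requires the $\mu$-strong convexity of the normalized losses (\assref{ass:one_1}) to dominate the implicit-function perturbation created by dropping the $i$th term. This, however, is exactly the first ingredient already established inside the proof of \lemref{lem:Distance} via \citep[Lem.~1]{Wilson_OptimizerComparison}, so no new argument is needed, and the remainder of the proof is just two Taylor expansions plus the operator-norm bounds from \assref{ass:Lipschitz_T}.
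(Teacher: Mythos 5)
Your proposal is correct and follows essentially the same route as the paper: a second-order Taylor expansion of $T$ about $\hat{\theta}(w^n)$ combined with \lemref{lem:Distance} for \eqref{eq:Objective_bound_final}, and for \eqref{eq:Taylor_Proof} a split into the first-order remainder at $\hat{\theta}(\boldsymbol{1})$ (bounded via the leave-one-out stability estimate $\|\hat{\theta}(w^n)-\hat{\theta}(\boldsymbol{1})\|\le 2\tilde g_i/(n\mu)$) plus the cross term $\langle\nabla T(\hat{\theta}(\boldsymbol{1})),\hat{\theta}(w^n)-\tilde{\theta}(w^n)\rangle$, which is exactly the decomposition the paper obtains implicitly from its Taylor expansion. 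Your use of the integral form of the remainder for vector-valued $T$ is a minor (and slightly more careful) variant of the paper's mean-value form, but yields the identical constants.
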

\noindent {\em Proof sketch.}
    % Both bounds crucially rely on smoothness properties of $T$ (\assref{ass:Lipschitz_T}) in combination with \lemref{lem:Distance} and \lemref{lem:gen_strong_convex} from \appref{app:Closeness_Weighted}.  \qed
    Both bounds follow from the smoothness properties of \( T \) (\assref{ass:Lipschitz_T}), combined with \lemref{lem:Distance} and \lemref{lem:gen_strong_convex} from \appref{app:Closeness_Weighted}. Full proof is provided in \appref{app:InferenceObjective_Bound}.  \qed
%\end{proof}

\thmref{thm:objective_bound} enables a systematic derivation of theoretical guarantees for FIM-based influence approximations across various application areas. Moreover, as discussed in \citep[Sec.~3]{giordano2019swiss}, for weight vectors \( w^n = \mathcal{D}^{-i} \), we expect \( \lim_{n\to\infty} g_{\max} = 0 \). Consequently, whenever \( \bar{E}_n \to 0 \), \thmref{thm:objective_bound} ensures that \( T(\tilde{\theta}(w^n)) \) and the Taylor-series approximation (Equation~\eqref{eq:TaylorSeries_Objective} with \( w^n \in \mathcal{D}^{-i} \)) converge to \( T(\hat{\theta}(w^n)) \) for all \( i \in [n] \). However, as we demonstrate in \secref{s:Exp_Fairness} and \secref{s:Exp_CV}, in practice, \( \tilde{\theta}(w^n) \) is often a good approximation of \( \hat{\theta}(w^n) \) even when \( \bar{E}_n \) is finite.

%  Full proof details are contained in~\appref{app:InferenceObjective_Bound}. \thmref{thm:objective_bound} allows us to systematically obtain theoretical guarantees for FIM-based approximations of influence across several application areas, illustrated in the following three categories. 
% As discussed in \citep[Sec.~3]{giordano2019swiss}, for vectors $w^n = \boldsymbol{1}^{n\backslash i}$, we expect $\lim_{n\to\infty} g_{\max} = 0 $. Consequently, whenever $\bar{E}_n \to 0$ \thmref{thm:objective_bound} shows that $T(\tilde{\theta}(\boldsymbol{1}^{n\backslash i}))$ and the Taylor-series approximation (\eqref{eq:TaylorSeries_Objective} with $w^{n} = \boldsymbol{1}^{n\backslash i}$) converges to $T(\hat{\theta}(\boldsymbol{1}^{n\backslash i})), \forall i\in[n]$. 
% However, as we will show in \secref{s:Exp_Fairness} and \secref{s:Exp_CV}, in practice, $\tilde{\theta}(\boldsymbol{1}^{n\backslash i})$ {\em is often a good approximation to $\hat{\theta}(\boldsymbol{1}^{n\backslash i})$, even when $\bar{E}_n$  is finite}. 

% Next, we demonstrate that our framework offers guarantees in a unified way parallel to \propref{prop:CV_Guarantees_Wilson} - \propref{prop:attributions}, which use Hessian-based methods for several tasks outlined in \secref{s:ProblemDef}. 
Next, we show that our framework provides guarantees in a unified manner, analogous to \propref{prop:CV_Guarantees_Wilson}–\propref{prop:attributions}, which establish Hessian-based guarantees for several tasks outlined in \secref{s:ProblemDef}.

\begin{corol}[LOOCV]
\label{corol:LOOCV_FIM}
Suppose \assref{ass:one_1}, \assref{ass:two_1}, and \assref{ass:boundedmoments}-\assref{ass:LossFunc_ExpFamily} hold with $(s,r) = \{(0, 2), (0, 3), (1, 2), (1, 3), (1, 4)\}$. Let \( T(\theta, \boldsymbol{1}^{n\backslash i}) = \frac{1}{n} \ell(z_i, \theta) \triangleq T_i(\theta) \). When \( \tilde{\theta}(w^n) \) from \lemref{lem:Distance} is used as a plug-in estimate for \( w^n \in \mathcal{D}^{-i} \), the error in the approximate cross-validation estimate satisfies:
\begin{align}
    \left| \sum_{i=1}^n \left( T_i(\tilde{\theta}(\boldsymbol{1}^{n\backslash i})) - T_i(\hat{\theta}(\boldsymbol{1}^{n\backslash i})) \right) \right| 
    \leq O\left( \frac{MB_{03}}{\mu^3 n^2} + \frac{C^{2}_f B_{02}}{\mu^{2} n^{2}} + \frac{\tilde{C}_f \bar{E}_n B_{02}}{\mu^{2} n} \right).
\end{align}
\end{corol}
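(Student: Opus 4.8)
\noindent\emph{Proof proposal.} The plan is to replay the argument behind \propref{prop:CV_Guarantees_Wilson} (the Hessian-based LOOCV bound of \citep{Wilson_OptimizerComparison}) with two substitutions: the infinitesimal jackknife $\tilde{\theta}^{\mathrm{IJ}}(w^n)$ is replaced by the approximate Fisher influence $\tilde{\theta}(w^n)$ of \eqref{eq:Distance_Approx_eq}, and its IJ distance estimate by \lemref{lem:Distance}. Fix $i\in[n]$, set $\hat{\theta}\triangleq\hat{\theta}(\boldsymbol{1})$, and for $w^n\in\mathcal{D}^{-i}$ write $\hat{\theta}^{-i}\triangleq\hat{\theta}(w^n)$, $\tilde{\theta}^{-i}\triangleq\tilde{\theta}(w^n)$, and let $\delta_i$ be the right-hand side of \eqref{eq:DistBound_Lem}, so $\|\tilde{\theta}^{-i}-\hat{\theta}^{-i}\|\le\delta_i$. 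Since $T_i(\tilde{\theta}^{-i})-T_i(\hat{\theta}^{-i})=\tfrac1n\bigl(\ell(z_i,\tilde{\theta}^{-i})-\ell(z_i,\hat{\theta}^{-i})\bigr)$ and $\ell(z_i,\cdot)\in C^2$ by \assref{ass:two_1}, a second-order Taylor expansion about $\hat{\theta}^{-i}$ gives, for some $\zeta_i$ on $[\hat{\theta}^{-i},\tilde{\theta}^{-i}]$,
\begin{align*}
    T_i(\tilde{\theta}^{-i})-T_i(\hat{\theta}^{-i})
    = \tfrac1n\bigl\langle \nabla_\theta \ell(z_i,\hat{\theta}^{-i}),\, \tilde{\theta}^{-i}-\hat{\theta}^{-i}\bigr\rangle
    + \tfrac1{2n}(\tilde{\theta}^{-i}-\hat{\theta}^{-i})^{\top}\nabla^2_\theta \ell(z_i,\zeta_i)(\tilde{\theta}^{-i}-\hat{\theta}^{-i}).
\end{align*}

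\textbf{Bounding and summing.} For the linear term, Cauchy--Schwarz, $\|\tilde{\theta}^{-i}-\hat{\theta}^{-i}\|\le\delta_i$, and $\|\nabla_\theta\ell(z_i,\hat{\theta}^{-i})\|\le\tilde{g}_i+\text{Lip}(\nabla_\theta\ell(z_i,\cdot))\|\hat{\theta}^{-i}-\hat{\theta}\|$ with $\|\hat{\theta}^{-i}-\hat{\theta}\|=O(\tilde{g}_i/(n\mu))$ (the strong-convexity estimate of \assref{ass:one_1}, as in \citep{Wilson_OptimizerComparison}) bound it by $\tfrac1n\tilde{g}_i\delta_i$ plus a term with an extra $1/n$. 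For the quadratic term the key is to control $\|\nabla^2_\theta\ell(z_i,\zeta_i)\|$ through the exponential-family structure rather than through \assref{ass:two_1}: writing $\ell=-\log P(y|f(x;\theta))$ and using \assref{ass:LipFeatures} ($\|\nabla_\theta f\|\le C_f$, $\|\nabla^2_\theta f\|\le\tilde{C}_f$) and \assref{ass:LossFunc_ExpFamily} ($\|\nabla^2_f\log P(y|f)\|\le Q$) gives $\|\nabla^2_\theta\ell(z_i,\theta)\|\le QC_f^2+\tilde{C}_f\|\nabla_f\log P(y_i|f(x_i;\theta))\|\le QC_f^2+\tilde{C}_f(e_i+QC_f\|\theta-\hat{\theta}\|)$, where $e_i\triangleq\|\nabla_f\log P(y_i|f(x_i;\hat{\theta}))\|$ and $\sum_i e_i=\bar{E}_n$; this is far tighter than the generic $O(nM\|\theta-\hat{\theta}\|)$ and is what keeps the remainder $O\!\bigl(\tfrac1n(QC_f^2+\tilde{C}_f e_i)\delta_i^2\bigr)$ manageable. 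Summing over $i$, the leading piece $\tfrac1n\sum_i\tilde{g}_i\delta_i$ splits, according to the three terms of $\delta_i$, exactly into $O\!\bigl(\tfrac{C_f^2B_{02}}{\mu^2 n^2}\bigr)$, $O\!\bigl(\tfrac{MB_{03}}{\mu^3 n^2}\bigr)$ and $O\!\bigl(\tfrac{\tilde{C}_f\bar{E}_nB_{02}}{\mu^2 n}\bigr)$ --- the three displayed terms --- while the $\text{Lip}(\nabla_\theta\ell(z_i,\cdot))$ correction (finite by \assref{ass:boundedmoments} with $(s,r)\in\{(1,2),(1,3),(1,4)\}$) carries an extra $1/n$, and the quadratic remainder is at most a bounded multiple of the displayed terms since $\bar{E}_n/n=O(1)$ (which, like the new $B_{02}$ terms, is what forces the $(0,2)$ moment). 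All such terms are absorbed into $O(\cdot)$.

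\textbf{Main obstacle.} The hard part is exactly this last bookkeeping --- showing, uniformly in $i$, that nothing overtakes the three displayed terms. The subtlety is that $\bar{E}_n$ is not assumed $o(n)$, so (i) the Hessian must be bounded off $\hat{\theta}$ via $\|\nabla^2_f\log P\|\le Q$ rather than via $M$-Lipschitzness of $\nabla^2_\theta\ell$, since the latter would contribute an uncontrolled $\bar{E}_n^2$-type term in the remainder, and (ii) one must check that $\bar{E}_n$ never appears with power $>1$ in a surviving estimate --- any candidate $\bar{E}_n^2$ factor arises from a product of two $\delta_i$'s, or from an $e_i\le\bar{E}_n$ factor times a $\delta_i$ that already carries $\bar{E}_n$, and in each case the extra $1/n$ or the bound $\bar{E}_n/n=O(1)$ keeps it below $\tilde{C}_f\bar{E}_nB_{02}/(\mu^2 n)$. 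An essentially equivalent route applies \propref{prop:CV_Guarantees_Wilson} to the pure-IJ error $\sum_i\bigl(T_i(\tilde{\theta}^{\mathrm{IJ}}(w^n))-T_i(\hat{\theta}^{-i})\bigr)$ and then bounds the correction $\sum_i\bigl(T_i(\tilde{\theta}^{-i})-T_i(\tilde{\theta}^{\mathrm{IJ}}(w^n))\bigr)$ via $\|\tilde{\theta}^{-i}-\tilde{\theta}^{\mathrm{IJ}}(w^n)\|=O(\delta_i)$ from \lemref{lem:Distance} (triangle inequality through $\hat{\theta}^{-i}$) and the same Taylor step; its only overhead is re-absorbing the $B_{12}/(\mu^2 n^2)$ term inherited from \propref{prop:CV_Guarantees_Wilson} into the displayed ones using $\text{Lip}(\nabla_\theta\ell(z_i,\cdot))=O(QC_f^2+\tilde{C}_f\bar{E}_n)$ near $\hat{\theta}$.
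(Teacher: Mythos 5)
Your proposal is correct and follows essentially the same route as the paper's proof: a second-order Taylor expansion of $T_i$ about $\hat{\theta}(\boldsymbol{1}^{n\backslash i})$, Cauchy--Schwarz with the distance bound $\delta_i$ from \lemref{lem:Distance}, a further expansion of $\nabla_\theta\ell(z_i,\hat{\theta}(\boldsymbol{1}^{n\backslash i}))$ around $\hat{\theta}(\boldsymbol{1})$ via \lemref{lem:gen_strong_convex}, and then summing over $i$ so that $\tfrac1n\sum_i\tilde{g}_i\delta_i$ produces exactly the three displayed moment terms while all corrections carry extra factors of $1/n$ controlled by the listed $(s,r)$ moments. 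The only (immaterial) deviation is that you bound the second-order remainder's Hessian via the exponential-family structure ($QC_f^2+\tilde{C}_f\|\nabla_f\log P\|$) whereas the paper uses $\mathrm{Lip}(\nabla_\theta\ell(z_i,\cdot))$ directly; both yield the same absorbed higher-order terms.
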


 %%%%%%%%%%%%%%%%%%%%%%%%%%%%%%%%%%%%%%%%%%%%%%%%%%%%%%%%%%%%
\begin{corol}[Machine Unlearning]
\label{corol:unlearning_FIM}
Suppose \assref{ass:one_1}, \assref{ass:two_1}, \assref{ass:LipFeatures}, and \assref{ass:LossFunc_ExpFamily} hold. Assume further that \( \tilde{g}_i \leq G \) for all \( i \in [n] \). Then, for the inference objective \( T(\theta) = \theta \), we have:
\begin{align}
    \|T(\tilde{\theta}(w^n)) - T(\hat{\theta}(w^n))\| 
    \leq \frac{2Q C^{2}_f G}{n^{2}\mu^{2}} + \frac{MG^{2}}{n^{2}\mu^3} + \frac{2G\tilde{C}_f\bar{E}_n}{n\mu^{2}}, \quad \text{for } w^n \in \mathcal{D}^{-i}.
\end{align}
Furthermore, the algorithm returning \( \tilde{\theta}(w^n) + \zeta \) satisfies \( (\eps, \delta) \)-unlearning, where \( \zeta \sim \pazocal{N}(0 ,c\brI) \) and:
\begin{align}
    c = \left( \frac{2Q C^{2}_f G}{n^{2}\mu^{2}} + \frac{MG^{2}}{n^{2}\mu^3} + \frac{2G\tilde{C}_f\bar{E}_n}{n\mu^2} \right) \frac{\sqrt{2\log(5/4\delta)}}{\eps}.
\end{align}
\end{corol}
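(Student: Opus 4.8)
\noindent\emph{Proof proposal.}
The plan is to handle the two assertions separately. The deterministic $\ell_2$ bound will come directly out of \lemref{lem:Distance} by specializing $T$ to the identity and using the uniform gradient bound $\tilde{g}_i \le G$; the $(\eps,\delta)$-unlearning guarantee will then follow from a Gaussian-mechanism argument with the same structure as the one behind \propref{prop:unlearning_wilson}, but with the approximate-Fisher discrepancy of \lemref{lem:Distance} in place of the Hessian-based one.

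For the first assertion, I would note that since $T(\theta) = \theta$ is the identity map, $\|T(\tilde{\theta}(w^n)) - T(\hat{\theta}(w^n))\| = \|\tilde{\theta}(w^n) - \hat{\theta}(w^n)\|$, and that the hypotheses assumed here — \assref{ass:one_1}, \assref{ass:two_1}, \assref{ass:LipFeatures}, \assref{ass:LossFunc_ExpFamily} — are exactly those of \lemref{lem:Distance}. Hence \eqref{eq:DistBound_Lem} applies verbatim. Each of its three right-hand-side terms is non-decreasing in $\tilde{g}_i \ge 0$ (it appears to the first or second power with a positive coefficient), so substituting the uniform bound $\tilde{g}_i \le G$ can only enlarge the right-hand side, yielding $\Delta \triangleq \frac{2Q C^{2}_f G}{n^2\mu^2} + \frac{MG^{2}}{n^2\mu^3} + \frac{2G\tilde{C}_f\bar{E}_n}{n\mu^2}$ as the bound. (The quantity $\bar{E}_n$ is automatically finite, being a finite sum of gradient norms that are well defined under \assref{ass:LossFunc_ExpFamily}.)

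For the unlearning assertion, I would invoke the $(\eps,\delta)$-unlearning criterion of \citep{Sekhari_Unlearning_Neurips}: the distribution of the unlearning procedure's output must be $(\eps,\delta)$-indistinguishable from the distribution obtained by retraining on $\mathcal{D}^{-i}$ followed by the same post-processing. The point is that $\tilde{\theta}(w^n)$ of \eqref{eq:Distance_Approx_eq} is computed entirely from training-time state — $\hat{\theta}(\boldsymbol{1})$, the approximate FIM $\brF(\mathcal{D}, \hat{\theta}(\boldsymbol{1}))$, and the gradient at $z_i$ — together with $w^n$, via a single proximal step and no retraining, so releasing $\tilde{\theta}(w^n) + \zeta$ is a legitimate unlearning output, while releasing $\hat{\theta}(w^n) + \zeta$ is (noised) exact retraining. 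Since the first assertion gives $\|\tilde{\theta}(w^n) - \hat{\theta}(w^n)\| \le \Delta$, the standard Gaussian mechanism makes the additive-noise channels at these two centers $(\eps,\delta)$-indistinguishable once the noise parameter is calibrated to the sensitivity $\Delta$; taking $c = \Delta \cdot \sqrt{2\log(5/4\delta)}/\eps$ — the same calibration used in \propref{prop:unlearning_wilson}, now with the FIM-based $\Delta$ — closes the argument.

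I expect the only genuinely delicate part to be the unlearning bookkeeping: checking that $\tilde{\theta}(w^n)$ is a function of the released learning-time state alone (not of a restored copy of $z_i$), and that the noise introduced at training time and at unlearning time enter the two compared distributions symmetrically, so the Gaussian-mechanism comparison is between the correct pair. This goes through exactly as in \citep{Sekhari_Unlearning_Neurips, WilsonSuriyakumar_UnlearningProximal}, and everything else reduces to the monotonicity-and-substitution step above, so I do not anticipate a substantive obstacle.
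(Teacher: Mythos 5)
Your proposal is correct and follows essentially the same route as the paper: the paper's proof likewise obtains the deterministic bound by substituting $\tilde{g}_i \le G$ into \eqref{eq:DistBound_Lem} with $T$ the identity, and then applies the Gaussian mechanism (as in \citep{WilsonSuriyakumar_UnlearningProximal} and \citep[App.~A]{Dwork_AlgFoundationd_DP}) calibrated to that sensitivity to get the $(\eps,\delta)$-unlearning guarantee with the stated $c$. Your write-up simply spells out the monotonicity-and-substitution step and the unlearning bookkeeping that the paper leaves implicit.
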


%%%%%%%%%%%%%%%%%%%%%%%%%%%%%%%%%%%%%%%%%%%%%%%%%%%%%%%%%%%%
\begin{corol}[Data Attribution] 
\label{corol:Attributed_FIM}
    Suppose the assumptions of \thmref{thm:objective_bound} hold, $T(\theta) = \ell(z_{\mathrm{test}}, \theta) - \ell(z_{\mathrm{test}}, \hat\theta(\boldsymbol{1}))$ and $C_{\ell} \triangleq \underset{i\in[n]}{\max} \ \tilde{g}_i$. Then,  
    % \vspace{-8pt}
    \begin{align}
        &|T(\hat{\theta}(\boldsymbol{1}^{n\backslash i})) - T(\tilde{\theta}(\boldsymbol{1}^{n\backslash i}))| \leq O\left(\frac{C^2_f C_{T_1}C_{\ell}}{n^2\mu^2} + \frac{M C_{T_1}C^{2}_{\ell}}{n^2\mu^3} + \frac{C_{T_1}\tilde{C}_f\bar{E}_nC_{\ell}}{n\mu^2}\right), \\ 
           &| T(\hat{\theta}(\boldsymbol{1}^{n\backslash i})) - T(\hat{\theta}(\boldsymbol{1})) - \langle \nabla T(\hat{\theta}(\boldsymbol{1})),\tilde{\theta}(\boldsymbol{1}^{n\backslash i}) -\hat{\theta}(\boldsymbol{1})\rangle | \notag\\ 
         &\qquad\qquad \leq O\left(\frac{C^2_f C_{T_1}C_{\ell}}{n^2\mu^2}  + \frac{M C_{T_1}C^{2}_{\ell}}{n^2\mu^3}  + \frac{C_{T_2}C^{2}_{\ell}}{n^2\mu^2}  + \frac{C_{T_1}\tilde{C}_f\bar{E}_nC_{\ell}}{n\mu^2}\right)
    \end{align}
\end{corol}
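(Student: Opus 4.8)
The plan is to obtain both inequalities as a direct specialization of \thmref{thm:objective_bound} to the inference objective $T(\theta) = \ell(z_{\mathrm{test}}, \theta) - \ell(z_{\mathrm{test}}, \hat{\theta}(\boldsymbol{1}))$, followed by a worst-case replacement of $\tilde g_i$ by $C_\ell$. First I would check that this $T$ satisfies the hypotheses of \thmref{thm:objective_bound}: since an additive constant alters neither differentiability nor the Lipschitz constant of a function or of its gradient, $T$ is twice differentiable, $C_{T_1}$-Lipschitz, and has a $C_{T_2}$-Lipschitz gradient precisely when $\ell(z_{\mathrm{test}},\cdot) = -\log P(y_{\mathrm{test}}\mid f(x_{\mathrm{test}};\cdot))$ is---which is exactly what \assref{ass:Lipschitz_T}, instantiated at this $T$, asserts (consistent with $f$ twice differentiable via \assref{ass:LipFeatures} and $-\log P(y\mid f)$ smooth via \assref{ass:LossFunc_ExpFamily}). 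Hence \thmref{thm:objective_bound} applies verbatim for every $w^n\in\mathcal{D}^{-i}$, with $C_\ell = \max_{i\in[n]}\tilde g_i$ finite.

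For the first claimed bound I would take the first inequality of \thmref{thm:objective_bound}, which reads $|T(\hat{\theta}(w^n))-T(\tilde{\theta}(w^n))| \le C_{T_1}\Delta_i + \tfrac{1}{2} C_{T_2}\Delta_i^2$ with $\Delta_i \triangleq \tfrac{2QC_f^2\tilde g_i}{n^2\mu^2}+\tfrac{M\tilde g_i^2}{n^2\mu^3}+\tfrac{2\tilde g_i\tilde C_f\bar E_n}{n\mu^2}$, and substitute $\tilde g_i\le C_\ell$ (legitimate termwise, since $\Delta_i$ is monotone increasing in $\tilde g_i$). Absorbing the numeric constants $2$ and $Q$ into $O(\cdot)$ turns $C_{T_1}\Delta_i$ into the three displayed terms $\tfrac{C_f^2C_{T_1}C_\ell}{n^2\mu^2}$, $\tfrac{MC_{T_1}C_\ell^2}{n^2\mu^3}$, $\tfrac{C_{T_1}\tilde C_f\bar E_nC_\ell}{n\mu^2}$; the remaining summand $\tfrac{1}{2}C_{T_2}\Delta_i^2$ is quadratic in the small quantity $\Delta_i$ (every monomial of $\Delta_i^2$ is a product of two of the factors $\{n^{-2},n^{-2},n^{-1}\}$ times bounded quantities), hence of higher order and absorbed into the $O(\cdot)$---for instance, in the leave-one-out regime where $g_{\max}=C_\ell/n\to 0$ (cf.\ the discussion after \thmref{thm:objective_bound}), one has $\Delta_i\to 0$ whenever $\bar E_n$ stays bounded, so $\Delta_i^2 = o(\Delta_i)$. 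This yields the first bound.

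The second claimed bound follows the same recipe applied to the Taylor-type inequality of \thmref{thm:objective_bound}, $\|T(\hat{\theta}(w^n))-T(\hat{\theta}(\boldsymbol{1}))-\langle\nabla T(\hat{\theta}(\boldsymbol{1})),\tilde{\theta}(w^n)-\hat{\theta}(\boldsymbol{1})\rangle\| \le C_{T_1}\Delta_i + \tfrac{2C_{T_2}\tilde g_i^2}{n^2\mu^2}$: after substituting $\tilde g_i\le C_\ell$ and absorbing constants, the first piece reproduces the same three terms while the extra summand becomes $O\!\big(\tfrac{C_{T_2}C_\ell^2}{n^2\mu^2}\big)$, which is the fourth term of the claim; here nothing is dropped. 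I would also remark that, by construction, $T(\hat{\theta}(\boldsymbol{1}))=0$, so this second bound is really a statement on how well the linearization around $\hat{\theta}(\boldsymbol{1})$ recovers the true influence $\ell(z_{\mathrm{test}},\hat{\theta}(w^n))-\ell(z_{\mathrm{test}},\hat{\theta}(\boldsymbol{1}))$.

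There is no substantive obstacle: each estimate is a direct quotation of \thmref{thm:objective_bound}, and the only care needed is bookkeeping---justifying the termwise monotone replacement $\tilde g_i\mapsto C_\ell$, and verifying that the quadratic remainder in the first bound is genuinely of lower order (immediate in the regime of interest, where the approximation is accurate and $\Delta_i$ is small). Once that is in place the corollary is immediate, in complete parallel with the Hessian-based result \propref{prop:attributions}.
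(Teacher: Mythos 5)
Your proposal is correct and follows essentially the same route as the paper, whose proof is simply to substitute $\tilde g_i = \|\nabla_\theta \ell(z_i,\hat\theta(\boldsymbol{1}))\|$ into \eqref{eq:Objective_bound_final} and \eqref{eq:Taylor_Proof} and maximize over $i$ (replacing $\tilde g_i$ by $C_\ell$). Your additional care in verifying that $T$ inherits \assref{ass:Lipschitz_T} from $\ell(z_{\mathrm{test}},\cdot)$ and that the quadratic remainder $\tfrac{1}{2}C_{T_2}\Delta_i^2$ in the first bound is genuinely higher order is a welcome (and slightly more explicit) treatment of bookkeeping the paper leaves implicit.
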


The proofs for these corollaries rely on applying \thmref{thm:objective_bound} for the settings described in \propref{prop:CV_Guarantees_Wilson} - \propref{prop:attributions} (see \appref{app:corol_proofs}). To further demonstrate the generality of our approach, we provide guarantees for the fairness assessment task described in \secref{s:ProblemDef}, for which currently there is no theoretical analysis. The proof is in \appref{app:corol_Fairness}. 
\begin{corol}[Fairness Evaluation]
    \label{corol:fairness}
    Suppose \assref{ass:one_1}, \assref{ass:two_1}, and \assref{ass:LipFeatures} -\assref{ass:LossFunc_ExpFamily} hold. If $T$ be given by \eqref{eq:DP_Def} and $C_{\ell} \triangleq \underset{i\in[n]}{\max} \ \tilde{g}_i$. Then, 
    \vspace{0pt}
    \begin{align}
        &|T(\hat{\theta}(\boldsymbol{1}^{n\backslash i})) - T(\tilde{\theta}(\boldsymbol{1}^{n\backslash i}))| \leq O\left(\frac{C^3_fC_{\ell}}{n^2\mu^2} + \frac{M C_fC^{2}_{\ell}}{n^2\mu^3} + \frac{C_f\tilde{C}_fC_{\ell}\bar{E}_n}{n\mu^2}\right).
    \end{align}
\end{corol}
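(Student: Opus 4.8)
The plan is to reduce the fairness metric $T$ defined in \eqref{eq:DP_Def} to the general machinery of \thmref{thm:objective_bound} by verifying that \assref{ass:Lipschitz_T} holds with explicit constants $C_{T_1}, C_{T_2}$ expressed in terms of $C_f$ and $\tilde C_f$. Once such constants are identified, the bound follows by substituting them into \eqref{eq:Objective_bound_final} and retaining only the dominant (first-order) contributions, which yields a right-hand side of the claimed form $O\!\left(C^3_f C_\ell / n^2\mu^2 + M C_f C^2_\ell / n^2\mu^3 + C_f \tilde C_f C_\ell \bar E_n / n\mu^2\right)$; the squared term in \eqref{eq:Objective_bound_final} contributes only higher-order corrections and is absorbed into the $O(\cdot)$.

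First I would rewrite $T(\hat\theta(w^n)) = \bigl|\,\mathbb{E}_{\hat P(\sx\mid\mathsf s=0)}[f(\sx;\hat\theta(w^n))] - \mathbb{E}_{\hat P(\sx\mid\mathsf s=1)}[f(\sx;\hat\theta(w^n))]\,\bigr|$ as $|\phi_0(\theta) - \phi_1(\theta)|$ with $\phi_s(\theta) \triangleq \mathbb{E}_{\hat P(\sx\mid\mathsf s=s)}[f(\sx;\theta)]$. Since each $\phi_s$ is a finite convex combination (an average over the empirical conditional distribution) of the maps $\theta \mapsto f(x_i;\theta)$, and each of these is $C_f$-Lipschitz with $\tilde C_f$-Lipschitz gradient by \assref{ass:LipFeatures}, the average $\phi_s$ inherits the same Lipschitz and Lipschitz-gradient constants. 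Composition with the absolute value (which is $1$-Lipschitz) then gives $\mathrm{Lip}(T) \leq 2C_f$, so one may take $C_{T_1} = 2C_f$. For the gradient-Lipschitz constant, away from the (measure-zero / generically avoidable) kink of $|\cdot|$ we have $\nabla_\theta T = \mathrm{sgn}(\phi_0 - \phi_1)\,(\nabla\phi_0 - \nabla\phi_1)$, whose Lipschitz constant is bounded by $\mathrm{Lip}(\nabla\phi_0) + \mathrm{Lip}(\nabla\phi_1) \leq 2\tilde C_f$; handling the nonsmooth point of the absolute value (where $\phi_0 = \phi_1$) requires treating $T$ as the max of $\phi_0-\phi_1$ and $\phi_1-\phi_0$ and invoking the smoothness-on-each-piece argument used for the attribution corollary, so that effectively $C_{T_2} = 2\tilde C_f$ works for the Taylor-type bound; for the plug-in bound \eqref{eq:Objective_bound_final} only $C_{T_1}$ is needed and this subtlety does not arise.

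Substituting $C_{T_1} = 2C_f$ and $C_{T_2} = 2\tilde C_f$ into \eqref{eq:Objective_bound_final} of \thmref{thm:objective_bound}, with $\tilde g_i \leq C_\ell$, gives
\begin{align}
    |T(\hat\theta(\boldsymbol 1^{n\backslash i})) - T(\tilde\theta(\boldsymbol 1^{n\backslash i}))|
    &\leq 2C_f\!\left(\frac{2QC_f^2 C_\ell}{n^2\mu^2} + \frac{MC_\ell^2}{n^2\mu^3} + \frac{2C_\ell\tilde C_f\bar E_n}{n\mu^2}\right) \notag\\
    &\quad + \tilde C_f\!\left(\frac{2QC_f^2 C_\ell}{n^2\mu^2} + \frac{MC_\ell^2}{n^2\mu^3} + \frac{2C_\ell\tilde C_f\bar E_n}{n\mu^2}\right)^{\!2},
\end{align}
and collecting the leading terms (the quadratic term is of order $n^{-2}$ or smaller relative to the first bracket and is absorbed) yields exactly the stated bound, treating $Q$ as an absolute constant inside $O(\cdot)$. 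The main obstacle I anticipate is the nonsmoothness of the absolute value in $T$: strictly speaking \assref{ass:Lipschitz_T} demands $T$ be twice differentiable, so one must either argue that the event $\phi_0(\hat\theta(\boldsymbol 1)) = \phi_1(\hat\theta(\boldsymbol 1))$ is non-generic, or—more robustly—decompose $T = \max(\phi_0 - \phi_1,\, \phi_1 - \phi_0)$ and apply \thmref{thm:objective_bound} to each branch separately (each branch being genuinely smooth with the constants above) and then take the max, noting that $|\max(a,b) - \max(a',b')| \leq \max(|a-a'|,|b-b'|)$. This piecewise argument is exactly the one already used implicitly for the data-attribution corollary, so it introduces no genuinely new difficulty, only bookkeeping.
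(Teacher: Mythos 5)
Your proposal follows essentially the same route as the paper: identify $C_{T_1} = 2C_f$ from the linearity of expectation and the triangle inequality applied to \eqref{eq:DP_Def}, then substitute $\tilde{g}_i$ into \eqref{eq:Objective_bound_final} and keep the leading-order terms. Your additional care about the kink of the absolute value (and the explicit $C_{T_2} = 2\tilde{C}_f$ on each smooth branch) is a detail the paper's own proof glosses over, but it does not change the argument.
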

To the best of our knowledge, \corolref{corol:LOOCV_FIM} - \corolref{corol:fairness} provide the first theoretical guarantees for using the FIM in influence assessment tasks, offering a novel method with rigorous effectiveness proof. Additionally, our framework easily extends to other problems in machine learning and statistics beyond the specific applications discussed (e.g., data dropping \citep{Broderick_Robustness}).

%%%%%%%%%%%%%%%%%%%%%%%%%%%%%%%%%%%%%%%%%%%%%%%%%%%%%%%%%%%%%%%%%%%%%%%%%%%%%%
% \vspace{-4.5pt}
% \subsection{The Non-Convex Setting}
% \label{s:NonConvex}
% \vspace{-4.5pt}
\begin{remark}[The Non-Convex Setting]
    While many theoretical analyses of influence (e.g., \citep{KohLiang_Influence_DL}) assume a convex, differentiable loss, these assumptions often do not hold in practice. Nonetheless, influence functions remain widely used for influence assessment \citep{KohLiang_Influence_DL, han2020explaining}. Recent work \citep{Bae_PBRF_Influence} shows that a variant of Fisher influence corresponds to the minimizer of an approximation to the Proximal Bregman Response Function (PBRF). This finding helps explain the empirical usefulness of influence functions in more complex domains and illustrates how analyses grounded in convex assumptions can still offer valuable insights for non-convex scenarios. Our probabilistic framework extends these results by introducing $\tilde{\theta}(\boldsymbol{1}^{n\backslash i})$, which depends on $\tilde{\theta}^{\mathrm{IJ,AF}}(\boldsymbol{1}^{n\backslash i})$ and can be computed efficiently. It further provides a theoretical justification for using the AFIF by establishing bounds on $\|\tilde{\theta}^{\mathrm{IJ}}(\boldsymbol{1}^{n\backslash i}) - \tilde{\theta}(\boldsymbol{1}^{n\backslash i})\|$ under mild assumptions likely to hold locally (see \appref{app:dist_bound_IJ_F}). These results support adopting AFIF over traditional Hessian-based methods. Moreover, while \citep{Bae_PBRF_Influence} focuses on $\pi(\theta) = \|\theta\|^2$, our framework readily accommodates non-differentiable regularizers. Since training models with general regularization (beyond $L_2$) is an increasingly popular method for adding robustness, feature sparsity, and interoperability to models (see \cite{lemhadri2021lassonet, li2021ell} and references therein), our approach gives a state-of-the-art tool for quantifying influence in these cases.
\end{remark}
\section{Experiments}
\label{s:Exps}
% \vspace{-4.5pt}
We evaluate the utility of approximate Fisher influence through experiments on three different tasks. Both Fisher-based and Hessian-based influence functions are implemented within the same codebase, differing only in the automatic differentiation components used to compute \eqref{eq:Esimtates_FIM} and \eqref{eq:Stochastic_Hess}. Detailed experimental procedures are provided in \appref{app:ExpDetails}. Our objective is to demonstrate the advantages of AFIF across different tasks by showing that, across a set of different classical influence measurement settings, it:
\begin{enumerate}
    \item Achieves similar utility as the Hessian-based techniques
    \item Has improved computational efficiency relative to the Hessian-based techniques. 
\end{enumerate}
We will further demonstrate the usefulness of our technique in a setting that involves a non-differentiable regularizer, demonstrating a novel method for measuring influence in these cases. The codebase to reproduce our experimental results is provided in \url{https://github.com/omrilev1/Approximate-Fisher-Influence}. 
%%%%%%%%%%%%%%%%%%%%%%%%%%%%%%%%%%%%%%%%%%%%%%%%%%%%%%%%%%%%%%%%%%%%%%%%%%%%
% \vspace{-6pt}
\subsection{Fairness and Unlearning}
\label{s:Exp_Fairness} 

\begin{figure}[t]
    \centering
    % Subfigure for Adult
    \begin{subfigure}[t]{0.32\textwidth}
        \centering
        \includegraphics[width=\textwidth]{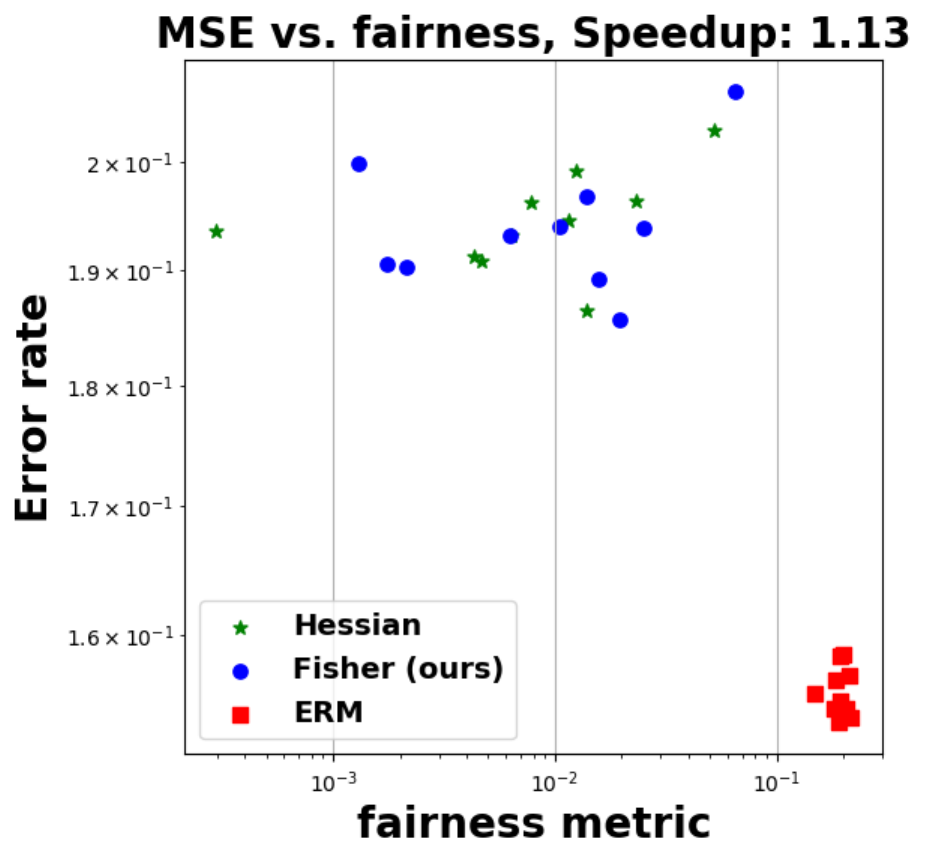}
        \caption{Adult}
        \label{fig:adult}
    \end{subfigure}
    \hfill
    % Subfigure for Insurance
    \begin{subfigure}[t]{0.32\textwidth}
        \centering
        \includegraphics[width=\textwidth]{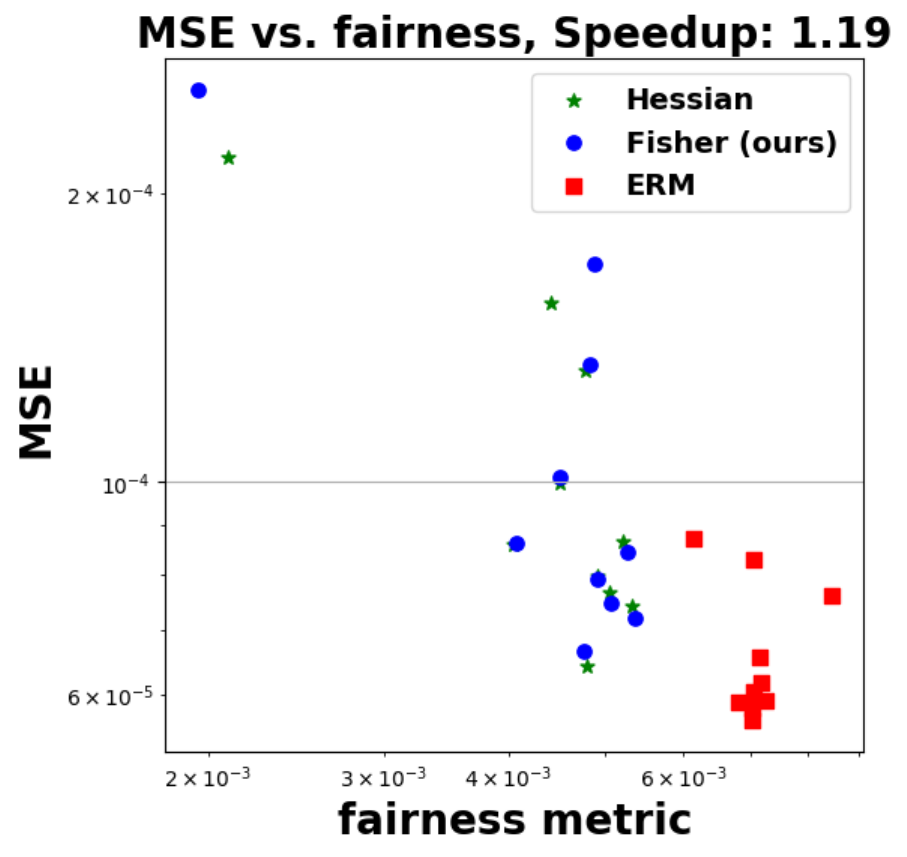}
        \caption{Insurance}
        \label{fig:insurance}
    \end{subfigure}
    \hfill
    % Subfigure for Crime
    \begin{subfigure}[t]{0.32\textwidth}
        \centering
        \includegraphics[width=\textwidth]{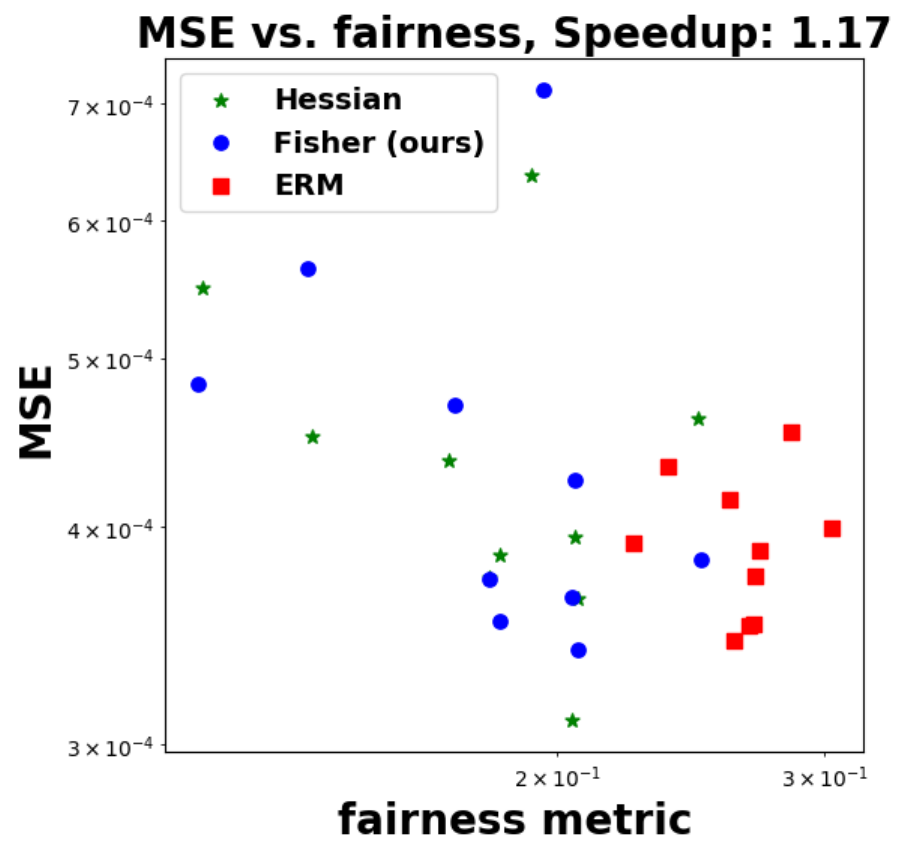}
        \caption{Crime}
        \label{fig:crime}
    \end{subfigure}
    
    \caption{Model performance versus fairness metric for Fisher-based influence, Hessian-based influence, and the ERM solution from \eqref{eq:WERM_Def}, evaluated on the Adult, Crime, and Insurance datasets using a two-layer classifier. Results are averaged over ten independent experiments. All cases demonstrate that the Fisher-based computations are faster than the Hessian-based computations yet still yield similar overall utility.}
    \label{fig:Fairness}
\end{figure}

% \vspace{-4.5pt}
In this set of experiments, we aim to identify and unlearn training points that negatively impact model fairness. To that end, we use three classical datasets from the fairness literature \citep{shah2022selective, AbhinShah_Fairness, sattigeri2022fair}: Adult, Crime, and Insurance datasets. For the Adult dataset, the goal is to classify whether a person's income is greater than $50,000\$$, while keeping the classifier independent of the person's sex. For the crime dataset, the goal is to predict crime per population (which is a continuous variable), while keeping the regressor independent of race. For the insurance dataset, the goal is to predict medical expenses and make the regressor independent of sex. Fairness is assessed using the demographic parity metric for the adult dataset and using the $\chi^2$ divergence for the crime and the insurance datasets. Our models are two-layer networks with \texttt{SeLU} activations, similar to the architectures from \cite{ghosh2023influence, AbhinShah_Fairness, sattigeri2022fair}. We used \eqref{eq:DP_Def} and \eqref{eq:chi2_def} as our inference objectives and calculated the influence for each training sample using the plug-in estimator from \thmref{thm:objective_bound}. We then unlearned all training samples with positive influence by applying \eqref{eq:Distance_Approx_eq}. Full experimental details are provided in \appref{app:ExpDetails}.

We measured the time required to compute influences and unlearn samples using both Fisher-based and Hessian-based calculations, reporting the model's performance (measured via classification accuracy for the adult dataset and MSE for the crime and insurance datasets) and estimated fairness (measured either via DP or via the $\chi^2$ measure) after data removal. As shown in \figref{fig:Fairness}, both methods perform similarly, significantly improving fairness score without substantial performance loss, matching results from \cite{sattigeri2022fair, AbhinShah_Fairness}. However, the Fisher-based results are consistently faster relative to the Hessian-based approaches, demonstrating the computational efficiency of the Fisher-based influence. We give details of additional experiments in \appref{app:AdditionalExps_Fairness}, which show that the Hessian-based influence fails to improve the model's fairness and to maintain the same performance for different choices of hyperparameters, demonstrating potential instabilities without proper hyperparameter tuning. Additionally, the error rates and MSE of the ERM minimizers are strictly positive, corresponding to a finite $\bar{E}_n$. Nevertheless, the AFIF effectively identifies and unlearns samples that negatively impact fairness, demonstrating its usefulness when $\bar{E}_n$ is finite.

%%%%%%%%%%%%%%%%%%%%%%%%%%%%%%%%%%%%%%5
% \vspace{-12pt}
\subsection{Cross-Validation}
\label{s:Exp_CV}  
% \vspace{-4.5pt}
In our second example, we establish our method's computational advantage and demonstrate the improved stability of the approximate Fisher influence, as described in our prior remark about computational stability, when used to approximate cross-validation. To that end, we used the same two-layer model used in \secref{s:Exp_Fairness} for the adult dataset, increased the width of the hidden layer to $30000$, and have trained the model with a weight decay of $10^{-8}$. We thus expect the model's Hessian to be ill-conditioned, preventing \eqref{eq:G_Definition} from working without a proper regularization. Our goal was to estimate the test loss of the model as a function of the number of epochs using CV. To the best of our knowledge, this is the first work to apply the FIM to approximate CV. To reduce the computational complexity of the LOOCV, we used a leave-$k$-out CV with k set to $6000$, corresponding to $\sim$20\% of the trainset, and then averaged five different estimates to generate the final value (see \appref{app:CV_Details} for further details). \figref{fig:Exp_CV} reports the test loss, estimated loss, and average computation time (to generate an estimate based on the five different folds) for each method. The results show that the Hessian-based method fails to converge to the correct estimate, while the Fisher-based method follows the test loss trend, demonstrating potential instabilities of using \eqref{eq:G_Definition}. Additional experiments in \appref{app:ExpDetails_CV} confirm this behavior across other hyperparameter choices. Moreover, Fisher-based CV requires $\sim$50\% less time than the Hessian-based estimate. 

\begin{figure}
    \centering
    \includegraphics[width=0.5\columnwidth]{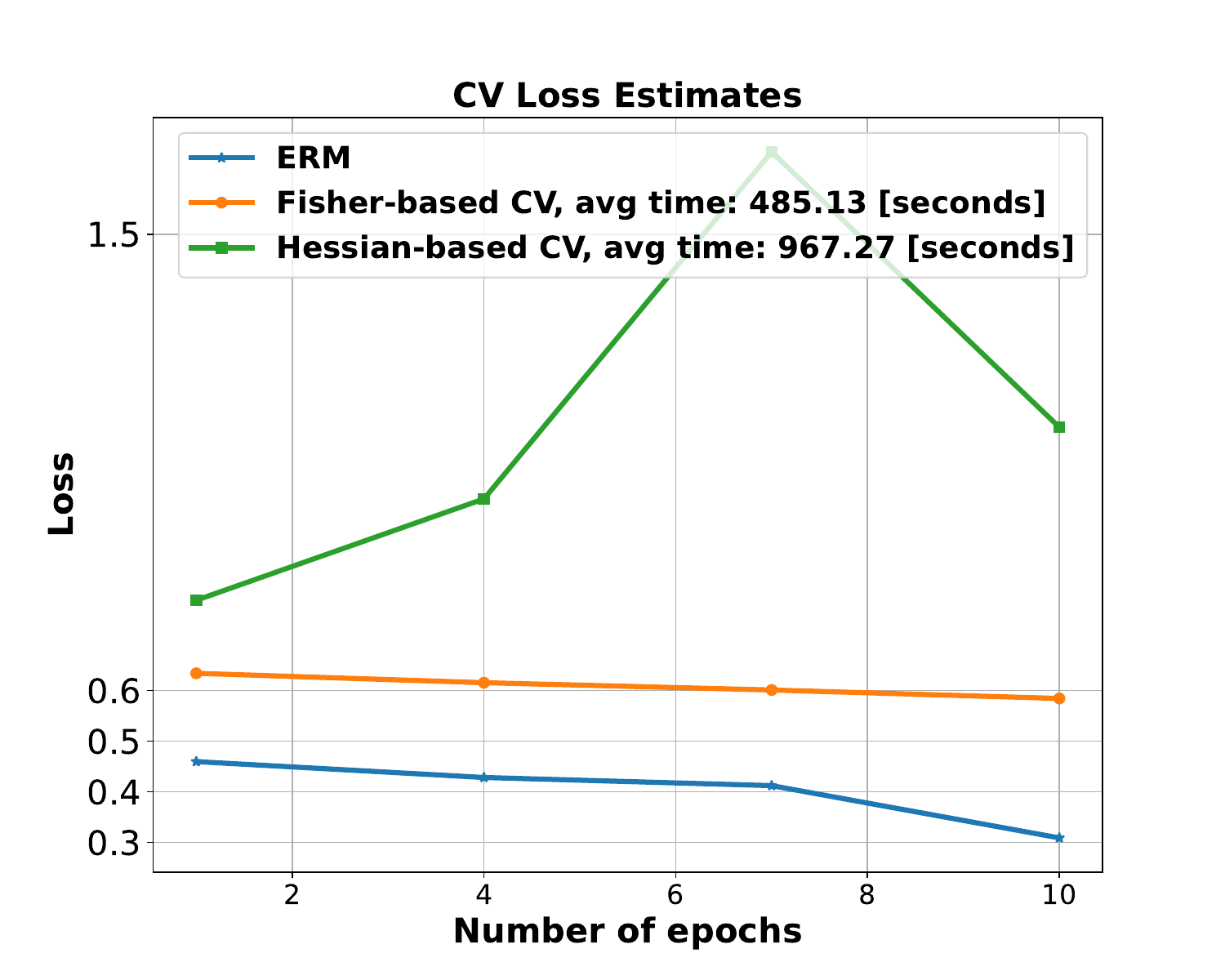}
    \caption{Test loss and CV estimators for Fisher and Hessian-based influence on the Adult dataset using a two-layer classifier, averaged over five folds. Fisher calculations are approximately twice as fast as Hessian computations and Hessian estimates are highly unstable, yielding invalid loss estimates.}
    \label{fig:Exp_CV}
\end{figure}

%%%%%%%%%%%%%%%%%%%%%%%%%%%%%%%%%%%%%%%%%%%%%%%%%%%%%%%
% \vspace{-12pt}
\subsection{Data Attribution}
\label{s:Exp_Influence}
% \vspace{-4.5pt}
%\vspace{-10pt}

To demonstrate the effectiveness of the AFIF in a high-dimensional non-convex setting, we attribute test sample predictions to training data using two popular neural network architectures: the ResNet18 \citep{he2016deep} and a model comprised of three convolutional layers and two fully connected layers, on a subset of CIFAR-10 \citep{CIFAR10_Krizhevsky}, focusing on the ``plane" and ``car" classes (see \appref{app:ExpDetails}). We calculated the influence of training examples on the $30$ test instances with the highest test loss. \figref{fig:main_figure_CNN}, \figref{fig:main_figure_ResNet} and \figref{fig:main_figure_running_times} present the three images with the highest and the lowest influence scores and the computation times for both cases. Both methods identified the same influential training samples, with a maximal discrepancy between influence scores, which was less than $20\%$ of the maximal influence value. However, the AFIF calculations were faster than the Hessian-based calculations in both simulated cases.  

\begin{figure*}[tt]
    \centering
    % Row of subfigures
    \begin{subfigure}{0.45\textwidth}
        \centering
        \includegraphics[width=\textwidth]{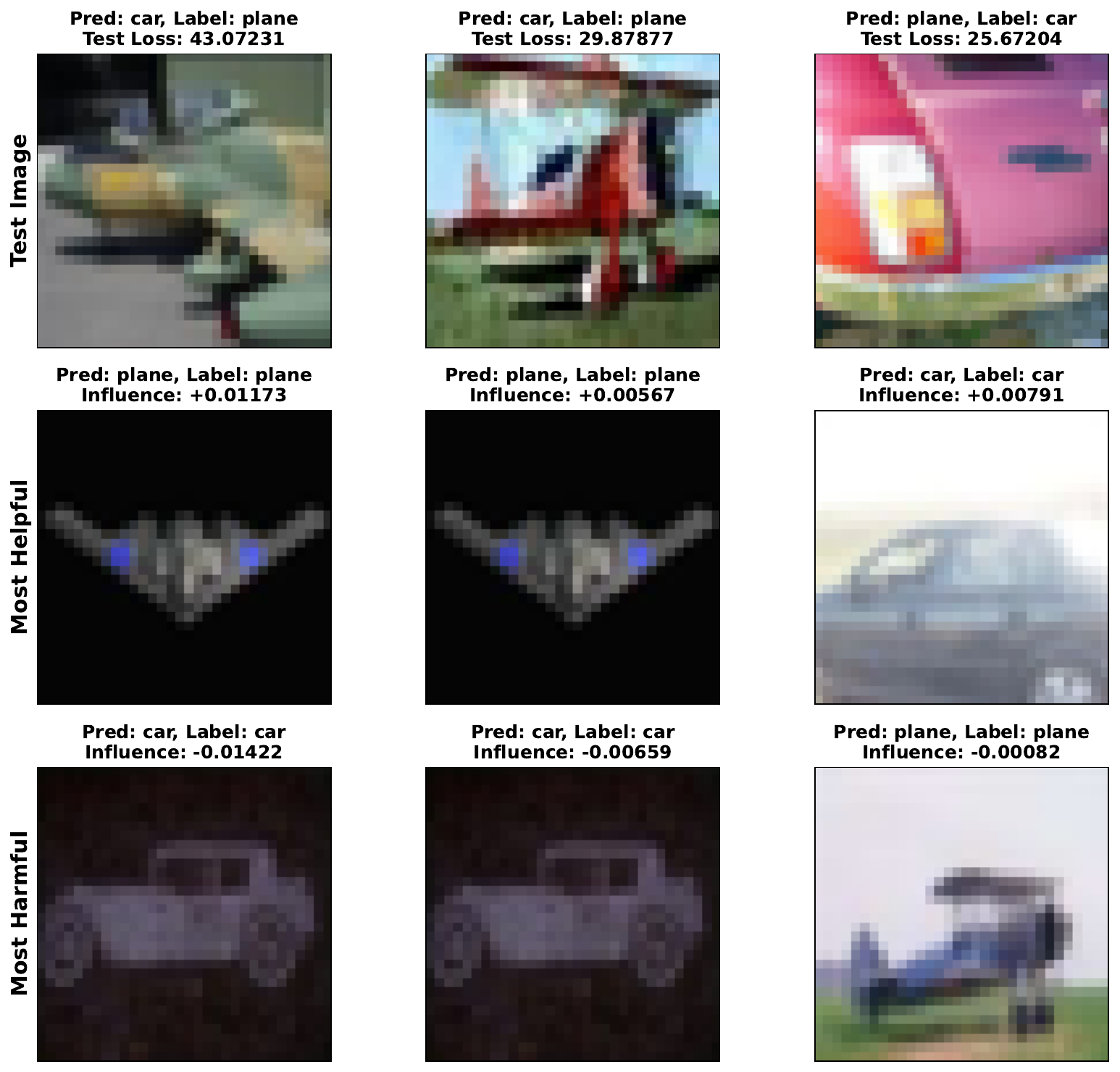}
        \caption{Fisher}
    \end{subfigure}
    \hfill
    \begin{subfigure}{0.45\textwidth}
        \centering
        \includegraphics[width=\textwidth]{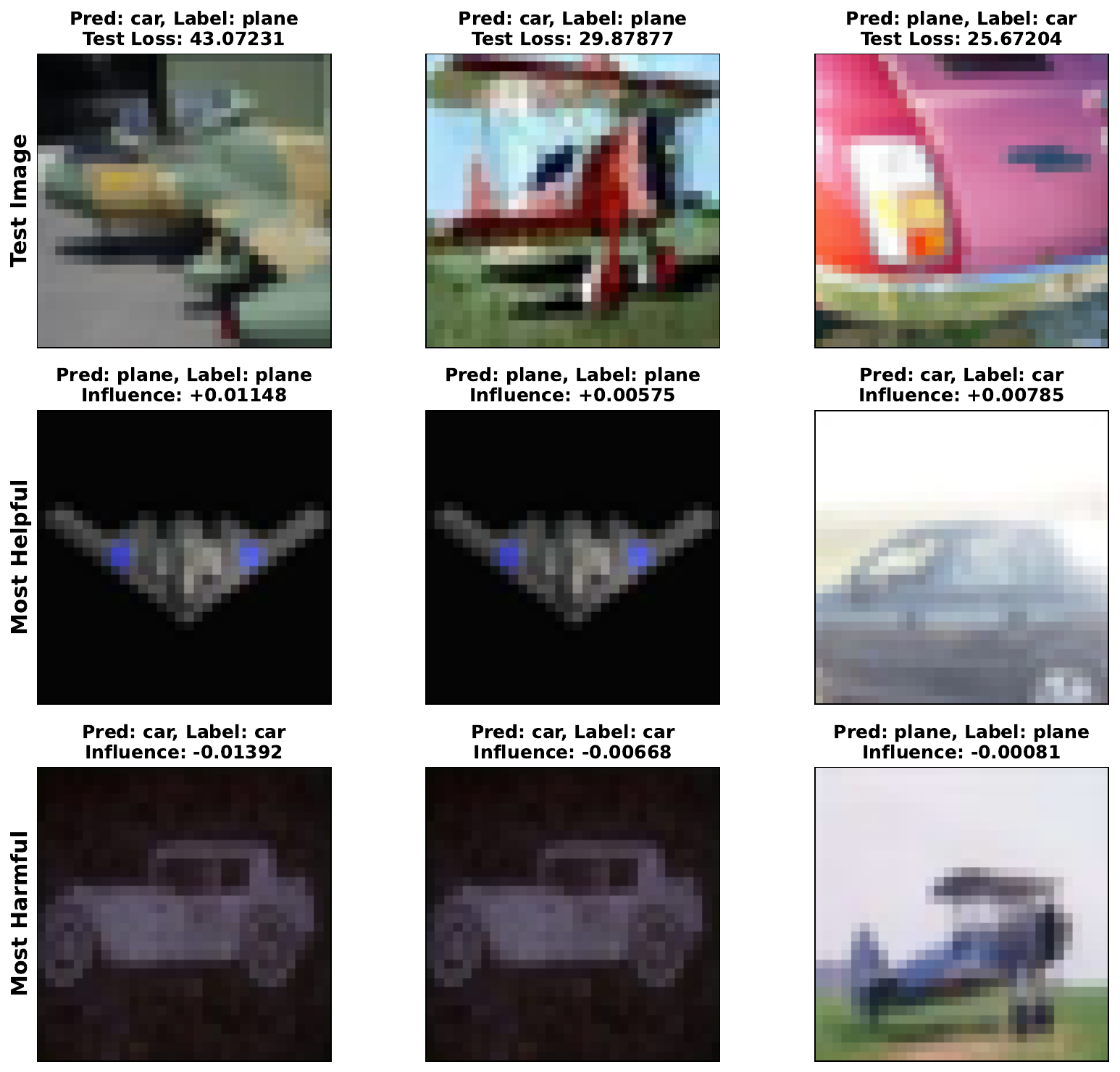}
        \caption{Hessian}
    \end{subfigure}

    % Main figure caption placeholder
    \caption{Most and least influential images on a subset of CIFAR10 when using a simple CNN architecture.}
    \label{fig:main_figure_CNN}
\end{figure*}

\begin{figure*}[tt]
    \centering

    \begin{subfigure}{0.485\textwidth}
        \centering
        \includegraphics[width=\textwidth]{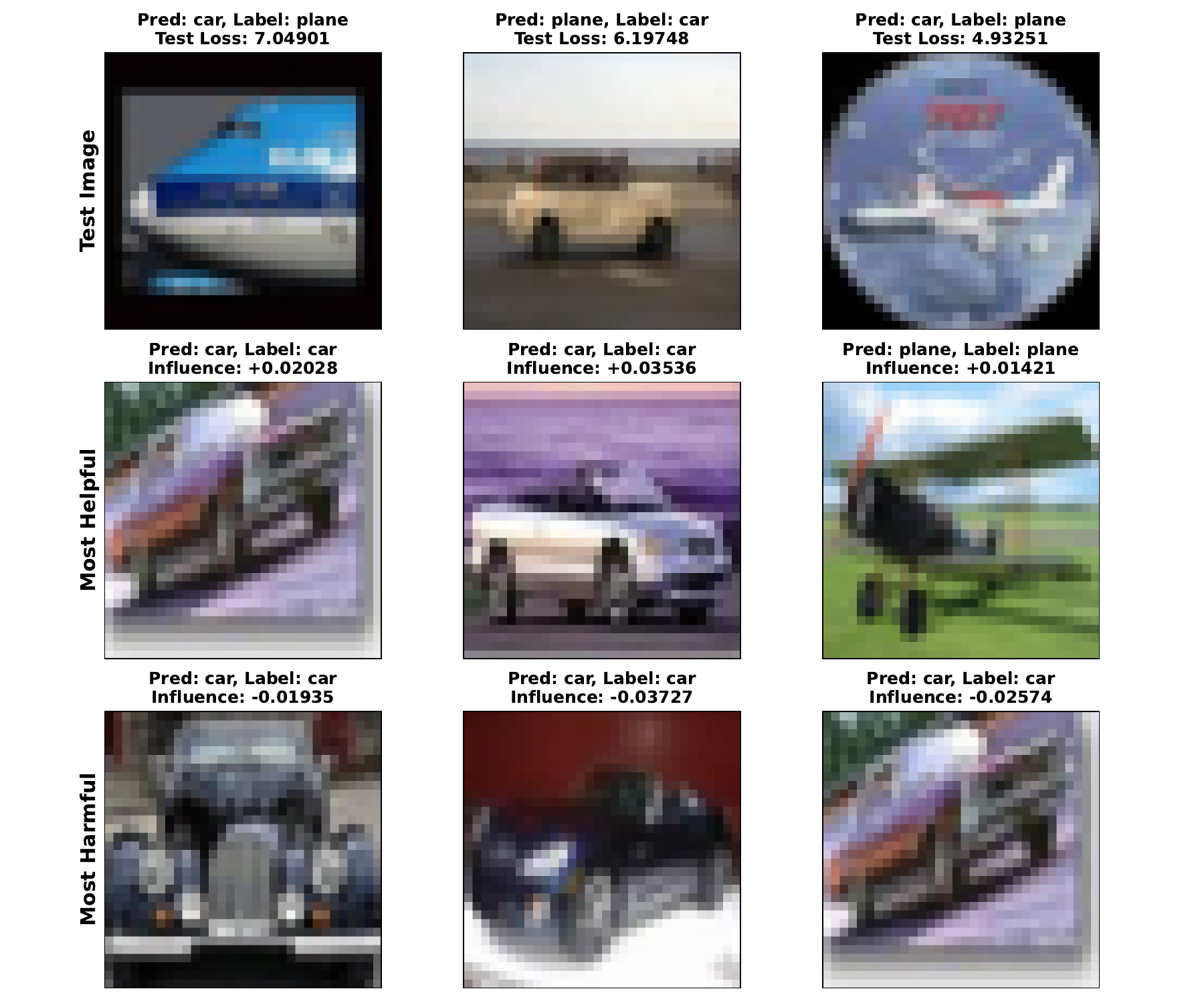}
        \caption{Fisher}
    \end{subfigure}
    \hfill
    \begin{subfigure}{0.485\textwidth}
        \centering
        \includegraphics[width=\textwidth]{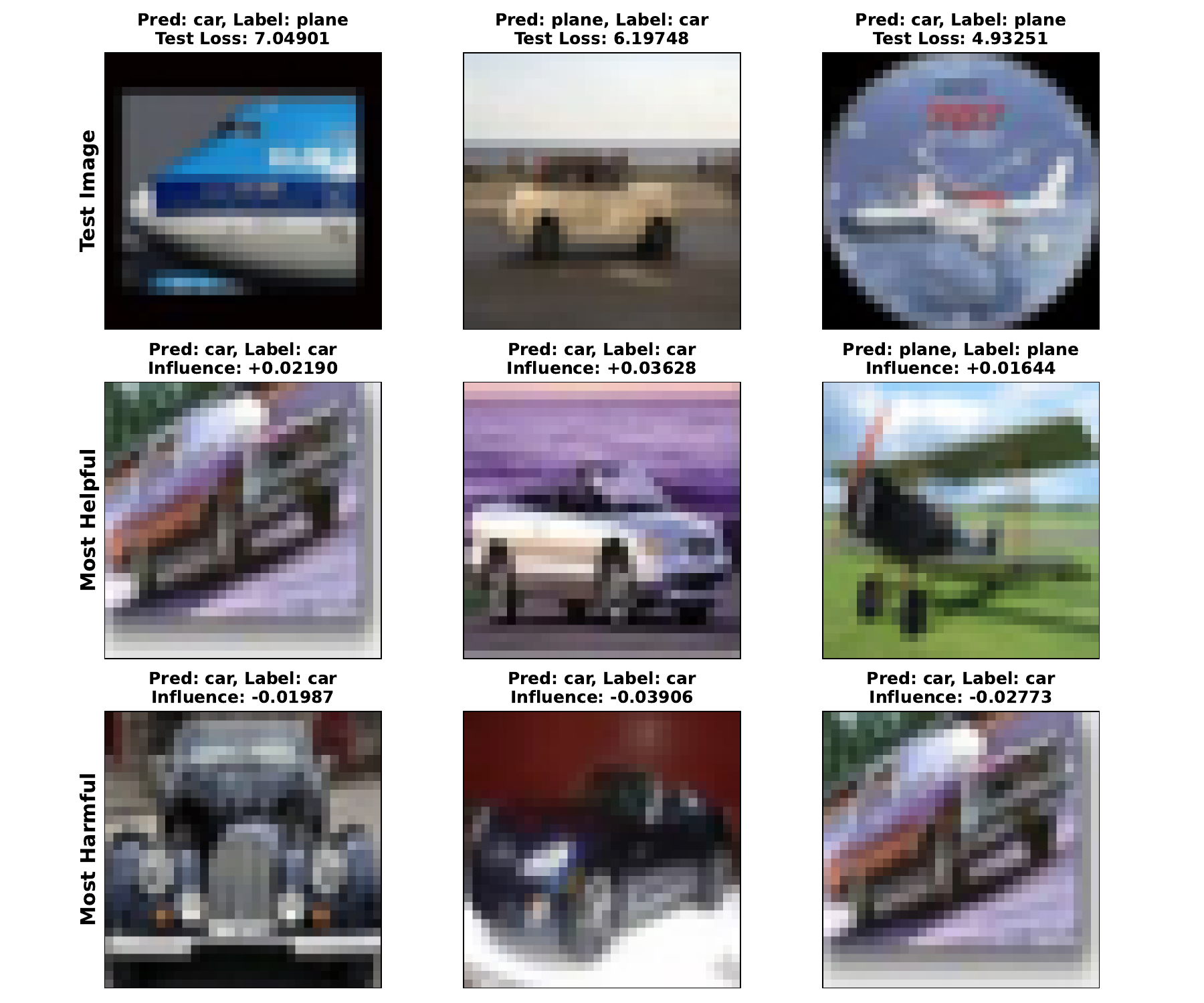}
        \caption{Hessian}
    \end{subfigure}

    % Main figure caption placeholder
    \caption{Most and least influential images on a subset of CIFAR10 when using the ResNet18 architecture. }
    \label{fig:main_figure_ResNet}
\end{figure*}

\begin{figure*}
    \centering
    % Row of subfigures
    \begin{subfigure}{0.44\textwidth}
        \centering
        \includegraphics[width=\textwidth]{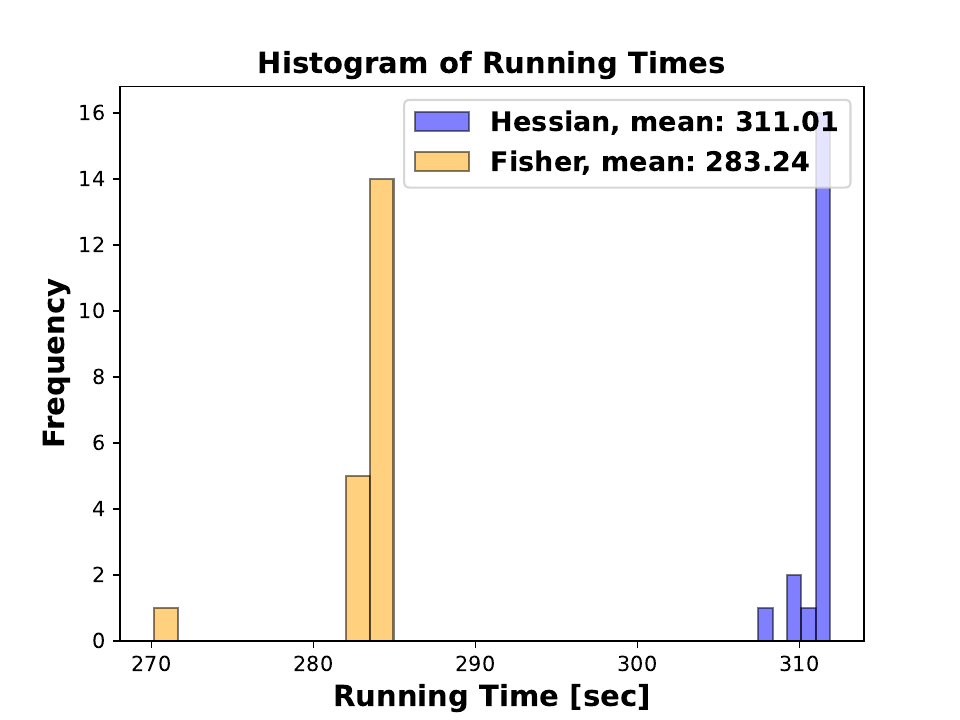}
        \caption{CNN}
    \end{subfigure}
    \hfill
    \begin{subfigure}{0.475\textwidth}
        \centering
        \includegraphics[width=\textwidth]{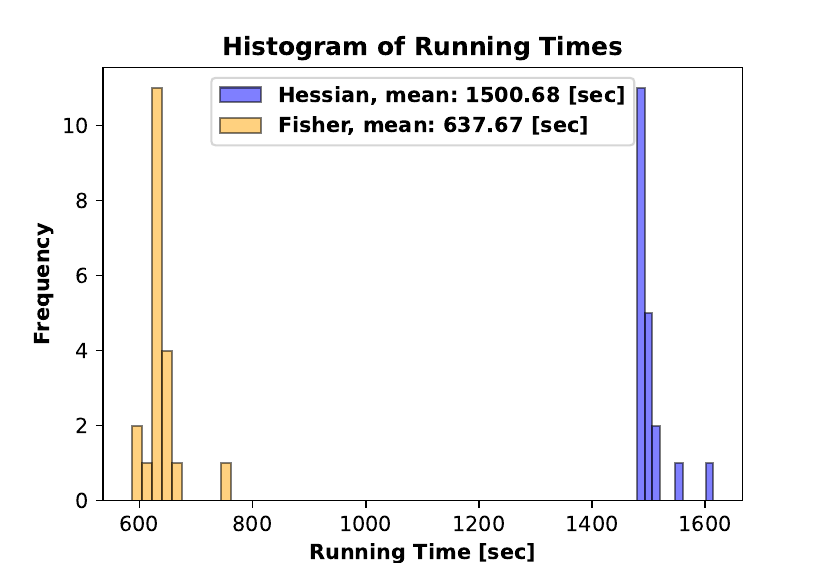}
        \caption{ResNet 18}
    \end{subfigure}

    % Main figure caption placeholder
    \caption{Running times for Fisher-based and Hessian-based influence function when calculated on a subset of CIFAR10 classified using ResNet18 and a simple three-layer CNN. In both cases, Fisher-based influence significantly accelerates the influence calculation.}
    \label{fig:main_figure_running_times}
\end{figure*}

\section{Concluding Remarks}
In this work, we introduced the AFIF, a novel method for quantifying influence in machine learning models. By using the FIM instead of the Hessian, we demonstrate how our technique is fundamentally faster than existing influence function baselines yet provides similar error guarantees across a set of tasks. Moreover, our framework extends the applicability of influence measurement to a broader range of scenarios—including those involving non-differentiable regularizers. We demonstrated the computational efficiency of AFIF relative to traditional Hessian-based techniques and its usefulness in providing reliable influence estimates across a set of tasks in a set of empirical evaluations.

Generalizing our analysis to more complex, real-world influence measurement methods that are based on the FIM and currently lack rigorous theoretical support (for example, techniques based on the Kronecker-Factored FIM \cite{choe2024_data_GPT}) is a promising future research direction, that will open the door to systematically determining when and how such methods can be most effectively applied across diverse tasks. Moreover, developing computationally efficient variants of higher-order influence measurement techniques such as those explored in \cite{giordano2019higher, basu2020second} (see also the discussion in \cite{KohLiang_Influence_DL}) by utilizing the underlying statistical nature of the optimization problem is another future research direction, that is currently under investigation.

\bibliographystyle{plainnat} 
% \bibliography{Config/myBib}

\appendix

\clearpage 
\renewcommand{\contentsname}{Contents of Appendix}
\tableofcontents
\addtocontents{toc}{\protect\setcounter{tocdepth}{3}}  

\section{Definitions and Useful Lemmas}
\label{app:Defs}
The manuscript uses the next classical definitions from the convex optimization theory \citep{NesterovBook}. 

\begin{defn}[Matrix Operator-Norm]
    For any matrix $\brA$ we define its \emph{operator-norm} by $\norm{\brA}_{\text{op}} \triangleq \underset{v\in\reals^{d}: \norm{v}\ne 0}{\sup} \ \norm{\brA v}/\norm{v}$ 
\end{defn}

\begin{defn}[Strong convexity]
    Let $\beta > 0$. A function $f(\cdot)$ is $\beta$-strongly convex if and only if
    \begin{align}
        f(y) \geq f(x) + \nabla^{\top}f(x) (y - x) + \frac{\beta}{2} \norm{x - y}^2, \ \forall (x,y)\in \text{dom}(f) 
    \end{align}
\end{defn}

\begin{defn}[Lipschitz]
    A function $f(\cdot)$ is $C$-Lipschitz if
    \begin{align}
        \norm{f(x) - f(y)} \leq C \norm{x - y}, \ \forall (x,y) \in \text{dom}(f).
    \end{align}
    In that case, $C$ is called the Lipschitz constant of $f$ and is denoted by $C \triangleq \mathrm{Lip}(f(x))$. 
\end{defn}

\begin{defn}[Smooth]
    If $f(\cdot)$ is differentiable, then $f(\cdot)$ is $K$-smooth if 
    \begin{align}
        \norm{\nabla f(x) - \nabla f(y)} \leq K\norm{x - y}, \ \forall (x,y) \in \text{dom}(f).
    \end{align}
    In that case, $K$ is called the gradient-Lipschitz constant of $f$ and is denoted by $C \triangleq \mathrm{Lip}_1(f(x))$. 
\end{defn}

\begin{defn}[Lipschitz-Hessian]
    If $f(\cdot)$ is twice differentiable, then $f(\cdot)$ is $M$-Lipschitz Hessian if 
    \begin{align}
        \norm{\nabla^{2}f(x) - \nabla^{2}f(y)}_{\mathrm{op}} \leq M\norm{x - y}, \ \forall (x,y) \in \text{dom}(f)
    \end{align}
    In that case, $M$ is called the Lipschitz-Hessian constant of $f$ and is denoted by $M \triangleq \mathrm{Lip}_2(f(x))$. 
\end{defn}

Throughout the manuscript, we will further make use of the next connections between Lipschitz coefficients and gradient bounds for differentiable functions.
\begin{corol}[\cite{NesterovBook}]
    Let $f(x)$ be a differentiable function $\forall x\in\text{dom}(f)$. Then, $f(x)$ is $C$-Lipschitz if and only if 
    \begin{align}
        \|\nabla f(x)\| \leq C, \ \forall x\in\text{dom}(f). 
    \end{align}
    If $f(x)$ is twice-differentiable $\forall x\in \text{dom}(f)$ then $f(x)$ is $K$-smooth if and only if  
    \begin{align}
        \|\nabla^{2} f(x)\|_{\mathrm{op}} \leq K, \forall x\in\text{dom}(f).    
    \end{align} 
\end{corol}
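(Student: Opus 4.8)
This is a classical fact recorded in \citep{NesterovBook}; the plan is simply to supply the standard short argument for completeness. Throughout I assume that $\text{dom}(f)$ contains the line segment joining any two of its points (e.g.\ that it is convex), which is the only structural hypothesis used and is exactly what makes the one-dimensional fundamental theorem of calculus available along segments.

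For the first equivalence I would prove both implications. \emph{Sufficiency:} assuming $\|\nabla f(x)\|\le C$ for all $x$, fix $x,y\in\text{dom}(f)$ and set $\phi(t)\triangleq f(x+t(y-x))$; then $\phi'(t)=\nabla f(x+t(y-x))^{\top}(y-x)$, so $f(y)-f(x)=\int_{0}^{1}\phi'(t)\,dt$, and Cauchy--Schwarz together with the gradient bound gives $|f(y)-f(x)|\le C\norm{y-x}$, i.e.\ $C$-Lipschitzness. \emph{Necessity:} assuming $C$-Lipschitzness, fix $x$ and a unit vector $v$; dividing $|f(x+tv)-f(x)|\le Ct$ by $t>0$ and letting $t\to0^{+}$ — legitimate since $f$ is differentiable — yields $|\nabla f(x)^{\top}v|\le C$, and taking the supremum over unit $v$ gives $\|\nabla f(x)\|\le C$.

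For the second equivalence I would replay the same two arguments with $f$ replaced by the vector field $g\triangleq\nabla f$, whose Jacobian is $\nabla^{2}f$: the integral identity $g(y)-g(x)=\int_{0}^{1}\nabla^{2}f(x+t(y-x))(y-x)\,dt$ combined with the definition of $\norm{\cdot}_{\mathrm{op}}$ converts the bound $\|\nabla^{2}f\|_{\mathrm{op}}\le K$ into $K$-Lipschitz continuity of $\nabla f$, which is precisely $K$-smoothness of $f$; conversely, applying the difference-quotient argument to $g$ along a unit direction $v$ gives $\|\nabla^{2}f(x)v\|\le K$ for every unit $v$, hence $\|\nabla^{2}f(x)\|_{\mathrm{op}}\le K$. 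There is no genuine obstacle here; the only points requiring a little care are the segment-in-domain hypothesis noted above and the appeal to (twice-)differentiability when passing to the limit in the difference quotients. The one step I would flag as least mechanical is the reduction of the second statement to the first — recognizing that $K$-smoothness of $f$ is literally Lipschitz continuity of the map $\nabla f$, and that the operator norm of that map's Jacobian is exactly $\|\nabla^{2}f\|_{\mathrm{op}}$.
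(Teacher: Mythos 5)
Your argument is correct and is precisely the standard one; the paper does not prove this corollary at all but simply cites it to \citep{NesterovBook}, and your segment-integration/difference-quotient argument (together with the reduction of the second equivalence to the first via $g=\nabla f$) is the textbook proof being invoked. Your explicit flagging of the convexity-of-domain hypothesis is appropriate, since that is the only structural condition the statement tacitly assumes.
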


\section{Example for Losses From an Exponential Family}
\label{app:exp_family_loss}
We now present a few examples of commonly used loss functions in machine learning that can be viewed as the negative log-likelihood of an exponential-family model. Specifically, let
\[
  \ell\bigl(y, f(x;\theta)\bigr) 
  =
  -\log P\bigl(y \mid f(x;\theta)\bigr),
\]
where \(P(y \mid f(x;\theta))\) belongs to a (discrete) exponential family. Throughout this paper, we adopt the following form of an exponential family:
\begin{align}
    \label{eq:General_Exp_Family}
    \log P\bigl(y \mid f(x;\theta)\bigr)
    =
    f(x;\theta)^\top\,t(y)
    -
    \log\biggl(\sum_{\tilde{y}=1}^{|\pazocal{Y}|}
    \exp\bigl\{\,f(x;\theta)^\top\,t(\tilde{y})\bigr\}\biggr)
    \;+\;\beta(y),
\end{align}
where \(t(y)\) are the \emph{natural statistics} and \(f(x;\theta)\) are the \emph{natural parameters}.\footnote{The above is a discrete version; for continuous \(\pazocal{Y}\), one replaces the sum with an integral.} 
The term \(\beta(y)\) depends only on \(y\) (thus does not affect parameter learning) and ensures proper normalization. Below, we illustrate two popular examples of loss functions (see also \cite[Sec.~9.2]{NGD_Review}) that arise naturally from this exponential-family framework.

\begin{enumerate}
\item \textbf{Cross-Entropy Loss.}
A standard approach in multi-class classification over \(\abs{\pazocal{Y}}\) classes is the softmax parameterization:
\begin{align}
    \label{eq:loss_classification}
    \log P\bigl(y \mid f(x;\theta)\bigr)
  =
  \bigl(f(x;\theta)\bigr)_{y}
  -
  \log \Bigl(\sum_{\tilde{y}=1}^{|\pazocal{Y}|}
  \exp\{\bigl(f(x;\theta)\bigr)_{\tilde{y}} \}\Bigr),
  \quad
  y,\tilde{y} \in \{1,\dots,\abs{\pazocal{Y}}\}.
\end{align}

Here, \(f(x;\theta)\) is a vector of length \(\abs{\pazocal{Y}}\). By defining \(e_y\) as the one-hot vector with a 1 in the \(y\)-th entry and 0 elsewhere, we see that 
\[
  \log P\bigl(y \mid f(x;\theta)\bigr)
  =
  f(x;\theta)^\top e_{y}
  -
  \log \Bigl(\sum_{\tilde{y}=1}^{|\pazocal{Y}|}
  \exp\{ f(x;\theta)^\top e_{\tilde{y}} \}\Bigr),
\]
thus matching \eqref{eq:General_Exp_Family} with natural statistics \(t(y)=e_y\) and natural parameters \(f(x;\theta)\). The corresponding loss, 
\[
  \ell\bigl(y, f(x;\theta)\bigr)
  = -\log P\bigl(y \mid f(x;\theta)\bigr),
\]
is the well-known cross-entropy.

\item \textbf{Mean Squared Error (MSE).}
In a regression setting with a continuous target \(y\in\mathbb{R}\), a unit-variance Gaussian model with mean \(\mu = f(x;\theta)\) leads to
\begin{align}
    \label{eq:loss_Gaussian}
    \log P\bigl(y \mid f(x;\theta)\bigr)
    =
    -\frac{1}{2} \bigl(y - f(x;\theta)\bigr)^2
    =
    f(x;\theta)\,y
    -
    \frac{y^2}{2}
    -
    \frac{\bigl(f(x;\theta)\bigr)^2}{2}.
\end{align}
Comparing with \eqref{eq:General_Exp_Family}, this corresponds to an exponential family whose natural statistics are \(\bigl(y, y^2\bigr)\) and whose natural parameters are \(\bigl(f(x;\theta), -\frac{1}{2}\bigr)\). The negative log-likelihood here, 
\[
  \ell\bigl(y, f(x;\theta)\bigr)
  = -\log P\bigl(y \mid f(x;\theta)\bigr)
  = \frac{1}{2} \bigl(y - f(x;\theta)\bigr)^2
  \;+\; \text{(constant)},
\]
is precisely the mean squared error (MSE) loss up to an additive constant.
\end{enumerate}
%\ \footnote{The loss function omits a constant term required for generating a normalized Gaussian distribution. However, since adding this constant does not affect the loss, we have excluded it from \eqref{eq:loss_Gaussian}}\ \  \footnote{In this case, the natural parameters are the concatenation of the features and an additional constant, which can be thought of an extended features vector that includes a bias term.}.

%%%%%%%%%%%%%%%%%%%%%%%%%%%%%%%%%%%%%%%%%%%%%%%%%%%%%%%%%%%%%%%%%%%%%%%%%%%%%%%%%%%%
\subsection{Bregman Losses}
Following \cite[Thm.~4]{banerjee2005clustering}, whenever the representation $P(y|f(x;\theta))$ correspond to a regular exponential family, then the loss $-\log(P(y|f(x;\theta)))$ can be expressed as  
\begin{align}
    -\log(P(y|f_{\theta}(x))) = d_{\varphi}(t(y), \mu(f_{\theta}(x))) + \log(b_{\varphi}(t(y))) + C
\end{align}
where $\mu(f_{\theta}(x)) = \E{t(y)}$ is the expected value of $t(y)$ using the underlying exponential family distribution, $d_{\varphi}(\cdot,\cdot)$ is a Bregman divergence and $C$ is a constant. As shown by \cite[Table~1]{banerjee2005clustering} (see also \cite{das2024unified}), this result implies that many classical losses in machine learning, including cross-entropy and mean squared error, can be viewed as special cases of Bregman divergences, and further belong to the exponential family characterization discussed in our work.  

%%%%%%%%%%%%%%%%%%%%%%%%%%%%%%%%%%%%%%%%%%%%%%%%%%%%%%%%%%%%%%%%%%%%%%%%%%%%%%%%%%%%
\subsection{Properties of the Cross-Entropy and MSE Losses}
\label{app:ExpFamily_LossHessian}

We now demonstrate how the assumptions on loss minimization, Hessian boundedness, and simplified second-order gradients follow for the two loss functions introduced above.

\begin{enumerate}
\item \textbf{Cross-Entropy Loss.} 
Recall the parameterization
\[
    \log P\bigl(y \mid f(x;\theta)\bigr)
    =
    (f(x;\theta))_y
    -
    \log\Bigl(\sum_{\tilde{y}\in\pazocal{Y}} \exp\{(f(x;\theta))_{\tilde{y}} \}\Bigr),
\]
and let 
\[
  \ell\bigl(y, f(x;\theta)\bigr)
  =
  -\log P\bigl(y \mid f(x;\theta)\bigr)
  =
  \log\Bigl(\sum_{\tilde{y}\in\pazocal{Y}} \exp\{(f(x;\theta))_{\tilde{y}} \}\Bigr)
  - (f(x;\theta))_y.
\]
We focus first on the gradient of the \emph{log-probability} itself (sometimes termed the “score function”):
\begin{align*}
    \nabla_{f} \log P\bigl(y \mid f(x;\theta)\bigr) 
    &= \nabla_{f}\Bigl[(f(x;\theta))_y 
                     -
                     \log\bigl(\sum_{\tilde{y}\in\pY}
                        \exp\{(f(x;\theta))_{\tilde{y}} \}\bigr)\Bigr] \\
    &= e_y 
    - 
    \text{softmax}\bigl(f(x;\theta)\bigr),
\end{align*}
where \(e_y\) is the one-hot vector selecting entry \(y\), and 
\(\text{softmax}\bigl(f(x;\theta)\bigr)\) is the vector of class probabilities assigned by the model. 

\paragraph{Zero Gradients Under Perfect Prediction.}
For any training example \((x_i, y_i)\), if the model classifies it with perfect confidence, i.e.
\[
   \bigl(\text{softmax}(f(x_i;\hat{\theta}(\boldsymbol{1})))\bigr)_{y_i}
   = 1,
\]
then \(\nabla_{f} \log P\bigl(y_i \mid f(x_i;\hat{\theta}(\boldsymbol{1}))\bigr) = 0\).  Consequently, if the model perfectly predicts \emph{all} training labels, then all these gradients vanish simultaneously.

\paragraph{Bounded Hessian.}
Next, we show that the second derivative (the Hessian) of \(\log P\bigl(y \mid f(x;\theta)\bigr)\) with respect to \(f\) is bounded in norm.  From the above,
\[
   \nabla_{f}\log P\bigl(y \mid f(x;\theta)\bigr) 
   =
   e_y - \text{softmax}\bigl(f(x;\theta)\bigr),
\]
so taking another derivative,
\[
   \nabla^2_{f}\log P\bigl(y \mid f(x;\theta)\bigr)
   =
   -\,\nabla_{f}\bigl[\text{softmax}(f(x;\theta))\bigr].
\]
Denote 
\(\brC_f \,\triangleq\nabla_{f}\bigl[\text{softmax}(f(x;\theta))\bigr]\).  
By the well-known derivative of \(\text{softmax}\), the \((i,j)\)th entry of \(\brC_f\) is
\[
  (\brC_f)_{ij}
  =
  \frac{\partial}{\partial \bigl(f(x;\theta)\bigr)_j}
  \Bigl[\text{softmax}(f(x;\theta))_i\Bigr]
  =
  \text{softmax}(f(x;\theta))_i
  \,\bigl[\delta_{ij} - \text{softmax}(f(x;\theta))_j \bigr],
\]
which implies:
\begin{align}
    (\brC_f)_{ii} &= \text{softmax}(f(x;\theta))_i \bigl[1 - \text{softmax}(f(x;\theta))_i\bigr], \\ 
    (\brC_f)_{ij} &= -\text{softmax}(f(x;\theta))_i \text{softmax}(f(x;\theta))_j \quad (i \neq j).
\end{align}

Because each \(\text{softmax}(f(x;\theta))_i \in [0,1]\), the entries of \(\brC_f\) lie in \([-1,1]\), and indeed one can show 
\(\|\brC_f\|\) is bounded by a constant (depending only on \(\abs{\pazocal{Y}}\), not on the dimension of the parameters). Hence
\(\nabla^2_{f}\log P\bigl(y \mid f(x;\theta)\bigr) = -\brC_f\)
is also bounded in norm, establishing the desired Hessian bound.

\bigskip

\item \textbf{Mean Squared Error (MSE).}
For the MSE loss arising from a unit-variance Gaussian,
\[
    \log P\bigl(y \mid f(x;\theta)\bigr)
    =
    -\frac{1}{2}\,\bigl[y - f(x;\theta)\bigr]^2,
\]
the gradient with respect to \(f(x;\theta)\) is simply
\[
  \nabla_{f}\,\log P\bigl(y \mid f(x;\theta)\bigr)
  =
  y - f(x;\theta).
\]
Hence, if at \(\hat{\theta}(\boldsymbol{1})\) the model predictions perfectly match all responses, this gradient becomes zero for each training pair, indicating perfect minimization of the training error.

\paragraph{Bounded Hessian.}
Since
\[
   \nabla^2_{f}\,\log P\bigl(y \mid f(x;\theta)\bigr)
   =
   -\,\nabla^2_{f}\,\Bigl[\frac{1}{2}\,(y - f(x;\theta))^2\Bigr]
   =
   -\,(-\mathbf{I}_d)
   =
   \mathbf{I}_d,
\]
the Hessian with respect to \(f\) is simply the identity (for the one-dimensional \(f\)). Its norm is therefore trivially bounded by 1, and it does not depend on the dimension \(d\) of the parameters in \(\theta\).  Moreover, the Hessian can be evaluated with no complicated operations—just the constant identity matrix at each sample.

\end{enumerate}

\section{Gradient Bound for Minimizing Losses With Exponential Family Structure}
\label{s:Loss_ExpFamily}
Given a training set $\{(x_i, y_i)\}^{n}_{i=1}$ and the loss function \eqref{eq:General_Exp_Family} we derive gradient of the empirical risk \eqref{eq:WERM_Def} which we aim to minimize. To that end, we note that 
\begin{align}
   n \nabla_{\theta}L(\mD, \theta, \boldsymbol{1}) &= \nabla_{\theta}\left(\sum^{n}_{i=1} f^{\top}(x_i; \theta)t(y_i) - \log\left(\sum_{\tilde{y}\in \pY}\exp\{f^{\top}(x_i; \theta)t(\tilde{y})\}\right) + \beta(y_i)\right)\\
    &= \sum^{n}_{i=1}\left(\nabla^{\top}_{\theta}f(x_i; \theta)t(y_i) - \frac{\sum_{\tilde{y}\in\pY}\nabla^{\top}f(x_i; \theta)t(\tilde{y})\exp\{f^{\top}(x_i; \theta)t(\tilde{y})\}}{\sum_{\tilde{y}_1\in\pY}\exp\{f^{\top}(x_i; \theta)t(\tilde{y}_1)\}}\right)\\
    &= \sum^{n}_{i=1}\nabla^{\top}_{\theta}f(x_i; \theta)\left(t(y_i) - \frac{\sum_{\tilde{y}\in\pY}t(\tilde{y})\exp\{f^{\top}(x_i; \theta)t(\tilde{y})\}}{\sum_{\tilde{y}_1\in\pY}\exp\{f^{\top}(x_i; \theta)t(\tilde{y}_1)\}}\right)\\
    &= \sum^{n}_{i=1}\nabla^{\top}_{\theta}f(x_i; \theta)(t(y_i) - \Esub{\sy\sim P_{\sy|\sx=x_i;\theta}}{t(\sy)})
\end{align}
and the norm of this gradient is upper bounded by 
\begin{align}
    n\norm{\nabla_{\theta}L(\mD, \theta, \boldsymbol{1})} \leq \sum^{n}_{i=1}\norm{\nabla_{\theta}f(x_i; \theta)}\|t(y_i) - \Esub{\sy\sim P_{\sy|\sx=x_i;\theta}}{t(\sy)}\|
\end{align}
Thus, whenever the features are Lipschitz, we have
\begin{align}
    \norm{\nabla_{\theta}L(\mD, \theta, \boldsymbol{1})} \leq \frac{C_f}{n}\sum^{n}_{i=1}\|t(y_i) - \Esub{\sy\sim P_{\sy|\sx=x_i;\theta}}{t(\sy)}\|
\end{align}
and we expect this upper bound to be small at the minimizer $\theta = \hat{\theta}(\boldsymbol{1})$.

\section{Proof of Closeness of $\hat{\theta}(\boldsymbol{1})$ and $\hat{\theta}(\boldsymbol{1}^{n\backslash i})$}
\label{app:Closeness_Weighted}
We will use the next lemma throughout our proofs. 
\begin{lem}
    \label{lem:gen_strong_convex} 
    Let $\hat{\theta}(\boldsymbol{1}^{n\backslash i})$ defined as in \eqref{eq:WERM_Def} and let $\frac{1}{n}\ell(z_i, \theta)$ be a differentiable function in $\theta$ for any $z_i\in\mD$ and $\frac{1}{n}\ell(z_i, \theta) + \lambda \pi(\theta)$ be a $\mu$-strongly convex function in $\theta$ for any $z_i\in\mD$. Then, $\forall i\in[n]$ 
    \begin{align}
        \|\hat{\theta}(\boldsymbol{1}^{n\backslash i}) - \hat{\theta}(\boldsymbol{1})\| \leq \frac{2}{\mu}\cdot \underset{i\in[n]}{\max} \ \norm{\frac{1}{n}\nabla_{\theta}\ell(z_i, \hat{\theta}(\boldsymbol{1}))}
    \end{align}
\end{lem}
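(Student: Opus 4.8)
\emph{Proof proposal.}
Write $\theta_0 \triangleq \hat{\theta}(\boldsymbol{1})$ and $\theta_i \triangleq \hat{\theta}(\boldsymbol{1}^{n\backslash i})$, and denote by $L(\theta)$ and $L_{-i}(\theta)$ the full and leave-one-out objectives from \eqref{eq:WERM_Def}, so that $L(\theta) = L_{-i}(\theta) + \tfrac{1}{n}\ell(z_i,\theta)$; the two objectives differ only in the single retained term. Under the stated hypotheses both objectives are $\mu$-strongly convex (e.g.\ $L_{-i}$ is the sum of the $\mu$-strongly convex function $\tfrac1n\ell(z_j,\cdot)+\lambda\pi$ for some $j\neq i$ with the remaining convex loss terms). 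The plan is to combine the first-order optimality conditions at $\theta_0$ and $\theta_i$ with strong monotonicity of the subdifferential of $L_{-i}$, which is exactly what makes the single perturbing term $\tfrac1n\nabla_\theta\ell(z_i,\cdot)$ surface.

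The steps are: (i) by optimality of $\theta_i$ for $L_{-i}$, $0 \in \partial L_{-i}(\theta_i)$; (ii) by optimality of $\theta_0$ for $L$ and differentiability of $\tfrac1n\ell(z_i,\cdot)$, the subdifferential sum rule gives $0 \in \partial L(\theta_0) = \tfrac1n\nabla_\theta\ell(z_i,\theta_0) + \partial L_{-i}(\theta_0)$, hence $-\tfrac1n\nabla_\theta\ell(z_i,\theta_0) \in \partial L_{-i}(\theta_0)$; (iii) apply strong monotonicity of $\partial L_{-i}$ to the subgradient pair $\big({-}\tfrac1n\nabla_\theta\ell(z_i,\theta_0)\big)\in\partial L_{-i}(\theta_0)$ and $0\in\partial L_{-i}(\theta_i)$:
\[
    \Big\langle -\tfrac1n\nabla_\theta\ell(z_i,\theta_0),\ \theta_0 - \theta_i\Big\rangle \ \geq\ \mu\,\|\theta_0 - \theta_i\|^2 ;
\]
(iv) bound the left side by Cauchy--Schwarz, cancel one factor of $\|\theta_0-\theta_i\|$ (the claim is trivial when $\theta_0=\theta_i$), and obtain $\|\theta_0-\theta_i\| \le \tfrac1\mu\big\|\tfrac1n\nabla_\theta\ell(z_i,\hat\theta(\boldsymbol{1}))\big\|$. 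Taking the maximum over $i\in[n]$ and using $1/\mu \le 2/\mu$ yields the stated bound. (The constant $2/\mu$ is deliberately loose; the argument in fact gives $1/\mu$, or even $1/((n-1)\mu)$ if one exploits strong monotonicity of all $n-1$ retained losses simultaneously.)

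There is no real obstacle — this is the textbook leave-one-out stability computation — but two points deserve care. First, because $\pi$ may be non-differentiable, one must work with the subdifferential throughout: it is the monotonicity of $\partial\pi$ (from convexity of $\pi$) that keeps the $\pi$ contribution out of the way in steps (ii)--(iii), and if $\pi$ happens to be differentiable the same calculation goes through verbatim with $\partial$ replaced by $\nabla$. Second, one should check that $L_{-i}$ genuinely inherits $\mu$-strong convexity from the hypotheses (this uses convexity of the retained losses together with strong convexity of at least one regularized loss, and implicitly $n \geq 2$). A purely function-value ``sandwich'' argument also works — $\mu$-strong convexity of $L$ at $\theta_0$ gives $L(\theta_i)-L(\theta_0)\ge \tfrac\mu2\|\theta_i-\theta_0\|^2$, while optimality of $\theta_i$ for $L_{-i}$ gives $L(\theta_i)-L(\theta_0)\le \tfrac1n(\ell(z_i,\theta_i)-\ell(z_i,\theta_0))$ — but it produces the gradient evaluated at $\theta_i$ rather than at $\hat\theta(\boldsymbol{1})$, so the optimality-condition route above is the one I would write up.
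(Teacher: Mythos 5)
Your proof is correct and surfaces exactly the same key quantity as the paper's, but it gets there by a slightly different mechanism, so a short comparison is worthwhile. The paper's proof is a one-line application of the optimizer-comparison lemma \eqref{eq:OptimizerComparison_Cite} from \citep{Wilson_OptimizerComparison} to the pair $L(\mD,\cdot,\lambda,\boldsymbol{1})$ and $L(\mD,\cdot,\lambda,\boldsymbol{1}^{n\backslash i})$, whose difference $-\tfrac1n\ell(z_i,\cdot)$ is the differentiable perturbation; Cauchy--Schwarz then gives the stated constant $2/\mu$. You instead compare the first-order optimality conditions of the two problems and apply $\mu$-strong monotonicity of $\partial L_{-i}$ to the subgradient pair $-\tfrac1n\nabla_\theta\ell(z_i,\hat\theta(\boldsymbol{1}))\in\partial L_{-i}(\hat\theta(\boldsymbol{1}))$ and $0\in\partial L_{-i}(\hat\theta(\boldsymbol{1}^{n\backslash i}))$ — essentially the comparison lemma unpacked — and because the strongly monotone operator here is the leave-one-out objective itself, you obtain the sharper constant $1/\mu$, which of course still implies the stated $2/\mu$ bound. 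Your route also makes the treatment of a non-differentiable $\pi$ fully explicit via the subdifferential sum rule, whereas the paper handles it implicitly by only requiring differentiability of the difference. Two minor caveats, both of which you flag or which apply equally to the paper's proof: $\mu$-strong convexity of $L_{-i}$ (and of both weighted objectives) does not follow from the lemma's literal hypothesis alone — one also needs convexity of the retained unregularized losses $\tfrac1n\ell(z_j,\cdot)$, as supplied by \assref{ass:one_1}; and your parenthetical $1/((n-1)\mu)$ strengthening requires each unregularized $\tfrac1n\ell(z_j,\cdot)$ to be $\mu$-strongly convex (summing the regularized terms would over-count $\lambda\pi$), so it holds under \assref{ass:one_1} but not under the lemma's hypothesis as written.
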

\begin{proof}
    Similarly to the developments from \citep[App.~B.1]{Wilson_OptimizerComparison}, we get that

    \begin{align}
        \|\hat{\theta}(\boldsymbol{1}) - \hat{\theta}(\boldsymbol{1}^{n\backslash i})\|^{2} &\leq \frac{2}{\mu}|(\hat{\theta}(\boldsymbol{1}) - \hat{\theta}(\boldsymbol{1}^{n\backslash i}))^{\top}(\nabla_{\theta}(L(\mD, \hat{\theta}(\boldsymbol{1}), \lambda, \boldsymbol{1}^{n\backslash i}) - L(\mD, \hat{\theta}(\boldsymbol{1}), \lambda, \boldsymbol{1})) )| \\
        &\leq \frac{2}{\mu n}|(\hat{\theta}(\boldsymbol{1}) - \hat{\theta}(\boldsymbol{1}^{n\backslash i}))^{\top}\nabla_{\theta}\ell(z_i, \hat{\theta}(\boldsymbol{1}))|\\
        &\leq \frac{2}{\mu n}\|\hat{\theta}(\boldsymbol{1}) - \hat{\theta}(\boldsymbol{1}^{n\backslash i})\| \|\nabla_{\theta}\ell(z_i, \hat{\theta}(\boldsymbol{1}))\|%\\
        % &=\norm{\hat{\theta}(\boldsymbol{1}) - \hat{\theta}(\boldsymbol{1}^{n\backslash i})}\cdot \frac{2}{\mu n}\cdot \norm{\nabla_{\theta}\ell(z_i, \hat{\theta}(\boldsymbol{1}))}
    \end{align}
    where all the steps are by Cauchy-Schwartz
    inequality. The proof follows by maximizing over $i$. % and by the Lipschitzness assumptions on the loss functions. 
\end{proof}

We note that whenever $\frac{1}{n}\ell(z_i, \theta)$ is Lipschitz, the upper bound is finite. Moreover, since we normalize by $n$, the bound will go to zero with $n$ whenever the gradient grows as $o(n)$, as is usually the case in many popular machine learning problems (see \citep[Sec.~3]{giordano2019swiss}). We further note that under a more restrictive assumption that the $\ell(z_i, \theta)$ are Lipschitz then the bound is given by $\frac{2\tilde{C}}{\mu n}$ for $\tilde{C} = \underset{i\in[n]}{\max} \ \|\nabla_{\theta}\ell(z_i, \hat{\theta}(\boldsymbol{1}))\|$ and $\tilde{C} < \infty$. 
\section{Proof of \lemref{lem:Distance}}
\label{app:Prof_dist}

The proof uses the following lemma from \cite{Wilson_OptimizerComparison}:
\begin{lem}[Optimizer Comparison, \cite{Wilson_OptimizerComparison}]
    Let
    \begin{align}
        x_{\varphi_1} &\in \arg\min_x \varphi_1(x), \ \ x_{\varphi_2} \in \arg\min_x \varphi_2(x).
    \end{align}
    If each $\varphi_i$ is $\mu$-strongly convex and $\varphi_2 - \varphi_1$ is differentiable, then
    \begin{align}
    \label{eq:OptimizerComparison_Cite}
    \frac{\mu}{2}\|x_{\varphi_1} - x_{\varphi_2}\|^{2}_2
    &\leq \abs{(x_{\varphi_1} - x_{\varphi_2})^{\top}\left(\nabla(\varphi_2 - \varphi_1)(x_{\varphi_1})\right)}.
    \end{align}
\end{lem}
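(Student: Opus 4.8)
The plan is to derive the inequality from just two ingredients: the first-order optimality conditions of the two minimization problems, and a single application of $\mu$-strong convexity of $\varphi_2$. Write $h \triangleq \varphi_2 - \varphi_1$ (differentiable by hypothesis) and abbreviate $a \triangleq x_{\varphi_1}$, $b \triangleq x_{\varphi_2}$. If $\varphi_1,\varphi_2$ happened to be smooth the argument is one line: optimality gives $\nabla\varphi_1(a)=0$, hence $\nabla h(a)=\nabla\varphi_2(a)$, and $\mu$-strong convexity of $\varphi_2$ together with $\varphi_2(b)\le\varphi_2(a)$ gives $\tfrac\mu2\|a-b\|^2\le\langle\nabla\varphi_2(a),a-b\rangle=\langle\nabla h(a),a-b\rangle$. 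The actual work is to run this argument when, as in our applications, $\varphi_1,\varphi_2$ are non-smooth (they carry the regularizer $\lambda\pi$) and only the difference $h$ is smooth.

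First I would record the optimality of $a$ for $\varphi_1$ in directional form: since $a$ minimizes the convex function $\varphi_1$, for every direction $u$ the one-sided derivative satisfies $\varphi_1'(a;u)\ge 0$ (the limit exists by convexity). The crucial step is then to show $\nabla h(a)\in\partial\varphi_2(a)$: for any $u$ and $t>0$, $\varphi_2(a+tu)=\varphi_1(a+tu)+h(a+tu)\ge \varphi_1(a)+h(a)+t\langle\nabla h(a),u\rangle+o(t)=\varphi_2(a)+t\langle\nabla h(a),u\rangle+o(t)$, using $\varphi_1(a+tu)\ge\varphi_1(a)$ and differentiability of $h$; dividing by $t$ and letting $t\downarrow 0$ gives $\varphi_2'(a;u)\ge\langle\nabla h(a),u\rangle$ for all $u$, which for the convex function $\varphi_2$ is precisely $\nabla h(a)\in\partial\varphi_2(a)$. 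Next I would apply $\mu$-strong convexity of $\varphi_2$ with this subgradient, $\varphi_2(b)\ge\varphi_2(a)+\langle\nabla h(a),b-a\rangle+\tfrac\mu2\|a-b\|^2$, and combine it with $\varphi_2(b)\le\varphi_2(a)$ (optimality of $b$) to obtain $\langle\nabla h(a),a-b\rangle\ge\tfrac\mu2\|a-b\|^2\ge 0$. Since the left side is nonnegative it equals its absolute value, and we conclude $\tfrac\mu2\|x_{\varphi_1}-x_{\varphi_2}\|^2\le|(x_{\varphi_1}-x_{\varphi_2})^\top\nabla(\varphi_2-\varphi_1)(x_{\varphi_1})|$, as claimed. (Note that only strong convexity of $\varphi_2$ and plain convexity of $\varphi_1$ are actually used.)

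I expect the main obstacle to be exactly this non-smooth step. It is tempting to write $\partial\varphi_2(a)=\partial\varphi_1(a)+\nabla h(a)$ and substitute $0\in\partial\varphi_1(a)$, but the subdifferential sum rule is not available here: $h$ is merely differentiable, not convex, so $\varphi_2=\varphi_1+h$ is a disguised difference of convex functions and that identity can fail. The directional-derivative argument above circumvents this — it never needs the full subdifferential of $\varphi_2$, only membership of the single vector $\nabla h(a)$, and this holds for the structural reason that $a$ is a \emph{minimizer} of $\varphi_1$ (so $\varphi_1$'s own subgradient contribution is harmlessly nonnegative in every direction). For completeness I would also flag why I avoid the purely function-value route: adding the growth inequalities $\varphi_1(b)\ge\varphi_1(a)+\tfrac\mu2\|a-b\|^2$ and $\varphi_2(a)\ge\varphi_2(b)+\tfrac\mu2\|a-b\|^2$ yields $h(a)-h(b)\ge\mu\|a-b\|^2$, but rewriting $h(a)-h(b)$ as an inner product with $\nabla h(a)$ would again demand convexity of $h$, so that path does not close cleanly; the optimality-condition route is the right one.
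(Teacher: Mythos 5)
Your proof is correct. Note that the paper does not actually prove this lemma---it imports it verbatim from \cite{Wilson_OptimizerComparison} and only uses it as a black box (e.g.\ in the proofs of \lemref{lem:gen_strong_convex} and \lemref{lem:Distance})---so there is no in-paper argument to compare against; your write-up supplies a self-contained proof consistent with how the lemma is invoked. The two key steps both check out: the directional-derivative computation correctly establishes $\nabla h(x_{\varphi_1})\in\partial\varphi_2(x_{\varphi_1})$ from the minimality of $x_{\varphi_1}$ for $\varphi_1$ together with differentiability of $h=\varphi_2-\varphi_1$ (using that $g\in\partial\varphi_2(a)$ iff $\varphi_2'(a;u)\ge\langle g,u\rangle$ for all $u$), and the subgradient form of $\mu$-strong convexity of $\varphi_2$ combined with $\varphi_2(x_{\varphi_2})\le\varphi_2(x_{\varphi_1})$ then yields $\langle\nabla h(x_{\varphi_1}),x_{\varphi_1}-x_{\varphi_2}\rangle\ge\frac{\mu}{2}\|x_{\varphi_1}-x_{\varphi_2}\|^2\ge 0$, from which the absolute value is harmless. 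Two small remarks. First, you correctly observe that only plain convexity of $\varphi_1$ and strong convexity of $\varphi_2$ are used; this is a mild strengthening of the stated hypotheses and is exactly what the paper needs, since in \appref{app:Prof_dist} the lemma is applied with $\psi_1,\psi_3$ both strongly convex anyway. Second, your caution about the subdifferential sum rule is slightly overstated: when $\varphi_1$ and $\varphi_2=\varphi_1+h$ are both convex and $h$ is differentiable, the identity $\partial\varphi_2(a)=\partial\varphi_1(a)+\nabla h(a)$ does in fact hold (the directional derivatives, hence the support functions of the subdifferentials, split additively). But your direct verification of the single membership $\nabla h(a)\in\partial\varphi_2(a)$ is a clean way to avoid invoking that, and nothing in your argument is affected.
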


\begin{proof}

For the sake of the proof, we will assume that the FIM and the Hessian are invertible matrices. Under the probabilistic interpretation of the loss elements, the overall loss function for $w^n = \boldsymbol{1}^{n\backslash i}$ is 
\begin{align}
    L(\mD, \theta, \lambda, \boldsymbol{1}^{n\backslash i}) \triangleq \frac{1}{n}\sum_{j\ne i}-\log(\CP{y_j}{f(x_j;\theta)}) + \lambda \pi(\theta)
\end{align}
and we assume that $\CP{y}{f(x;\theta)}$ belongs to an exponential family whose natural parameters are the features $f(x;\theta)$, namely, $\log(\CP{y}{f(x;\theta)}) = f^{\top}(x;\theta)t(y) - \log(\sum^{\abs{\pazocal{Y}}}_{\tilde{y}=1}\exp\left\{f^{\top}(x;\theta)t(\tilde{y})\right\}) + \beta(y)$ for some natural statistics $t(y)$. 
For this model, we have 
\begin{align}
    \nabla_{\theta}\log(\CP{y}{f(x;\theta)}) = \nabla_{\theta}f(x;\theta)\nabla_{f}\log(\CP{y}{f(x;\theta)}).
\end{align}
Thus, the approximated FIM, $\brF(\mD, \theta)$, is given by
\begin{subequations}
\begin{align}
    \brF(\mD, \theta) &= \frac{1}{n}\sum^{n}_{i=1}\Esub{\sy\sim P_{\sy|\sx=x_i; \theta}}{\nabla_{\theta}f(x_i;\theta)\nabla_{f}\log(\CP{\sy}{f(x_i;\theta)})\nabla^{\top}_{f}\log(\CP{\sy}{f(x_i;\theta)})\nabla^{\top}_{\theta}f(x_i;\theta)}\\
    \label{eq:FIM_Via_Hessian_ExpFamily}
    &= \frac{1}{n}\sum^{n}_{i=1}\nabla_{\theta}f(x_i;\theta)\Esub{\sy\sim P_{\sy|\sx=x_i; \theta}}{-\nabla^{2}_{f}\log(\CP{\sy}{f(x_i;\theta)})}\nabla^{\top}_{\theta}f(x_i;\theta)\\
     \label{eq:FIM_Via_Hessian_ExpFamily_Final}
    &= -\frac{1}{n}\sum^{n}_{i=1}\nabla_{\theta}f(x_i;\theta)\nabla^{2}_{f}\log(\CP{y_i}{f(x_i;\theta)})\nabla^{\top}_{\theta}f(x_i;\theta)
\end{align}
\end{subequations}
where \eqref{eq:FIM_Via_Hessian_ExpFamily} is by using classical properties of the exponential family, and where the last equality is since the Hessian of an exponential family with respect to the natural parameters $f$ is independent of $y$ (see \appref{app:exp_family_hessian_proof}).
Moreover, we note that the Hessian of the loss is given by
\begin{align}
    \brH(\theta, \boldsymbol{1}^{n\backslash i}) &= \nabla^{2}_{\theta}L(\mD,\theta, \boldsymbol{1}^{n\backslash i}) \\
    &= \nabla^{2}_{\theta}L(\mD,\theta, \boldsymbol{1}^{n\backslash i} - \boldsymbol{1}) + \nabla^{2}_{\theta}L(\mD,\theta, \boldsymbol{1}) \\ 
    &= \nabla^{2}_{\theta}L(\mD,\theta, \boldsymbol{1}^{n\backslash i} - \boldsymbol{1}) + \frac{1}{n}\sum^{n}_{i=1}\nabla^{2}_{\theta}f(x_i;\theta)\nabla_{f}\log(\CP{y_i}{f(x_i;\theta)}) + \brF(\mD;\theta).
\end{align} 
We start by defining the next functions 

\begin{align}
    \psi_1(\theta) &\triangleq 2L(\mD, \theta, \lambda, \boldsymbol{1}^{n\backslash i}) = 2L(\mD, \theta, \boldsymbol{1}^{n\backslash i}) + 2\lambda \pi(\theta), \\ 
    \psi_2(\theta) &\triangleq -2b^{\top}(\hat{\theta}(\boldsymbol{1}), \boldsymbol{1}^{n\backslash i})\cdot(\hat{\theta}(\boldsymbol{1}) - \theta) + (\hat{\theta}(\boldsymbol{1}) - \theta)^{\top}\nabla^{2}L(\mD, \hat{\theta}(\boldsymbol{1}), \boldsymbol{1}^{n\backslash i})(\hat{\theta}(\boldsymbol{1}) - \theta) + 2\lambda \pi(\theta), \\
    \psi_3(\theta) &\triangleq -2b^{\top}(\hat{\theta}(\boldsymbol{1}), \boldsymbol{1}^{n\backslash i})\cdot(\hat{\theta}(\boldsymbol{1}) - \theta) + (\hat{\theta}(\boldsymbol{1}) - \theta)^{\top}\cdot\brF\cdot(\hat{\theta}(\boldsymbol{1}) - \theta) + 2\lambda \pi(\theta) \\
    &= (\theta - (\hat{\theta}(\boldsymbol{1}) - \brF^{-1}\cdot b(\hat{\theta}(\boldsymbol{1}), \boldsymbol{1}^{n\backslash i}) ))^{\top}\brF (\theta - (\hat{\theta}(\boldsymbol{1}) - \brF^{-1}\cdot b(\hat{\theta}(\boldsymbol{1}), \boldsymbol{1}^{n\backslash i}) )) + 2\lambda \pi(\theta) + J
\end{align}
where $J$ is a constant (which is independent of $\theta$) and $\brF$ is an abbreviation for $\brF(\mD, \hat{\theta}(\boldsymbol{1}))$. 
We first note that the minimizer of $\psi_1$ is $\hat{\theta}(\boldsymbol{1}^{n\backslash i})$ and that the minimizer of $\psi_3$ is $\tilde{\theta}(\boldsymbol{1}^{n\backslash i})$ from \eqref{eq:Distance_Approx_eq}. 

We note that Assump.~1 and Assump.~2 guarantees that the overall loss, $L$, is $\mu$-strongly convex and that the difference $L(\mD, \hat{\theta}(\boldsymbol{1}), \lambda, \boldsymbol{1}^{n\backslash i}) - L(\mD, \hat{\theta}(\boldsymbol{1}), \lambda, \boldsymbol{1})$ is differentiable. Thus, using Lem.~2, which follows by applying the optimizer comparison lemma with $L(\mD, \theta, \lambda, \boldsymbol{1}^{n\backslash i})$ and $L(\mD, \theta, \lambda, \boldsymbol{1})$ allows us to derive the following upper bound
\begin{align}
    \label{eq:Closeness_Of_Solutions}
    \|\hat{\theta}(\boldsymbol{1}) - \hat{\theta}(\boldsymbol{1}^{n\backslash i})\| \leq \frac{2}{n\mu}\cdot \|\nabla_{\theta}\ell(z_i, \hat{\theta}(\boldsymbol{1}))\| \triangleq \frac{2g_i}{\mu}.
\end{align}

The optimizer comparison lemma \citep[Lem.~1]{Wilson_OptimizerComparison} with $\psi_1$ and $\psi_3$ and Cauchy-Schwartz inequality yields
\begin{align}
    \frac{\mu}{2}\|\hat{\theta}(\boldsymbol{1}^{n\backslash i}) - \tilde{\theta}(\boldsymbol{1}^{n\backslash i})\|^2 &\leq |(\hat{\theta}(\boldsymbol{1}^{n\backslash i}) - \tilde{\theta}(\boldsymbol{1}^{n\backslash i}))^{\top}(\nabla(\psi_3 - \psi_1)(\hat{\theta}(\boldsymbol{1}^{n\backslash i})))| \\
    &\leq \|\hat{\theta}(\boldsymbol{1}^{n\backslash i}) - \tilde{\theta}(\boldsymbol{1}^{n\backslash i})\|\|(\nabla(\psi_3 - \psi_1)(\hat{\theta}(\boldsymbol{1}^{n\backslash i})))\|\\
\end{align}
We divide both sides by $\|\hat{\theta}(\boldsymbol{1}^{n\backslash i}) - \tilde{\theta}(\boldsymbol{1}^{n\backslash i})\|$, and by using the triangle inequality we get 
\begin{align}
    \label{eq:OptimizerComparison_SecondStep}
    &\frac{\mu}{2}\|\hat{\theta}(\boldsymbol{1}^{n\backslash i}) - \tilde{\theta}(\boldsymbol{1}^{n\backslash i})\| \leq \|\nabla(\psi_3 - \psi_1)(\hat{\theta}(\boldsymbol{1}^{n\backslash i}))\| \\
    \label{eq:OptimizerComparison_Inequality_Triangle}
    &\qquad\leq \|\nabla(\psi_3 - \psi_2)(\hat{\theta}(\boldsymbol{1}^{n\backslash i})) + \nabla(\psi_2 - \psi_1)(\hat{\theta}(\boldsymbol{1}^{n\backslash i}))\|\\
    &\qquad\leq \|\nabla(\psi_3 - \psi_2)(\hat{\theta}(\boldsymbol{1}^{n\backslash i}))\| + \|\nabla(\psi_2 - \psi_1)(\hat{\theta}(\boldsymbol{1}^{n\backslash i}))\| \\
    &\qquad\leq \|\nabla^{2}L(\mD, \hat{\theta}(\boldsymbol{1}), \boldsymbol{1}^{n\backslash i}) - \brF(\mD, \hat{\theta}(\boldsymbol{1}))\|\|\hat{\theta}(\boldsymbol{1}) - \hat{\theta}(\boldsymbol{1}^{n\backslash i})\| + \|(\nabla(\psi_2 - \psi_1)(\hat{\theta}(\boldsymbol{1}^{n\backslash i})))\|\\
    &\qquad = \|\nabla^{2}L(\mD, \hat{\theta}(\boldsymbol{1}), \boldsymbol{1}^{n\backslash i} - \boldsymbol{1}) + \frac{1}{n}\sum^{n}_{i=1}\nabla^{2}_{\theta}f(x_i; \hat{\theta}(\boldsymbol{1}))\nabla_{f}\log(P(y_i|f(x_i;\hat{\theta}(\boldsymbol{1}))))\|\|\hat{\theta}(\boldsymbol{1}) - \hat{\theta}(\boldsymbol{1}^{n\backslash i})\| \\
    &\qquad\qquad\qquad + \|(\nabla(\psi_2 - \psi_1)(\hat{\theta}(\boldsymbol{1}^{n\backslash i})))\|\\
    \label{eq:OptimizerComparison_Inequality_HessianLipschitz}
    &\qquad\leq \frac{g_i}{n\mu}\cdot\|-\nabla_{\theta}f(x_i;\hat{\theta}(\boldsymbol{1}))\nabla^{2}_{f}\log(P(y_i|f(x_i;\hat{\theta}(\boldsymbol{1}))))\nabla^{\top}_{\theta}f(x_i;\hat{\theta}(\boldsymbol{1})) \\
    &\qquad\qquad\qquad + \sum^{n}_{i=1}\nabla^{2}_{\theta}f(x_i;\hat{\theta}(\boldsymbol{1}))\nabla_{f}\log(P(y_i|f(x_i;\hat{\theta}(\boldsymbol{1}))))\| + \frac{Mg_{i}^{2}}{2\mu^2}
\end{align}
where \eqref{eq:OptimizerComparison_Inequality_Triangle} is since the differences $\psi_3 - \psi_2$ and $\psi_2 - \psi_1$ are differentiable and where \eqref{eq:OptimizerComparison_Inequality_HessianLipschitz} is by using the next bound: 
\begin{subequations}    
\begin{align}
     &\|(\nabla(\psi_2 - \psi_1)(\hat{\theta}(\boldsymbol{1}^{n\backslash i})))\| \\
     &= 2\|b(\hat{\theta}(\boldsymbol{1}), \boldsymbol{1}^{n\backslash i}) +\nabla^{2}L(\mD, \hat{\theta}(\boldsymbol{1}), \boldsymbol{1}^{n\backslash i})(\hat{\theta}(\boldsymbol{1}^{n\backslash i}) - \hat{\theta}(\boldsymbol{1})) - \nabla L(\mD, \hat{\theta}(\boldsymbol{1}^{n\backslash i}), \boldsymbol{1}^{n\backslash i})\| \\
     \label{eq:Equality_FullGrad}
     &= 2\|\nabla L(\mD, \hat{\theta}(\boldsymbol{1}), \boldsymbol{1}^{n\backslash i}) +\nabla^{2}L(\mD, \hat{\theta}(\boldsymbol{1}), \boldsymbol{1}^{n\backslash i})(\hat{\theta}(\boldsymbol{1}^{n\backslash i}) - \hat{\theta}(\boldsymbol{1})) - \nabla L(\mD, \hat{\theta}(\boldsymbol{1}^{n\backslash i}), \boldsymbol{1}^{n\backslash i})\| \ \ \ \\
     \label{eq:Inequality_LipschitzGrad}
     &\leq M\cdot\norm{\hat{\theta}(\boldsymbol{1}^{n\backslash i}) - \hat{\theta}(\boldsymbol{1})}^{2}\\
     & \leq \frac{4Mg_{i}^{2}}{\mu^2}
\end{align}
\end{subequations}
where \eqref{eq:Equality_FullGrad} is by the structure and the convexity and differentiability assumptions on $L$, leading to $\nabla L(\mD, \hat{\theta}(\boldsymbol{1}), \boldsymbol{1}) = 0$, \eqref{eq:Inequality_LipschitzGrad} implied by the Hessian Lipschitzness of $L$ (see also [3, Lem.~1.2.4]) and the last inequality is by Lem.~2. 

We further use the triangle inequality to get the next upper bound 
\begin{align}
    \label{eq:DistanceBound_NGD}
    \frac{\mu}{2}\|\hat{\theta}(\boldsymbol{1}^{n\backslash i}) - \tilde{\theta}(\boldsymbol{1}^{n\backslash i})\| &\leq \frac{g_i}{n\mu}(\|\nabla_{\theta}f(x_i;\hat{\theta}(\boldsymbol{1}))\nabla^{2}_{f}\log(P(y_i|f(x_i;\hat{\theta}(\boldsymbol{1}))))\nabla^{\top}_{\theta}f(x_i;\hat{\theta}(\boldsymbol{1}))\| \\
    &\qquad\qquad + \sum^{n}_{i=1}\|\nabla^{2}_{\theta}f(x_i;\hat{\theta}(\boldsymbol{1}))\nabla_{f}\log(P(y_i|f(x_i;\hat{\theta}(\boldsymbol{1}))))\|) + \frac{Mg_{i}^{2}}{2\mu^2}
\end{align}
and by using \assref{ass:LipFeatures}, \assref{ass:LossFunc_ExpFamily} and the boundedness of the Hessian of the loss relative to the features (see \appref{app:ExpFamily_LossHessian}) we get the final bound 
\begin{align}
    \label{eq:Distance_OptimalApprox_withR}
    \|\hat{\theta}(\boldsymbol{1}^{n\backslash i}) - \tilde{\theta}(\boldsymbol{1}^{n\backslash i})\| &\leq \frac{2Qg_i}{n\mu^{2}}\|\nabla_{\theta}f(x_i;\hat{\theta}(\boldsymbol{1}))\|^2 + \frac{Mg_{i}^{2}}{\mu^3} \\
    &\qquad + \frac{2g_i}{n\mu^{2}}\sum^{n}_{i=1}\|\nabla^{2}_{\theta}f(x_i;\hat{\theta}(\boldsymbol{1}))\nabla_{f}\log(P(y_i|f(x_i;\hat{\theta}(\boldsymbol{1}))))\|\\
    &\leq \frac{2Q C^2_f g_i}{n\mu^{2}} + \frac{Mg_{i}^{2}}{\mu^3} + \frac{2g_i\tilde{C}_f}{n\mu^{2}}\sum^{n}_{i=1}\|\nabla_{f}\log(P(y_i|f(x_i;\hat{\theta}(\boldsymbol{1}))))\|
\end{align}
where $Q$ is a constant s.t. $\norm{\nabla^{2}_{f}\log(\CP{y}{f(x;\theta)})} \leq Q$.  
\end{proof}

We now emphasize how the third term disappears whenever our model interpolates the training data (namely, $\ell(z_i, \hat{\theta}(\boldsymbol{1})) = 0, \forall i\in[n]$). In that case, we have $P(y_i|f(x_i;\hat{\theta}(\boldsymbol{1}))) = 1, \ \forall i\in[n]$ \footnote{In the continuous case, this amounts to $P(y_i|f(x_i;\hat{\theta}(\boldsymbol{1})))$ converging to a delta-function, concentrated around the value $y_i$}. Thus, following the notation of \appref{app:exp_family_hessian_proof} we have that $\Esub{\sy\sim P_{\sy|\sx = x_i;\hat{\theta}(\boldsymbol{1})}}{t(\sy)} = t(y_i)$ and since $\nabla_{f}\log(P(y_i|f(x_i;\hat{\theta}(\boldsymbol{1})))) = t(y_i) - \Esub{\sy\sim P_{\sy|\sx = x_i;\hat{\theta}(\boldsymbol{1})}}{t(\sy)}$ we get that the third term is zero. 

\section{Comment on \lemref{lem:Distance} When $\pi(\theta)$ is Twice-Differentiable}
\label{app:Closeness_Diff}
Whenever $\pi(\theta)$ is twice differentiable, an equivalent argument to that of \lemref{lem:Distance} can be stated without the usage of a proximal operator. Specifically, since in this case the entire loss elements $\frac{1}{n}\ell(z_i, \theta) + \lambda \pi(\theta)$ can be approximated using a second-order Taylor expansion, and a solution that uses $\brC(\hat{\theta}(\boldsymbol{1}), \boldsymbol{1}) = \brF(\mD, \hat{\theta}(\boldsymbol{1})) + \lambda \nabla^{2}\pi(\hat{\theta}(\boldsymbol{1}))$ leads to similar arguments as those from \appref{app:Prof_dist}. For this approximation we define the solution via 
\begin{align}
     \tilde{\theta}(\boldsymbol{1}^{n\backslash i}) \triangleq \hat{\theta}(\boldsymbol{1}) - (\brF(\mD,\hat{\theta}(\boldsymbol{1})) + \lambda \nabla^{2}\pi(\hat{\theta}(\boldsymbol{1})))^{-1}b(\hat{\theta}(\boldsymbol{1}), \boldsymbol{1}^{n\backslash i})
\end{align}
and a similar analysis to that of \appref{app:Prof_dist} can be carried out and to lead to similar guarantees. An example for such arguments from a similar application can be found in \citep[Thm.~2]{Wilson_OptimizerComparison}.

\section{Fisher Information Matrix for Exponential Families}
\label{app:exp_family_hessian_proof}
Using the fact that the distribution $\CP{y}{f(x;\theta)}$ belongs to an exponential family, namely 
\begin{align}
    \log(\CP{y}{f(x;\theta)}) = f^{\top}(x;\theta)t(y) - \log\left(\sum^{\abs{\pazocal{Y}}}_{\tilde{y}=1}\exp\left\{f^{\top}(x;\theta)t(\tilde{y})\right\}\right) + \beta(y),    
\end{align}
we can directly evaluate the terms $\Esub{\sy\sim P_{\sy|\sx=x_i;\theta}}{\nabla_{f}\log(\CP{\sy}{f(x;\theta)})\nabla^{\top}_{f}\log(\CP{\sy}{f(x;\theta)})}$ and $\Esub{\sy\sim P_{\sy|\sx=x_i;\theta}}{-\nabla^{2}_{f}\log(\CP{\sy}{f(x_i;\theta)})}$ to establish the desired equality. First, we find that:  
\begin{align}
    \nabla_{f}\log(\CP{y}{f(x;\theta)}) &= \nabla_{f}\left(f^{\top}(x;\theta)t(y) - \log\left(\sum_{y\in\pY}\exp\left\{f^{\top}(x;\theta)t(y)\right\}\right)\right) \\
    &= t(y) - \Esub{\sy\sim P_{\sy|\sx=x_i;\theta}}{t(\sy)}
\end{align}
and 
\begin{align}
    &\Esub{\sy\sim P_{\sy|\sx=x_i;\theta}}{\nabla_{f}\log(\CP{\sy}{f(x;\theta)})\nabla^{\top}_{f}\log(\CP{\sy}{f(x;\theta)})} \\
    &\qquad\qquad= \Esub{\sy\sim P_{\sy|\sx=x_i;\theta}}{(t(\sy) - \Esub{\sy\sim P_{\sy|\sx=x_i;\theta}}{t(\sy)})(t(\sy) - \Esub{\sy\sim P_{\sy|\sx=x_i;\theta}}{t(\sy)})^{\top}}.
\end{align}

Next, we observe that: 
\begin{align}
    -\nabla^{2}_{f}\log(\CP{y}{f(x;\theta)}) &= \nabla_{f}\left(\frac{\sum_{\ty\in\pY}t(\ty)\exp\left\{f^{\top}(x;\theta)t(\ty)\right\}}{\sum_{\ty_1\in\pY}\exp\left\{f^{\top}(x;\theta)t(\ty_1)\right\}}\right)\\
    &= \Esub{\sy\sim P_{\sy|\sx=x_i;\theta}}{t(\sy)t^{\top}(\sy)} - (\Esub{\sy\sim P_{\sy|\sx=x_i;\theta}}{t(\sy)})(\Esub{\sy\sim P_{\sy|\sx=x_i;\theta}}{t(\sy)})^{\top}\\
    &= \Esub{\sy\sim P_{\sy|\sx=x_i;\theta}}{(t(\sy) - \Esub{\sy\sim P_{\sy|\sx=x_i;\theta}}{t(\sy)})(t(\sy) - \Esub{\sy\sim P_{\sy|\sx=x_i;\theta}}{t(\sy)})^{\top}}.
\end{align}
Moreover, we note that this final result holds for any $y$. This concludes the proof. \ $\qedsymbol$ 

\section{Proof of \thmref{thm:objective_bound}}
\label{app:InferenceObjective_Bound}
\begin{proof}
We start by writing the Taylor expansion of $T(\tilde{\theta}(\boldsymbol{1}^{n\backslash i}), \boldsymbol{1}^{n\backslash i})$ around $\hat{\theta}(\boldsymbol{1}^{n\backslash i})$ to get \footnote{the existence of the Taylor expansion of $T$ is guaranteed by \assref{ass:Lipschitz_T}}: 
\begin{align}
    \label{eq:T_Approx}
    T(\tilde{\theta}(\boldsymbol{1}^{n\backslash i}), \boldsymbol{1}^{n\backslash i}) &= T(\hat{\theta}(\boldsymbol{1}^{n\backslash i}), \boldsymbol{1}^{n\backslash i}) + \nabla^{\top}_{\theta}T(\hat{\theta}(\boldsymbol{1}^{n\backslash i}), \boldsymbol{1}^{n\backslash i})(\tilde{\theta}(\boldsymbol{1}^{n\backslash i}) - \hat{\theta}(\boldsymbol{1}^{n\backslash i})) \\
    &\qquad\qquad + \frac{1}{2}(\tilde{\theta}(\boldsymbol{1}^{n\backslash i}) - \hat{\theta}(\boldsymbol{1}^{n\backslash i}))^{\top}\nabla^{2}_{\theta}T(\theta_{\text{mid}}(\boldsymbol{1}^{n\backslash i}), \boldsymbol{1}^{n\backslash i})(\tilde{\theta}(\boldsymbol{1}^{n\backslash i}) - \hat{\theta}(\boldsymbol{1}^{n\backslash i}))
\end{align}
where $\theta_{\text{mid}}(\boldsymbol{1}^{n\backslash i}) = \hat{\theta}(\boldsymbol{1}^{n\backslash i}) + \kappa \cdot (\tilde{\theta}(\boldsymbol{1}^{n\backslash i}) - \hat{\theta}(\boldsymbol{1}^{n\backslash i}))$ for some $\kappa \in [0, 1]$.  
By \eqref{eq:T_Approx} and by the Lipschitz assumptions on $T$ we get 
\begin{subequations}    
\begin{align}
    \label{eq:Score_UpperBound_Approx}
    &\|T(\tilde{\theta}(\boldsymbol{1}^{n\backslash i}), \boldsymbol{1}^{n\backslash i}) - T(\hat{\theta}(\boldsymbol{1}^{n\backslash i}), \boldsymbol{1}^{n\backslash i})\|\\
    &\qquad = \|\nabla^{\top}_{\theta}T(\hat{\theta}(\boldsymbol{1}^{n\backslash i}), \boldsymbol{1}^{n\backslash i})(\tilde{\theta}(\boldsymbol{1}^{n\backslash i}) - \hat{\theta}(\boldsymbol{1}^{n\backslash i})) \\
    &\qquad\qquad\qquad\qquad + \frac{1}{2}(\tilde{\theta}(\boldsymbol{1}^{n\backslash i}) - \hat{\theta}(\boldsymbol{1}^{n\backslash i}))^{\top}\nabla^{2}_{\theta}T(\theta_{\text{mid}}(\boldsymbol{1}^{n\backslash i}), \boldsymbol{1}^{n\backslash i})(\tilde{\theta}(\boldsymbol{1}^{n\backslash i}) - \hat{\theta}(\boldsymbol{1}^{n\backslash i}))\|\\
    \label{eq:Score_UpperBound_Bound1}
    &\qquad \leq \|\nabla_{\theta} T(\hat{\theta}(\boldsymbol{1}^{n\backslash i}), \boldsymbol{1}^{n\backslash i})\|\|\tilde{\theta}(\boldsymbol{1}^{n\backslash i}) - \hat{\theta}(\boldsymbol{1}^{n\backslash i})\| \\
    &\qquad\qquad\qquad\qquad + \frac{1}{2}\|\nabla^{2}_{\theta} T(\theta_{\text{mid}}(\boldsymbol{1}^{n\backslash i}), \boldsymbol{1}^{n\backslash i})\|_{\text{op}}\|\tilde{\theta}(\boldsymbol{1}^{n\backslash i}) - \hat{\theta}(\boldsymbol{1}^{n\backslash i})\|^{2}\ \ \ \ \ \\
    \label{eq:Score_UpperBound_Bound_final}
    &\qquad \leq C_{T_1}\|\tilde{\theta}(\boldsymbol{1}^{n\backslash i}) - \hat{\theta}(\boldsymbol{1}^{n\backslash i})\| + \frac{1}{2}C_{T_2}\|\tilde{\theta}(\boldsymbol{1}^{n\backslash i}) - \hat{\theta}(\boldsymbol{1}^{n\backslash i})\|^{2}.
\end{align}
\end{subequations}

The proof is completed by substituting \eqref{eq:DistBound_Lem} into \eqref{eq:Score_UpperBound_Bound_final}. To prove \eqref{eq:Taylor_Proof}, we write the expansion of $T(\hat{\theta}(\boldsymbol{1}^{n\backslash i}))$ around $\hat{\theta}(\boldsymbol{1})$, to get 
\begin{align}
    &\|T(\hat{\theta}(\boldsymbol{1}^{n\backslash i}), \boldsymbol{1}^{n\backslash i}) - T(\hat{\theta}(\boldsymbol{1}), \boldsymbol{1}^{n\backslash i})  - \nabla_{\theta} T(\hat{\theta}(\boldsymbol{1}), \boldsymbol{1}^{n\backslash i})(\tilde{\theta}(\boldsymbol{1}^{n\backslash i}) - \hat{\theta}(\boldsymbol{1}))\| \\
    &\qquad = \|\nabla_{\theta} T(\hat{\theta}(\boldsymbol{1}), \boldsymbol{1}^{n\backslash i})(\tilde{\theta}(\boldsymbol{1}^{n\backslash i}) - \hat{\theta}(\boldsymbol{1}^{n\backslash i})) \\
    &\qquad\qquad\qquad\qquad + \frac{1}{2}(\hat{\theta}(\boldsymbol{1}^{n\backslash i}) - \hat{\theta}(\boldsymbol{1}))^{\top}\nabla^{2}_{\theta}T(\tilde{\theta}_{\text{mid}}, \boldsymbol{1}^{n\backslash i})(\hat{\theta}(\boldsymbol{1}^{n\backslash i}) - \hat{\theta}(\boldsymbol{1}))\|\\
    &\qquad \leq C_{T_1}\|\tilde{\theta}(\boldsymbol{1}^{n\backslash i}) - \hat{\theta}(\boldsymbol{1}^{n\backslash i})\| + \frac{1}{2}C_{T_2}\|\hat{\theta}(\boldsymbol{1}) - \hat{\theta}(\boldsymbol{1}^{n\backslash i})\|^{2}
\end{align}
where $\tilde{\theta}_{\text{mid}} = \hat{\theta}(\boldsymbol{1}^{n\backslash i}) + \kappa \cdot (\hat{\theta}(\boldsymbol{1}) - \hat{\theta}(\boldsymbol{1}^{n\backslash i}))$ for some $\kappa \in [0, 1]$. Substituting \eqref{eq:DistBound_Lem} and \eqref{eq:Closeness_Of_Solutions} concludes the proof. 
\end{proof}
\section{Proofs of \corolref{corol:LOOCV_FIM} - \corolref{corol:fairness}}
\label{app:corol_proofs}

%%%%%%%%%%%%%%%%%%%%%%%%%%%%%%%%%%%%%%%%%%%%%%%%%%%%%%%%%%%%%%%%%%%
\subsection{Proof of \corolref{corol:LOOCV_FIM}}
\label{app:CV_ACV_Fisher}
% \begin{proof}
We now show how to use \thmref{thm:objective_bound} to approximate LOOCV with similar guarantees to the Hessian-based technique from \citep{Wilson_OptimizerComparison}. Throughout the proof, we will use a refined version of \eqref{eq:Score_UpperBound_Bound_final}, which requires the Lipschitzness of the $T(\cdot, \boldsymbol{1}^{n\backslash i})$ only at $\hat{\theta}(\boldsymbol{1})$. We start by defining $\text{ACV} \triangleq \frac{1}{n}\sum^{n}_{i=1}\ell(z_i, \tilde{\theta}(\boldsymbol{1}^{n\backslash i}))$ and recall that $\text{CV} \triangleq \frac{1}{n}\sum^{n}_{i=1}\ell(z_i, \hat{\theta}(\boldsymbol{1}^{n\backslash i}))$. Then, similarly to \appref{app:InferenceObjective_Bound} we get
\begin{subequations}    
\begin{align}
    &\abs{\text{ACV} - \text{CV}} \\
    &= \abs{\frac{1}{n}\sum^{n}_{i=1}\ell(z_i, \tilde{\theta}(\boldsymbol{1}^{n\backslash i})) - \ell(z_i, \hat{\theta}(\boldsymbol{1}^{n\backslash i}))}\\
    &\leq \frac{1}{n}\sum^{n}_{i=1}\abs{\ell(z_i, \tilde{\theta}(\boldsymbol{1}^{n\backslash i})) - \ell(z_i, \hat{\theta}(\boldsymbol{1}^{n\backslash i}))}\\
    \label{eq:Bound_ACV_Final}
    &\leq \frac{1}{n}\sum^{n}_{i=1}\norm{\nabla_{\theta}\ell(z_i, \hat{\theta}(\boldsymbol{1}^{n\backslash i}))}\left(\frac{2Q C^{2}_f \tilde{g}_{i}}{n^{2}\mu^{2}} + \frac{M\tilde{g}_{i}^{2}}{n^{2}\mu^3} + \frac{2\tilde{g}_i\tilde{C}_f\bar{E}_n}{n\mu^{2}}\right) \\
         &\qquad\qquad + \frac{1}{2}\text{Lip}(\nabla_{\theta} \ell(z_i, \theta))\left(\frac{2Q C^{2}_f \tilde{g}_{i}}{n^{2}\mu^{2}} + \frac{M\tilde{g}_{i}^{2}}{n^{2}\mu^3} + \frac{2\tilde{g}_i\tilde{C}_f\bar{E}_n}{n\mu^{2}}\right)^{2}\\
    \label{eq:LastStep_InferenceBound}
    &\leq \frac{1}{n}\sum^{n}_{i=1}\left(\norm{\nabla_{\theta}\ell(z_i, \hat{\theta}(\boldsymbol{1}))} + \text{Lip}(\nabla_{\theta} \ell(z_i, \theta))\left(\frac{4\tilde{g}^2_i}{n^2\mu^2}\right)\right)\left(\frac{2Q C^{2}_f \tilde{g}_{i}}{n^{2}\mu^{2}} + \frac{M\tilde{g}_{i}^{2}}{n^{2}\mu^3} + \frac{2\tilde{g}_i\tilde{C}_f\bar{E}_n}{n\mu^{2}}\right)\ \ \ \ \ \\
         &\qquad\qquad + \frac{1}{2}\text{Lip}(\nabla_{\theta} \ell(z_i, \theta))\left(\frac{2Q C^{2}_f \tilde{g}_{i}}{n^{2}\mu^{2}} + \frac{M\tilde{g}_{i}^{2}}{n^{2}\mu^3} + \frac{2\tilde{g}_i\tilde{C}_f\bar{E}_n}{n\mu^{2}}\right)^{2}
\end{align}
\end{subequations}

where \eqref{eq:Bound_ACV_Final} is by using \eqref{eq:Score_UpperBound_Bound1} together with the bound from \thmref{thm:objective_bound} and by replacing the Lipschitz constants $C_{T_1}$ and $C_{T_2}$ of the objective with the corresponding gradients from \eqref{eq:Score_UpperBound_Bound1} and \eqref{eq:LastStep_InferenceBound} is by using the Taylor expansion of $\nabla_{\theta}\ell(z_i, \hat{\theta}(\boldsymbol{1}^{n\backslash i}))$ around $\hat{\theta}(\boldsymbol{1})$ and by using \lemref{lem:gen_strong_convex}. Expanding this expression yields 
\begin{align}
    \abs{\text{ACV} - \text{CV}} &\leq \left(\frac{2QC^{2}_{f}}{\mu^{2} n^{2}} + \frac{2\tilde{C}_{f}\bar{E}_n}{\mu^{2} n}\right)\cdot \frac{1}{n}\sum^{n}_{i=1}\norm{\nabla_{\theta}\ell(z_i, \hat{\theta}(\boldsymbol{1}))}^{2} + \left(\frac{M}{\mu^{3} n^{2}}\right)\cdot \frac{1}{n}\sum^{n}_{i=1}\norm{\nabla_{\theta}\ell(z_i, \hat{\theta}(\boldsymbol{1}))}^{3} \\
    &+\left(\frac{8QC^{2}_{f}}{\mu^{4} n^{4}} + \frac{8\tilde{C}_{f}\bar{E}_n}{\mu^{4} n^{3}}\right)\cdot \frac{1}{n}\sum^{n}_{i=1}\text{Lip}(\nabla_{\theta} \ell(z_i, \theta))\norm{\nabla_{\theta}\ell(z_i, \hat{\theta}(\boldsymbol{1}))}^{3}
    \\
    &+\left(\frac{4M}{\mu^{5} n^{4}}\right)\cdot \frac{1}{n}\sum^{n}_{i=1}\text{Lip}(\nabla_{\theta} \ell(z_i, \theta))\norm{\nabla_{\theta}\ell(z_i, \hat{\theta}(\boldsymbol{1}))}^{4}
    \\
    &+ \left(\frac{2Q^{2}C^{4}_{f}}{\mu^{4}n^{4}} + \frac{2\tilde{C}^{2}_f\bar{E}^{2}_n}{\mu^{4}n^{2}}\right)\cdot \frac{1}{n}\sum^{n}_{i=1}\text{Lip}(\nabla_{\theta} \ell(z_i, \theta))\norm{\nabla_{\theta}\ell(z_i, \hat{\theta}(\boldsymbol{1}))}^{2} \\ 
    &+ \left(\frac{2QC^{2}_{f}M}{n^{4}\mu^{5}}\right)\cdot \frac{1}{n}\sum^{n}_{i=1}\text{Lip}(\nabla_{\theta} \ell(z_i, \theta))\norm{\nabla_{\theta}\ell(z_i, \hat{\theta}(\boldsymbol{1}))}^{3} \\ 
    &+ \left(\frac{M^{2}}{n^{4}\mu^{6}}\right)\cdot \frac{1}{n}\sum^{n}_{i=1}\text{Lip}(\nabla_{\theta} \ell(z_i, \theta))\norm{\nabla_{\theta}\ell(z_i, \hat{\theta}(\boldsymbol{1}))}^{4}\\ 
    &+ \left(\frac{M\tilde{C}_f\bar{E}_n}{n^{3}\mu^{5}}\right)\cdot \frac{1}{n}\sum^{n}_{i=1}\text{Lip}(\nabla_{\theta} \ell(z_i, \theta))\norm{\nabla_{\theta}\ell(z_i, \hat{\theta}(\boldsymbol{1}))}^{3}\\ 
    &+ \left(\frac{2Q\tilde{C}_fC^{2}_f\bar{E}_n}{n^{3}\mu^{4}}\right)\cdot \frac{1}{n}\sum^{n}_{i=1}\text{Lip}(\nabla_{\theta} \ell(z_i, \theta))\norm{\nabla_{\theta}\ell(z_i, \hat{\theta}(\boldsymbol{1}))}^{2}
\end{align}
whose decay rate is dictated by the first two terms and is given by $O\left(\frac{C^{2}_fB_{02}}{\mu^{2} n^{2}} + \frac{\tilde{C}_f \bar{E}_nB_{02}}{\mu^{2} n}+ \frac{MB_{03}}{\mu^3 n^2}\right)$. 
\qed 

%%%%%%%%%%%%%%%%%%%%%%%%%%%%%%%%%%%%%%%%%%%%%%%%%%%%%%%%%%%%%%%%%%%
\subsection{Proof of \corolref{corol:unlearning_FIM}}
\label{app:corol_unlearning_proof} 
% \begin{proof}
    The proof follows similarly to that from \citep{WilsonSuriyakumar_UnlearningProximal} by using the %assumption $\tilde{C}_f\bar{E}_n = o(\frac{C^2_f}{n} + \frac{Mg_i}{\mu})$ and 
    bound $\tilde{g_i} \leq G$ in \eqref{eq:DistBound_Lem} and then using the Gaussian mechanism for differential privacy \citep[App.~A]{Dwork_AlgFoundationd_DP}.     
    \qed

% \end{proof}
We note that \corolref{corol:unlearning_FIM} parallels a similar result to that of \propref{prop:unlearning_wilson}, with different Lipschitz constants and with an additional term that depends on $\bar{E}_n$.
%%%%%%%%%%%%%%%%%%%%%%%%%%%%%%%%%%%%%%%%%%%%%%%%%%%%%%%%%%%%%%%%%%%
\subsection{Proof of \corolref{corol:Attributed_FIM}}
\label{app:corol_Attributed_FIM_proof}
% \begin{proof}
    The proof is by substituting $\tilde{g}_i = \|\nabla_{\theta}\ell(z_i, \hat{\theta}(\boldsymbol{1}))\|$ in \eqref{eq:Objective_bound_final} and \eqref{eq:Taylor_Proof} and maximizing over $i$. 
    \qed 

We note that this proof parallels a similar result to that of \propref{prop:attributions}, with two additional terms: one that depends on $\bar{E}_n$ and the other that depends on the Lipschitz coefficient of the features $C_f$. 

%%%%%%%%%%%%%%%%%%%%%%%%%%%%%%%%%%%%%%%%%%%%%%%%%%%%%%%%%%%%%%%%%%
\subsection{Proof of \corolref{corol:fairness}}
\label{app:corol_Fairness}
% \begin{proof}
By using the definition of $T$ from \eqref{eq:DP_Def} and using the linearity of expectation and the triangle inequality we get that the Lipschitz coefficient of $T$ from \eqref{eq:DP_Def}, $C_{T_1}$, is given by $2C_f$. Then, the proof follows by substituting $\tilde{g}_i = \|\nabla_{\theta}\ell(z_i, \hat{\theta}(\boldsymbol{1}))\|$ in \eqref{eq:Objective_bound_final} and maximizing over $i$. 
\qed 

% \end{proof}
\section{The Connection Between Hessian-based IF and AFIF}
\label{app:dist_bound_IJ_F}
We now present two results that establish a connection between our AFIF framework and the Hessian-based influence function. First, we will prove that the Hessian-based solution $\tilde{\theta}^{\mathrm{IJ}}(\boldsymbol{1}^{n\backslash i})$ and our FIM-based solution $\tilde{\theta}^{\mathrm{IJ,AF}}(\boldsymbol{1}^{n\backslash i})$ are close. Then, we will prove that a similar statement holds with regard to the inference objective, namely, $T(\tilde{\theta}^{\mathrm{IJ}}(\boldsymbol{1}^{n\backslash i}))$ and $T(\tilde{\theta}^{\mathrm{IJ,AF}}(\boldsymbol{1}^{n\backslash i}))$ are also closed. These findings suggest that, while our development relies on assumptions that are rarely met in practical applications, the AFIF can effectively replace the Hessian-based IF without altering the conclusions typically drawn from the latter.

%%%%%%%%%%%%%%%%%%%%%%%%%%%%%%%%%%%%%%%%%%%%%%%%%%%%%%%%%%%%%%%%
\subsection{Proof of Closeness of $\tilde{\theta}^{\mathrm{IJ}}(\boldsymbol{1}^{n\backslash i})$ and $\tilde{\theta}^{\mathrm{IJ,AF}}(\boldsymbol{1}^{n\backslash i})$}
\label{app:Proof_of_distance_bound_IJ_AFIF}
We will prove a slightly modified result that correspond to the definitions from \citep{Bae_PBRF_Influence}; namely, $\pi(\theta) = \norm{\theta}^2$ and where $\tilde{\theta}^{\mathrm{IJ}}(\boldsymbol{1}^{n\backslash i})$ and $\tilde{\theta}^{\mathrm{IJ,AF}}(\boldsymbol{1}^{n\backslash i})$ are defined with the regularized matrices and where $\lambda$ is chosen s.t. $\brH(\hat{\theta}(\boldsymbol{1}), \boldsymbol{1}) + \lambda \brI\succeq 0$ and $\brF(\mD, \hat{\theta}(\boldsymbol{1})) + \lambda \brI \succeq 0$. %We will then show how our framework can be used to further bound the distance between $T(\tilde{\theta}^{\mathrm{IJ}}(\boldsymbol{1}^{n\backslash i}))$ and $T(\tilde{\theta}^{\mathrm{IJ,AF}}(\boldsymbol{1}^{n\backslash i}))$. 

\begin{proof}
The proof follows similarly to \appref{app:Prof_dist}. We define the functions 
\begin{align}
    \label{eq:Auxiliary_Func_Def_IJ_F}
    \psi_1 &= -2b^{\top}(\hat{\theta}(\boldsymbol{1}), \boldsymbol{1}^{n\backslash i})(\hat{\theta}(\boldsymbol{1}^{n\backslash i}) - \theta) + (\hat{\theta}(\boldsymbol{1}^{n\backslash i}) - \theta)^{\top}(\brH(\hat{\theta}(\boldsymbol{1}), \boldsymbol{1}) + \lambda \brI)(\hat{\theta}(\boldsymbol{1}^{n\backslash i}) - \theta), \\
    \psi_2 &= -2b^{\top}(\hat{\theta}(\boldsymbol{1}), \boldsymbol{1}^{n\backslash i})(\hat{\theta}(\boldsymbol{1}^{n\backslash i}) - \theta) + (\hat{\theta}(\boldsymbol{1}^{n\backslash i}) - \theta)^{\top}(\brF(\mD, \hat{\theta}(\boldsymbol{1})) + \lambda \brI)(\hat{\theta}(\boldsymbol{1}^{n\backslash i}) - \theta)
\end{align}
whose minimizers correspond to $\tilde{\theta}^{\text{IJ}}(\boldsymbol{1}^{n\backslash i})$ and $\tilde{\theta}^{\mathrm{IJ,AF}}(\boldsymbol{1}^{n\backslash i})$ with regularized matrices, respectively. By our PSD assumption, we note that $\psi_1$ and $\psi_2$ are strongly convex, and we denote the strong convexity constant by $\mu$. We further assume that \assref{ass:LossFunc_ExpFamily} and \assref{ass:LipFeatures} hold. Then, using \citep[Lem.~1]{Wilson_OptimizerComparison} we get that 
\begin{align}
    &\frac{\mu}{2}\|\tilde{\theta}^{\text{IJ}}(\boldsymbol{1}^{n\backslash i}) - \tilde{\theta}^{\mathrm{IJ,AF}}(\boldsymbol{1}^{n\backslash i})\|^{2} \\
    &\qquad\leq \|\nabla(\psi_2 - \psi_1)(\tilde{\theta}^{\mathrm{IJ,AF}}(\boldsymbol{1}^{n\backslash i}))\| \\
    &\qquad\leq \|\brH(\hat{\theta}(\boldsymbol{1}), \boldsymbol{1}) - \brF(\mD, \hat{\theta}(\boldsymbol{1}))\|\|\hat{\theta}(\boldsymbol{1}^{n\backslash i}) - \tilde{\theta}^{\mathrm{IJ,AF}}(\boldsymbol{1}^{n\backslash i})\|\\
    &\qquad\leq \|\frac{1}{n}\sum^{n}_{i=1}\nabla^{2}_{\theta}f(x_i; \hat{\theta}(\boldsymbol{1}))\nabla_{f}\log(P(y_i|f(x_i;\hat{\theta}(\boldsymbol{1}))))\|\|\hat{\theta}(\boldsymbol{1}^{n\backslash i}) - \tilde{\theta}^{\mathrm{IJ,AF}}(\boldsymbol{1}^{n\backslash i})\| \\
    &\qquad\leq \frac{\tilde{C}_f}{n}\sum^{n}_{i=1}\|\nabla_{f}\log(P(y_i|f(x_i;\hat{\theta}(\boldsymbol{1}))))\|\|\hat{\theta}(\boldsymbol{1}^{n\backslash i}) - \tilde{\theta}^{\mathrm{IJ,AF}}(\boldsymbol{1}^{n\backslash i})\|\\
    \label{eq:IJ_AFIJ_LastBound}
    &\qquad= \tilde{C}_f\bar{E}_n\|\hat{\theta}(\boldsymbol{1}^{n\backslash i}) - \tilde{\theta}^{\mathrm{IJ,AF}}(\boldsymbol{1}^{n\backslash i})\|.
\end{align}
The final distance bound can be achieved by substituting \eqref{eq:DistBound_Lem} into \eqref{eq:IJ_AFIJ_LastBound}. 
\end{proof}

We note that this bound tells us that the Hessian-based solution and the FIM-based solution are close up to a term that depends on $\bar{E}_n$ times the distance of the error $\|\hat{\theta}(\boldsymbol{1}^{n\backslash i}) - \tilde{\theta}^{\mathrm{IJ,AF}}(\boldsymbol{1}^{n\backslash i})\|$, and gives further insight upon the empirical usage of the FIM in influence assessment tasks as done in \citep{Bae_PBRF_Influence}. 

%%%%%%%%%%%%%%%%%%%%%%%%%%%%%%
\subsection{Proof of Closeness of $T(\tilde{\theta}^{\mathrm{IJ}}(\boldsymbol{1}^{n\backslash i}), \boldsymbol{1}^{n\backslash i})$ and $T(\tilde{\theta}^{\mathrm{IJ, AF}}(\boldsymbol{1}^{n\backslash i}), \boldsymbol{1}^{n\backslash i})$}
We will now proceed to prove a stronger result, claiming that the distance between the inference objective evaluated on $T(\tilde{\theta}^{\mathrm{IJ}}(\boldsymbol{1}^{n\backslash i}), \boldsymbol{1}^{n\backslash i})$ and on $T(\tilde{\theta}^{\mathrm{IJ, AF}}(\boldsymbol{1}^{n\backslash i}), \boldsymbol{1}^{n\backslash i})$ is small, further justifying the utility of the FIM-based influence measurement. 

\begin{proof}
    The proof follows similarly to the proof of \thmref{thm:objective_bound}. Assume that \assref{ass:LossFunc_ExpFamily}, \assref{ass:Lipschitz_T} and \assref{ass:LipFeatures} hold. Then, similarly to the proof from \appref{app:InferenceObjective_Bound} we use the Taylor expansion of $T(\tilde{\theta}^{\mathrm{IJ}}(\boldsymbol{1}^{n\backslash i}), \boldsymbol{1}^{n\backslash i})$ around $\tilde{\theta}^{\mathrm{IJ, AF}}(\boldsymbol{1}^{n\backslash i})$ to get
    \begin{align}
        &\|T(\tilde{\theta}^{\mathrm{IJ}}(\boldsymbol{1}^{n\backslash i}),  \boldsymbol{1}^{n\backslash i}) - T(\tilde{\theta}^{\mathrm{IJ, AF}}(\boldsymbol{1}^{n\backslash i}), \boldsymbol{1}^{n\backslash i})\|\\
        &\qquad = \|\nabla^{\top}_{\theta}T(\tilde{\theta}^{\mathrm{IJ, AF}}(\boldsymbol{1}^{n\backslash i}), \boldsymbol{1}^{n\backslash i})(\tilde{\theta}^{\mathrm{IJ, AF}}(\boldsymbol{1}^{n\backslash i}) - \tilde{\theta}^{\mathrm{IJ}}(\boldsymbol{1}^{n\backslash i})) \\
        &\qquad\qquad\qquad + \frac{1}{2}(\tilde{\theta}^{\mathrm{IJ, AF}}(\boldsymbol{1}^{n\backslash i}) - \tilde{\theta}^{\mathrm{IJ}}(\boldsymbol{1}^{n\backslash i}))^{\top}\nabla^{2}_{\theta}T(\theta_{\text{mid}}, \boldsymbol{1}^{n\backslash i})(\tilde{\theta}^{\mathrm{IJ, AF}}(\boldsymbol{1}^{n\backslash i}) - \tilde{\theta}^{\mathrm{IJ}}(\boldsymbol{1}^{n\backslash i}))\|\\
        &\qquad \leq \|\nabla_{\theta} T(\tilde{\theta}^{\mathrm{IJ, AF}}(\boldsymbol{1}^{n\backslash i}), \boldsymbol{1}^{n\backslash i})\|\|\tilde{\theta}^{\mathrm{IJ, AF}}(\boldsymbol{1}^{n\backslash i}) - \tilde{\theta}^{\mathrm{IJ}}(\boldsymbol{1}^{n\backslash i})\| \\
        &\qquad\qquad + \frac{1}{2}\|\nabla^{2}_{\theta} T(\theta_{\text{mid}}, \boldsymbol{1}^{n\backslash i})\|_{\text{op}}\|\tilde{\theta}^{\mathrm{IJ, AF}}(\boldsymbol{1}^{n\backslash i}) - \tilde{\theta}^{\mathrm{IJ}}(\boldsymbol{1}^{n\backslash i})\|^{2}\\
        &\qquad \leq C_{T_1}\|\tilde{\theta}^{\mathrm{IJ, AF}}(\boldsymbol{1}^{n\backslash i}) - \tilde{\theta}^{\mathrm{IJ}}(\boldsymbol{1}^{n\backslash i})\| + \frac{1}{2}C_{T_2}\|\tilde{\theta}^{\mathrm{IJ, AF}}(\boldsymbol{1}^{n\backslash i}) - \tilde{\theta}^{\mathrm{IJ}}(\boldsymbol{1}^{n\backslash i})\|^{2}\\
        &\qquad \leq \bar{C}_f\bar{E}_nC_{T_1}\|\hat{\theta}(\boldsymbol{1}^{n\backslash i}) - \tilde{\theta}^{\mathrm{IJ,AF}}(\boldsymbol{1}^{n\backslash i})\| + \frac{1}{2}C_{T_2}(\bar{C}_f\bar{E}_n)^{2}\|\hat{\theta}(\boldsymbol{1}^{n\backslash i}) - \tilde{\theta}^{\mathrm{IJ,AF}}(\boldsymbol{1}^{n\backslash i})\|^{2}
    \end{align}
\end{proof}
We note that this bound goes to zero whenever the quality of our approximation $\tilde{\theta}^{\mathrm{IJ,AF}}(\boldsymbol{1}^{n\backslash i})$ improves and whenever $\bar{E}_n \to 0$.

\section{Experimental Details}
\label{app:ExpDetails}

All experiments were implemented using the PyTorch \citep{Pytorch_paper} framework. The experiments from \secref{s:Exp_Fairness} and \secref{s:Exp_Influence} ran on NVIDIA A100 GPU, and the experiments from \secref{s:Exp_CV} ran on NVIDIA Tesla T4 GPU, demonstrating a consistent improvement in computational time across different GPU platforms. The datasets and models used in our experiments are detailed below.

%%%%%%%%%%%%%%%%%%%%%%%%%%%%%%%%%%%%%%%%%%%%%%
\subsection{Datasets}
\label{s:Datasets}
%%%%%%%%%%%%%%%%%%%%%%%%%%%%%%%%%%%%%%%%%%%%%%
\subsubsection{Adult}
We utilized the Adult dataset \citep{adult_2} from 

\url{https://archive.ics.uci.edu/ml/machine-learning-databases/adult}, to perform the task of predicting whether an individual’s income is more than 50,000\$ using $14$ demographic features such as age, education, marital status, and country of origin. We aimed to keep sex as a sensitive attribute that requires fairness. The dataset contains 48,842 samples, divided into 32561 train samples and 16281 test samples. During pre-processing, we remove the sensitive attribute from the set of input features, discard rows with any missing data, convert textual values to categorical ones, and normalize the numerical data using the \texttt{StandardScaler()} function from the \texttt{sklearn.preprocessing} module. These pre-processing steps are consistent with those used in previous analyses of this dataset (e.g., \citep{AbhinShah_Fairness}). We randomized the sample order by enabling the \texttt{Shuffle} option when creating the Dataloaders using \texttt{torch.utils.data.DataLoader}, ensuring the data was shuffled before being split into batches. The DP was estimated on the training data. 

%%%%%%%%%%%%%%%%%%%%%%%%%%%%%%%%%%%%%%%%%%%%%%
\subsubsection{Insurance}
We utilized the insurance dataset \citep{lantz2019machine} from  \url{https://www.kaggle.com/datasets/teertha/ushealthinsurancedataset}, for predicting the total annual medical expenses of individuals using 5 demographic features from the U.S. Census Bureau. The sensitive attribute is sex. During pre-processing, we remove the sensitive attribute from the set of input features and further standardized the features to be between the range $0$ to $1$. The data has 1,338 samples with 676 males and 662 females. We use a train-test split ratio 0.8:0.2. We randomized the sample order by enabling the \texttt{Shuffle} option when creating the Dataloaders using \texttt{torch.utils.data.DataLoader}, ensuring the data was shuffled before being split into batches. The $\chi^2$ was estimated on the training data. 

%%%%%%%%%%%%%%%%%%%%%%%%%%%%%%%%%%%%%%%%%%%%%%
\subsubsection{Crime}
We utilized the crime dataset \citep{redmond2002data} from \url{https://archive.ics.uci.edu/dataset/183/communities+and+crime}, considers predicting the number of violent crimes per 100K population using socio-economic information of communities in the U.S. The sensitive attribute is the percentage of people belonging to a particular race in the community. During pre-processing, we drop all the samples with the value of sensitive attribute less than 5\% to remove any outliers. We also remove the non-predictive attributes and the sensitive attribute from the set of input features, and normalize all attributes to the standardized range of [0, 1]. The resulting data has 1,112 samples, and we use a train-test split ratio 0.8:0.2. We randomized the sample order by enabling the \texttt{Shuffle} option when creating the Dataloaders using \texttt{torch.utils.data.DataLoader}, ensuring the data was shuffled before being split into batches. The $\chi^2$ was estimated on the training data. 
s
%%%%%%%%%%%%%%%%%%%%%%%%%%%%%%%%%%%%%%%%%%%%%%
\subsubsection{CIFAR10}
We utilized the CIFAR10 \citep{CIFAR10_Krizhevsky} dataset as provided by the \texttt{torchvision} package in PyTorch. We trained the models without data augmentation. We pre-processed the data using the next three steps: first, we resized the image to have a size of $224\times 224$ pixels. Then, we converted the images to tensors using the \texttt{transforms.ToTensor()} method. Next, the images were normalized using the \texttt{transforms.Normalize()} method. The normalization process adjusts the image data so that the pixel values have a mean of $0.4914, 0.4822$, and $0.4465$ and a standard deviation of $0.2023, 0.1994$, and $0.2010$ for the red, green, and blue channels, respectively. Lastly, we filtered the dataset by leaving only the images whose label was ``plane" or ``car". We randomized the sample order by enabling the \texttt{Shuffle} option when creating the Dataloaders using \texttt{torch.utils.data.DataLoader}, ensuring the data was shuffled before being split into batches.

%%%%%%%%%%%%%%%%%%%%%%%%%%%%%%%%%%%%%%%%%%%%%%
\subsection{Models}
\label{app:Details_NN_Arch}
All models were trained either using a cross-entropy loss or either using an MSE loss, implemented via \texttt{torch.nn.CrossEntropyLoss()} and \texttt{torch.nn.MSELoss()}. Moreover, unless specified otherwise, all the experiments were conducted using $L_2$ regularization, namely, $\pi(\theta) = \norm{\theta}^2$, that was incorporated into the loss by the usage of weight-decay and the AdamW optimizer \citep{loshchilov2017fixing}. 

%%%%%%%%%%%%%%%%%%%%%%%%%%%%%%%%%%%%%%%%%%%%%%
\textbf{Two-Layer Classifier:}
For the tasks described in \secref{s:Exp_Fairness} and \secref{s:Exp_CV}, we trained a two-layer fully-connected network. For the Adult dataset, we have used a softmax activation, where for the insurance and crime datasets (where the task is regression) we didn't use any activation. The activation function for the hidden layer was chosen as \texttt{SeLU} activation. We used two variants of this model:
\begin{enumerate}
    \item For the task from \secref{s:Exp_Fairness}, the width of the hidden layer was chosen to be $1000$. We trained the model for $100$ epochs using the AdamW optimizer, with a learning rate of $10^{-4}$, batches of size $100$, momentum parameters $(\beta_1, \beta_2) = (0.9, 0.999)$ and a weight-decay of $10^{-6}$. 
    
    \item For the task from \secref{s:Exp_CV}, the width of the hidden layer was chosen to be $30000$. We trained the model using the AdamW optimizer, with a learning rate of $10^{-4}$, batch size of $100$, momentum parameters $(\beta_1, \beta_2) = (0.9, 0.999)$ and a weight-decay of $10^{-8}$. We varied the number of epochs from $1$ to $10$. 
\end{enumerate}

%%%%%%%%%%%%%%%%%%%%%%%%%%%%%%%%%%%%%%%%%%%%%%
\textbf{CNN:}
The network comprises two convolutional layers and three fully connected layers, with max pooling operations interleaved between the convolutions. The first convolution layer processes a three-channel input to produce six channels using a 5×5 kernel. This is followed by a max pooling layer with a 2×2 window. The second convolution layer takes the six-channel output and produces sixteen channels using a 5×5 kernel, and is again followed by a 2×2 max pooling layer. After the pooling operations, the output is flattened into a one-dimensional vector. This vector is then passed sequentially through three fully connected layers: the first maps the flattened vector (of size 16×53×53) to 120 units, the second reduces it from 120 to 84 units, and the final layer maps from 84 units to 2 output units. ReLU activation functions are applied after the convolution layers and the first two fully connected layers. The network was trained for $100$ epochs on a subset of the CIFAR10 dataset that contains only samples with the label ``plane" or ``car" using the AdamW optimizer with a learning rate of $10^{-5}$, a weight decay of $10^{-6}$, and the default momentum parameters $(\beta_1, \beta_2) = (0.9, 0.999)$. The model was trained with a batch size of 128. 

%%%%%%%%%%%%%%%%%%%%%%%%%%%%%%%%%%%%%%%%%%%%%%
\textbf{ResNet18:}
We used PyTorch's pre-trained version of ResNet18, initially trained on the ImageNet dataset, as delivered in the \texttt{torchvision.models} library. A fully connected layer of size $1000 \times 2$ was added, and the whole network (the pre-trained part and the additional output layer) was fine-tuned for $10$ epochs on a subset of the CIFAR10 dataset that contains only samples with the label ``plane" or ``car" using the AdamW optimizer with a learning rate of $10^{-5}$, a weight decay of $10^{-6}$, and the default momentum parameters $(\beta_1, \beta_2) = (0.9, 0.999)$. The model was trained with a batch size of 128. 

%%%%%%%%%%%%%%%%%%%%%%%%%%%%%%%%%%%%%%%%%%%%%%
% \subsection{Influence Measurement}
% \label{app:Details_Influence}
% The influence measurements were done using the repository codebase \url{https://github.com/alstonlo/torch-influence} based on the \texttt{LiSSAInfluenceModule}, for which we have further integrated our code to implement the tasks listed in \secref{s:ProblemDef}. A detailed description of the \texttt{LiSSA} algorithm can be found in \citep{HazanAgrawal_Lissa, Bae_PBRF_Influence}. For the experiment from \secref{s:Exp_Fairness}, we used the \texttt{LiSSA} algorithm with the parameter repeat set to $5$, the depth set to $5000$, and $\sigma = \frac{1}{500}$. For the experiments in \secref{s:Exp_CV}, the repeat parameter was set to $5$, the depth to $15000$, and $\sigma = \frac{1}{500}$. For the experiments in \secref{s:Exp_Influence}, we have used $5$ repetitions; the depth was set to $1000$, and the $\sigma = \frac{1}{500}$. 

%%%%%%%%%%%%%%%%%%%%%%%%%%%%%%%%%%%%%%%%%%%%%%%%%%%%%%%%%%%%%%%%%%%%%%%%%%%%%%%
\subsection{Details for Fairness and Unlearning}
\label{app:Fairness_Details}
We trained the model on the training set of each dataset. Using the trained model, we estimated the fairness metric (either \eqref{eq:DP_Def} or \eqref{eq:chi2_def}) on the training data and measured the influence of each sample on this metric using the Taylor series-based approximation. We then selected all indices with a positive influence and unlearned them from the model by applying \eqref{eq:Distance_Approx_eq}. The vector $w^{n}$ used for generating $\tilde{\theta}^{\mathrm{IJ, AF}}$ had zeros at the selected indices and ones elsewhere. In all cases, we have measured influence using the \texttt{LiSSA} algorithm (see detailed description in \cite{Bae_PBRF_Influence, HazanAgrawal_Lissa}), where we have set the depth parameter to $2000$ and the number of repetitions to $3$. The parameter $\sigma$ was set to $\frac{1}{500}$ for the Adult dataset, and to $\frac{1}{2000}$ for the Insurance and the Crime datasets. Those parameters were tuned manually for achieving good results for both the Fisher-based and the Hessian-based techniques for each dataset. 

%%%%%%%%%%%%%%%%%%%%%%%%%%%%%%%%%%%%%%%%%%%%%%%%%%%%%%%%%%%%%%%%%%%%%%%%%%%%%%%
\subsection{Details for Cross-Validation}
\label{app:CV_Details}
We have performed a leave-$k$-out CV to estimate the test loss. To that end, we first pick a random subset of $6000$ training points and generate the leave-$k$-out estimator by using \eqref{eq:Distance_Approx_eq} where $\tilde{\theta}^{\mathrm{IJ, AF}}$ was generated with a vector $w^{n}$ that contain zeros for the chosen indices and ones everywhere else. Then, we estimated the loss this model has on the samples chosen by using the plug-in estimator and then averaging the loss estimates over the different samples. The final estimate was generated by repeating this process five different times and reporting the averaged estimate across the different experiments. 

%%%%%%%%%%%%%%%%%%%%%%%%%%%%%%%%%%%%%%%%%%%%%%%%%%%%%%%%%%%%%%%%%%%%%%%%%%%%%%%
\subsection{Details for Data Attribution}
\label{app:Attribution_details}
For the data attribution experiments, we used the trained model and calculated the influence scores by using the Taylor series-based approximation for the inference objective $\ell(z_{\text{test}}, \theta) - \ell(z_{\text{test}}, \hat{\theta}(\boldsymbol{1}))$, namely 
\begin{align}
    \text{IF}(z_{\text{test}}, z_i) = -\nabla^{\top}_{\theta}\ell(z_{\text{test}}, \hat{\theta}(\boldsymbol{1}))(\brC(\hat{\theta}(\boldsymbol{1}), \boldsymbol{1}))^{-1}\nabla_{\theta}\ell(z_i, \hat{\theta}(\boldsymbol{1}))
\end{align}
where $\brC(\hat{\theta}(\boldsymbol{1}), \boldsymbol{1})$ is either the Hessian or the approximated FIM. 

\section{Additional Experiments}
\label{app:additional_exps}
% %%%%%%%%%%%%%%%%%%%%%%%%%%%%%%%%%%%%%%%%%%%%%%%%%%%%%%%%%%%%%%%%%%%%%%%%%%%%%%%
\subsection{Additional Experiments for \secref{s:Exp_Fairness}}
\label{app:AdditionalExps_Fairness}

To further demonstrate the usefulness of our approach, we have repeated the same experiment from \secref{s:Exp_Fairness} but with different scaling factors for the \texttt{LiSSA} algorithm. Our goal is to show that in practice the AFIF provides further more stable method for measuring influence. To that end, we have increased the scaling factor of the \texttt{LiSSA} algorithm, which controls its convergence. Following \figref{fig:combined_experiment_additional_lissa}, the Hessian-based method fails to provide a reasonable solution that corrects the model's fairness and provides solutions with inconsistent fairness metrics. On the other hand, our algorithm can reduce fairness while maintaining the model performance and further consistently outperforms the Hessian-based method regarding computational time. This further demonstrates the superiority of our algorithm in terms of computational time and further shows that it requires less hyperparameter tuning. 

\begin{figure}[t]
  \centering
  \begin{subfigure}[t]{0.32\textwidth}
    \centering
    \includegraphics[width=\textwidth]{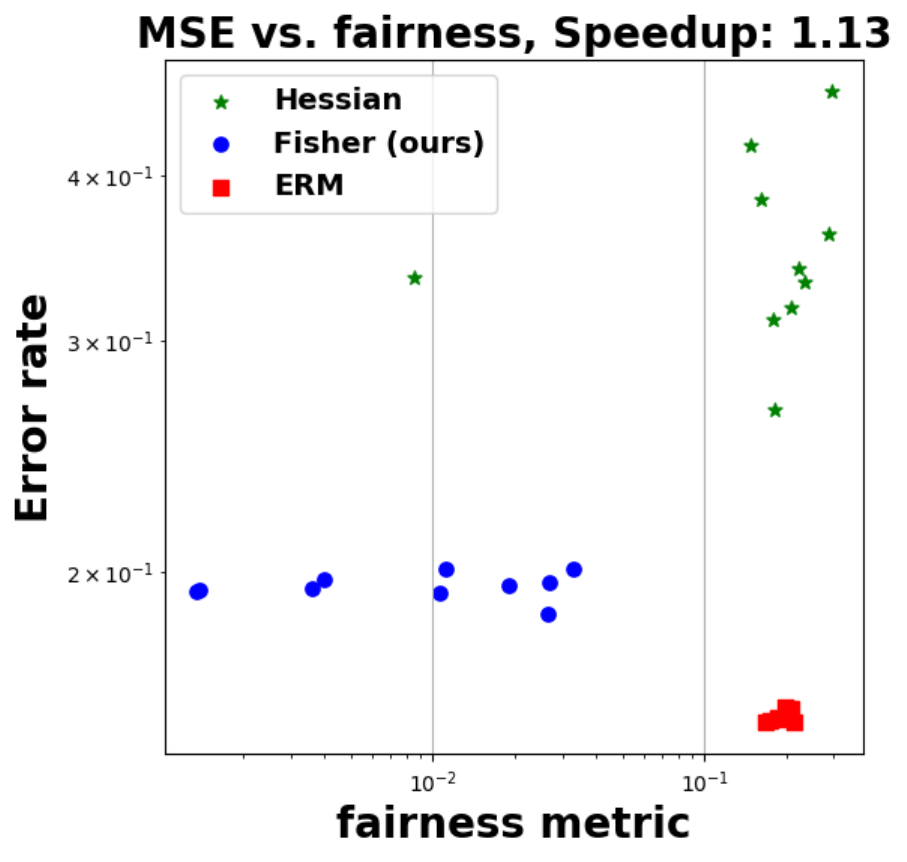}
    \subcaption{Adult: $\sigma = \frac{1}{200}$}
    \label{fig:sub_sigma_200}
  \end{subfigure}
  \hfill
  \begin{subfigure}[t]{0.32\textwidth}
    \centering
    \includegraphics[width=\textwidth]{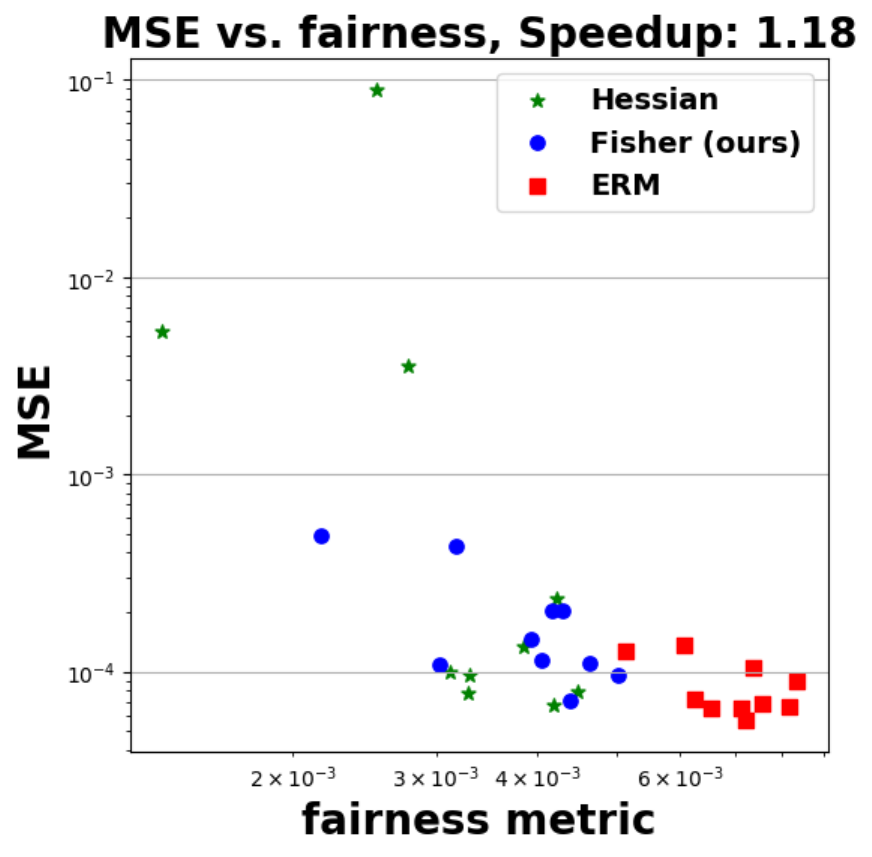}
    \subcaption{Insurance: $\sigma = \frac{1}{2000}$}
    \label{fig:sub_sigma_100}
  \end{subfigure}
  \hfill
  \begin{subfigure}[t]{0.32\textwidth}
    \centering
    \includegraphics[width=\textwidth]{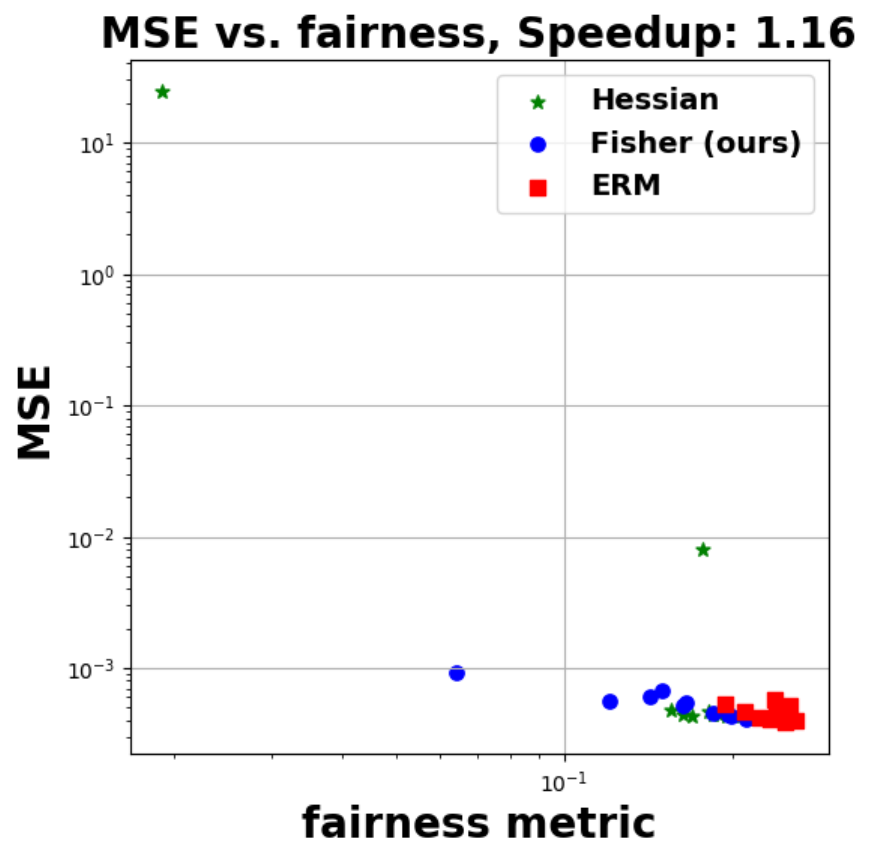}
    \subcaption{Crime: $\sigma = \frac{1}{500}$}
    \label{fig:sub_sigma_500}
  \end{subfigure}
  \caption{Additional experiment results with a reduced value of $\sigma$ of the \texttt{LiSSA} algorithm. All figures consistently shows the computational advantage of our method and further demonstrate potential instabilities of using the Hessian-based techniques, leading to cases with severely degraded performance. }
  \label{fig:combined_experiment_additional_lissa}
\end{figure}

%%%%%%%%%%%%%%%%%%%%%%%%%%%%%%%%%%%%%%%%%%%%%%%%%%%%%%%%%%%%%%%%%%%%%%
\subsection{Additional Experiments for \secref{s:Exp_CV}}
\label{app:ExpDetails_CV}

To demonstrate our claim about the stability of the Hessian-based computations, we have repeated the same experiment from \secref{s:Exp_CV} and have decreased the parameter $\sigma$ of the \texttt{LiSSA} algorithm from $\frac{1}{500}$ to $\frac{1}{750}$. Since this parameter shrinks the inner matrix in the computations, it should help the algorithm to converge in cases where the underlying matrix is poorly conditioned. However, as is demonstrated in \figref{fig:Exp_CV_additional}, the Hessian-based CV estimator still fails to converge to a reasonable estimate of the test loss. However, we note that under this hyperparameter setting, the Fisher-based algorithm converges to a better estimate of the test loss. 
\begin{figure}[t]
    \centering
    \includegraphics[width=0.475\columnwidth]{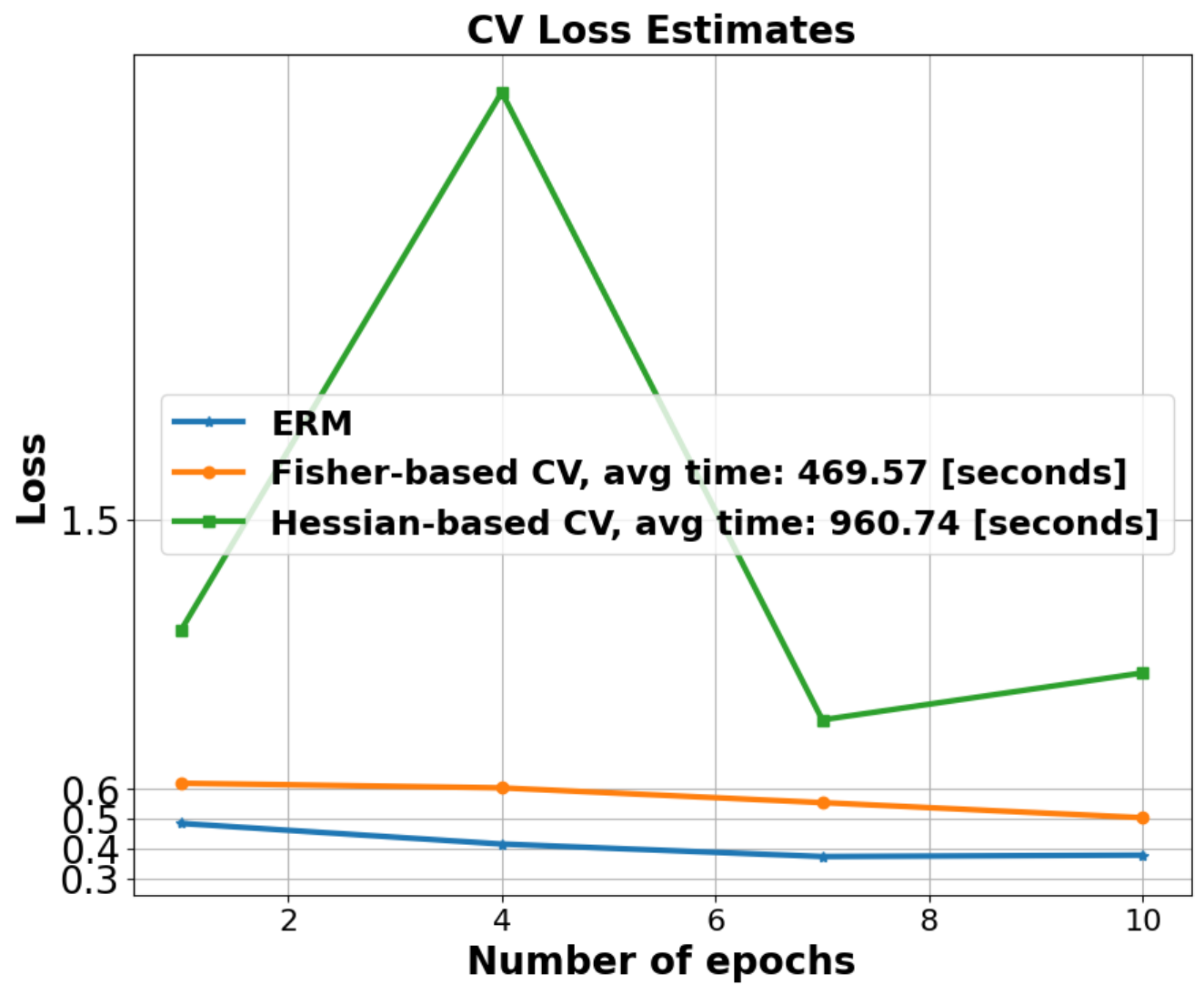}
    \caption{Additional experiment for \secref{s:Exp_CV}, where the parameter $\sigma$ of the \texttt{LiSSA} was set to $\sigma = \frac{1}{750}$.}
    \label{fig:Exp_CV_additional}
\end{figure}

\end{document}